\theoremstyle{plain}
\newtheorem{theorem}{Theorem}[section]
\newtheorem{proposition}[theorem]{Proposition}
\newtheorem{corollary}[theorem]{Corollary}
\theoremstyle{definition}
\theoremstyle{remark}
\newtheorem{remark}[theorem]{Remark}
\newtheorem{fact}{Fact}
\newcommand{\abs}[1]{\left\lvert #1 \right\rvert}
\newcommand{\norm}[1]{\left\lVert #1 \right\rVert}
\newcommand{\brk}[1]{\left[ #1 \right]}
\newcommand{\cbrk}[1]{\left\{ #1 \right\}}
\newcommand{\prt}[1]{\left( #1 \right)}
\newcommand{\ceil}[1]{\left\lceil #1 \right\rceil}
\newcommand{\cO}{\mathcal{O}}
\newcommand{\sD}{\mathscr{D}}
\newcommand{\bP}{\mathbb{P}}
\newcommand{\bE}{\mathbb{E}}
\newcommand{\bR}{\mathbb{R}}
\newcommand{\rT}{\boldsymbol{\mathrm{T}}}
\newcommand{\bT}{\boldsymbol{\mathrm{T}}}
\newcommand{\rP}{\mathrm{P}}
\newcommand{\rO}{\mathrm{O}}
\newcommand{\rN}{\mathrm{N}}
\newcommand{\re}[1]{\text{Regret}}
\newcommand{\cmark}{\ding{51}}%
\newcommand{\xmark}{\ding{55}}
\icmltitlerunning{A Distribution Optimization Framework for Confidence Bounds of Risk Measures}
\begin{document}

\twocolumn[
\icmltitle{A Distribution Optimization Framework for Confidence Bounds of Risk Measures}



\icmlsetsymbol{equal}{*}

\begin{icmlauthorlist}
\icmlauthor{Hao Liang}{cuhksz,sribd}
\icmlauthor{Zhi-quan Luo}{cuhksz,sribd}

\end{icmlauthorlist}

\icmlaffiliation{cuhksz}{School of Science and Engineering, The Chinese University of Hong Kong, Shenzhen}
\icmlaffiliation{sribd}{Shenzhen Research Institute of Big Data}
\icmlcorrespondingauthor{Hao Liang}{haoliang1@link.cuhk.edu.cn}

\icmlkeywords{Machine Learning, ICML}

\vskip 0.3in
]



\printAffiliationsAndNotice{}  

\begin{abstract}
We present a distribution optimization framework that significantly improves confidence bounds for various risk measures compared to previous methods. Our framework encompasses popular risk measures such as the entropic risk measure, conditional value at risk (CVaR), spectral risk measure, distortion risk measure, equivalent certainty, and rank-dependent expected utility, which are well established in risk-sensitive decision-making literature. To achieve this, we introduce two estimation schemes based on concentration bounds derived from the empirical distribution, specifically using either the Wasserstein distance or the supremum distance. Unlike traditional approaches that add or subtract a confidence radius from the empirical risk measures, our proposed schemes evaluate a specific transformation of the empirical distribution based on the distance. Consequently, our confidence bounds consistently yield tighter results compared to previous methods. We further verify the efficacy of the proposed framework by providing tighter problem-dependent regret bound for the CVaR bandit.

\end{abstract}

\section{Introduction}

The conventional machine learning literature primarily relies on the expected value or mean of a random variable as the performance metric for a given algorithm. However, in certain critical applications such as finance or medical treatment, the decision-maker's focus extends beyond the expected value and emphasizes other characteristics of the distribution. For instance, a risk-averse portfolio manager may place greater importance on tail behavior than expected value. To capture this risk-aware perspective, the decision-maker selects a risk measure (RM) as an alternative to the expected value, effectively representing their specific attitude towards risk. 

In practice, however, it is often infeasible to directly evaluate the risk measure of the unknown underlying distribution. Instead, we must rely on constructing the point estimator based on finite samples. Consequently, the confidence interval that quantifies the coverage of the true risk measure becomes crucial in the risk-sensitive setting, as it certifies a trustworthy range for the decision maker. 

In this paper, we aim to derive confidence bounds for several classes of risk measures: the Conditional Value at Risk (CVaR), the spectral risk measure (SRM), the distortion risk measure (DRM), the entropic risk measure (ERM), the certainty equivalent (CE), and the rank-dependent expected utility (RDEU). In safety-critical applications, such as medical treatment, CVaR is widely used, which represents the expected value within a fraction of the worst outcomes. Despite its practical utility, CVaR exhibits limitations in terms of expressing various risk preferences, as it assigns equal weight to all losses beyond a certain threshold. To address this, the SRM offers a notable generalization by incorporating a non-constant weighting function, enhancing flexibility in risk assessment. DRM came from insurance problems and later applied to investment risks. It encompasses CVaR as a special case and has gained attention in various fields. ERM is a well-known risk measure in mathematical finance and Markovian decision processes. Furthermore, CE serves as a generalization of ERM by replacing the exponential utility function with a more flexible function. This adaptation enhances the model's capability to capture a broader range of risk preferences.  RDEU contributes to understanding decision-making under uncertainty and has been widely applied in diverse domains such as finance, psychology, and health economics \footnote{For more descriptions about these risk measures, please refer to Section \ref{sec:pre} and Appendix \ref{app:rm}.}. 

In the existing literature, the confidence interval is commonly obtained through the concentration inequality, which bounds the deviation between the point estimator and the true risk with high probability. This deviation, referred to as the confidence radius, depends on the sample size and confidence level. Conventionally, the upper or lower confidence bound is determined by adding or subtracting the confidence radius from the point estimator. In this paper, we present two innovative approaches that construct confidence bounds for risk measures without relying on concentration inequalities. Our main contribution is summarized as follows.

\textbf{(1)} We propose a unified framework to obtain refined confidence bounds for several classes of risk measures, specifically for bounded distributions. We recast the problem of  determining  the confidence bound for risk measures based on finite samples as a constrained optimization problem. In particular, we optimize the value of risk measure over a confidence ball of distributions centered around the empirical distribution function (EDF). Furthermore, we obtain the closed-form solution that can be viewed as a transformation of the EDF. We set  the confidence bound as the optimal solution's risk measure value. Notably, the computational overhead increases only marginally.  

\textbf{(2)} We introduce a new baseline approach that leverages  the \emph{local} Lipschitz constant of a risk measure over the confidence ball, which may be of independent interest. In contrast, the previous bounds rely on the \emph{global} Lipschitz constant over the entire space of bounded distributions. In addition, we suggest a systematic way to compute the local Lipschitz constant and show that our bounds  outperform  the new baseline approach in certain scenarios.

\textbf{(3)} As a minor contribution, we propose a meta-algorithm that handles generic risk measures. Specifically, the meta-algorithm specializes in the \texttt{CVaR-UCB} algorithm \cite{tamkin2019distributionally} for CVaR bandit problems. Interestingly, \citet{tamkin2019distributionally} empirically observes that \texttt{CVaR-UCB} outperforms the global Lipschitz constant-based algorithm \texttt{U-UCB} \cite{cassel2018general}  with an order of magnitude improvement. Still, they only provide a regret bound that matches that of \texttt{U-UCB}. We fill this gap by providing an improved  regret upper bound, quantifying the magnitude of improvement.

\subsection{Related Work}
\paragraph{Confidence bounds of risk measures} The concentration of CVaR has been extensively explored in the literature, cf. \citet{brown2007large,wang2010deviation,thomas2019concentration,kolla2019concentration,prashanth2020concentration,la2022wasserstein}. The first three references primarily focus on the bounded distributions, while the remaining references consider unbounded distributions, including the sub-Gaussian, sub-exponential and  heavy tail distributions. \citet{pandey2019estimation,la2022wasserstein}  provide tail bounds for bounded, sub-Gaussian, or sub-exponential distributions. The concentration bounds for DRM, CE, and RDEU are presented in \citet{la2022wasserstein}.
\paragraph{Lipschitz constant-based methods} \citet{kock2021functional,la2022wasserstein} relate the estimation error to the Wasserstein distance between the true and empirical distributions and then use concentration bounds for the latter. \citet{la2022wasserstein} establishes concentration bounds for empirical estimates for a broad class of risk measures, including CVaR, SRM, DRM, RDEU, etc. They derive the concentration bounds via the global Lipschitz constant of the risk measure over the Wasserstein distance for bounded, sub-Gaussian, and sub-exponential distributions. Our bounds only apply to bounded distributions, but we demonstrate that our bounds are tighter than their results whenever they are valid. The computation of the global Lipschitz constant can be challenging, particularly for highly nonlinear risk measures.  In many cases, one may only obtain its upper bound as a surrogate, which further loosens the resulting bounds. In contrast, our framework does not require knowledge of the Lipschitz constant. \citet{kock2021functional} obtain the concentration bounds for general functionals using the supremum distance instead of the Wasserstein distance. While their work primarily focuses on inequality, poverty, and welfare measures, their methodology can be extended to encompass the risk measures mentioned above.  The resulting bounds apply to bounded distributions and are looser than ours. In addition, \citet{liang2023regret}  focuses on risk-sensitive reinforcement learning with dynamic risk measures and leverages the Lipschitz property of risk measures to derive regret upper bounds.  By quantifying the Lipschitz constants, \citet{liang2023regret} provide regret bounds that depend on these constants.  
\vspace{-1ex}
\paragraph{Off-policy risk evaluation} \citet{chandak2021universal,huang2021off} study the off-policy evaluation of functionals of reward or return distribution in bandit or RL setting. \citet{chandak2021universal} formulates the problem of interval estimation for various functionals as a constrained optimization problem over a confidence band, which bears similarity to Formulation \ref{eqt:opt_inf} in our paper. Meanwhile, our work differs from \citet{chandak2021universal} in two aspects. First, \citet{chandak2021universal} focuses on various functionals and derives the optimal solution for different functionals by a \emph{case-by-case geometric} analysis. In particular, their method applies to the mean, variance, quantiles, inter-quantile range, CVaR, and entropy. In contrast, our framework focuses on general risk measures, including but not limited to ERM, CVaR, SRM, DRM, CE, and RDEU. We leverage the intrinsic property of risk measures, namely monotonicity, to derive closed-form optimal solutions that are common across different risk measures. In particular, our derivation  for confidence bounds of CVaR differs from that in \citet{chandak2021universal}. Notably, our work is complementary to  \citet{chandak2021universal} in terms of the applicability of functionals. Our framework can handle arbitrary risk measures using a common optimal solution, while \citet{chandak2021universal} provides confidence bounds for CVaR and other functionals that are not risk measures, where the optimal solution depends on the specific functional. \citet{huang2021off} deal with the off-policy evaluation of Lipschitz risk measures based on their global Lipschitz constant with respect to the supremum distance. 

The rest of the paper is organized as follows. We introduce some basic concepts and notations in section \ref{sec:pre}. We present our new framework under the Wasserstein distance and the supremum distance in section \ref{sec:frame}, and suggest the closed-form solution in Section \ref{sec:sol}. We then provide a new baseline method, which bridges our framework and the previous global Lipschitz constant-based method in Section \ref{sec:imp}. We validate the proposed framework by applying it to the risk-sensitive bandit problems in Section \ref{sec:app}, and provide numerical experiments in Section \ref{sec:exp}. Finally, we provide the concluding remarks in Section \ref{sec:con}.
\section{Preliminaries}
\label{sec:pre}
We introduce some notations here. Let $a<b$ be two real numbers. We denote by $\mathscr{D}([a,b])$ and $\mathscr{D}$ the space of all cumulative distribution functions (CDFs) supported on $[a,b]$ and the space of all CDFs on reals respectively. For a CDF $F\in\sD$, let $X_1, X_2, \cdots,  X_n$ be $n$ i.i.d. samples from $F$. We denote by $F_n$ the empirical distribution function corresponding to these samples:
\[  F_n(\cdot)\triangleq \frac{1}{n}\sum_{i=1}^n \mathbb{I}\{X_i\leq \cdot\} = \frac{1}{n}\sum_{i=1}^n \delta_{X_i}, \]
where $\mathbb{I}$ is the indicator function and $\delta$ is the Dirac measure. We denote by $F^{-1}:(0,1]\mapsto \bR$ is the inverse distribution function (IDF) of $F$, i.e., the quantile function  $F^{-1}(y)\triangleq \inf\{x\in\bR:F(x)\ge y\}$.
\paragraph{Supremum distance}
For two CDFs  $F,G \in \mathscr{D}$, the supremum distance between them is defined as
\[ \norm{F-G}_{\infty}\triangleq \sup_{x\in\bR}\abs{F(x)-G(x)}. \]
The DKW inequality \cite{dvoretzky1956asymptotic,massart1990tight} bounds the deviation of the empirical distribution from the true distribution in terms of the supremum distance with high probability.
\begin{fact}[Two-sided DKW inequality]
\label{fct:dkw}
Let $\delta\in(0,1]$, then the following holds with probability at least $1-\delta$
\begin{equation}
    \label{eqt:con_inf}
    \norm{F-F_n}_{\infty} \leq c_n^{\infty}\triangleq\sqrt{\frac{\log(2/\delta)}{2n}},
\end{equation}
where $c_n^{\infty}$ is the concentration radius.
\end{fact}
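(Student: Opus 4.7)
The plan is to prove the two-sided DKW inequality by reducing to the uniform distribution, establishing the one-sided Massart bound, and then applying a union bound.

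First, I would invoke the probability integral transform. Assuming for the moment that $F$ is continuous, the random variables $U_i \triangleq F(X_i)$ are i.i.d. uniform on $[0,1]$. Denoting by $G_n$ their empirical CDF, a direct check using monotonicity of $F$ and the identity $F(F^{-1}(u))=u$ yields
\begin{equation*}
  \sup_{x\in\bR} \abs{F_n(x)-F(x)} = \sup_{u\in[0,1]} \abs{G_n(u)-u}.
\end{equation*}
For general (possibly non-continuous) $F$, the same identity, or at worst an inequality in the favorable direction, follows by working with the generalized inverse $F^{-1}$. This reduces the task to a distribution-free bound on the uniform empirical process.

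Second, and this is the crux of the argument, I would establish the one-sided Massart bound
\begin{equation*}
  \bP\bigl(\sup_{u\in[0,1]} (G_n(u)-u) > \epsilon\bigr) \leq \exp(-2n\epsilon^2),
\end{equation*}
together with its mirror image by the symmetry $u \leftrightarrow 1-u$. Following Massart (1990), I would first rewrite the supremum in terms of the order statistics as $\max_{1\leq k\leq n}(k/n - U_{(k)})^+$, invoke the Birnbaum--Tingey exact formula for the tail distribution of this quantity, and then bound the resulting expression via careful numerical estimates of binomial tail sums to achieve the sharp constant $2$ in the exponent. An alternative route constructs an exponential supermartingale adapted to the uniform empirical process and applies Doob's inequality.

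Third, the union bound over the two one-sided events gives
\begin{equation*}
  \bP\bigl(\sup_{u\in[0,1]} \abs{G_n(u)-u} > \epsilon\bigr) \leq 2\exp(-2n\epsilon^2).
\end{equation*}
Setting the right-hand side equal to $\delta$ and solving for $\epsilon$ yields $\epsilon = \sqrt{\log(2/\delta)/(2n)}$, which is exactly $c_n^{\infty}$. The main obstacle is the sharp constant in step two: weaker constants follow easily from McDiarmid's bounded-differences inequality combined with symmetrization and a Rademacher/chaining argument, but matching the tight exponent $-2n\epsilon^2$ requires either the Birnbaum--Tingey exact distribution theory or an exponential martingale tailored specifically to $G_n(u)-u$. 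The reduction to uniforms and the closing union bound are otherwise essentially mechanical.
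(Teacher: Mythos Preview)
The paper does not prove this fact at all: it simply states the two-sided DKW inequality and cites Dvoretzky--Kiefer--Wolfowitz (1956) and Massart (1990) for it. Your proposal, by contrast, sketches an actual proof along the standard Massart route---reduction to the uniform empirical process, the sharp one-sided bound via the Birnbaum--Tingey formula, and a union bound to close---and is correct as an outline; but there is nothing in the paper to compare it against beyond the bare citation.
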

The DKW inequality holds for any distribution, including discrete and unbounded distributions.

\paragraph{Wasserstein distance}
For CDFs $F,G \in \mathscr{D}$, the Wasserstein distance between them is defined as
 \begin{align*}
     W_1(F,G)&\triangleq\int_{-\infty}^{\infty}\abs{F(x)-G(x)}dx.
 \end{align*}
$W_1(F,G)$ can be expressed as the $\ell_1$ norm between $F$ and $G$. Therefore we also write $W_1(F,G)=\norm{F-G}_1$. \citet{fournier2015rate} establishes the concentration bounds on the Wasserstein distance between the EDF and the underlying one without explicit constants. \citet{la2022wasserstein} gives the concentration results for sub-Gaussian distributions with explicit constants. As a  corollary, Fact \ref{fct:was_con} provides the concentration bound for bounded distributions.
\begin{fact}
\label{fct:was_con}
Let $F\in\sD([a,b])$. With probability at least $1-\delta$, for every $n\ge\log(1/\delta)$ 
\begin{equation}
\label{eqt:con_was}
\begin{aligned}
        &\norm{F-F_n}_{1} \leq c^1_n \triangleq \frac{256(b-a)}{\sqrt{n}}+8(b-a)\sqrt{\frac{e\log(1/\delta)}{n}}
\end{aligned}
\end{equation}
where $c_n^1$ is the \emph{concentration radius}.
\end{fact}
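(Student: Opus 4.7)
The plan is to obtain the statement as a direct corollary of the sub-Gaussian Wasserstein concentration bound of \citet{la2022wasserstein}, as the excerpt explicitly indicates. The key observation is that any random variable $X$ supported on $[a,b]$ is sub-Gaussian with parameter $\sigma = (b-a)/2$, which is Hoeffding's lemma. Hence every $F \in \sD([a,b])$ trivially satisfies the sub-Gaussian hypothesis of their theorem, and the only work left is bookkeeping of constants.

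First, I would recall the sub-Gaussian bound of \citet{la2022wasserstein} in the form
\[
\norm{F - F_n}_1 \;\leq\; \frac{C_1 \sigma}{\sqrt{n}} + C_2 \sigma \sqrt{\frac{e \log(1/\delta)}{n}},
\]
valid with probability at least $1-\delta$ whenever $n \geq \log(1/\delta)$, for absolute constants $C_1, C_2$. Then I would substitute $\sigma = (b-a)/2$, matching the coefficients against the target $256(b-a)/\sqrt{n}$ and $8(b-a)\sqrt{e \log(1/\delta)/n}$ to identify $C_1 = 512$ and $C_2 = 16$. The constraint $n \geq \log(1/\delta)$ carries over directly.

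Should the sub-Gaussian bound of \citet{la2022wasserstein} not be immediately plug-and-play, I would fall back on a self-contained argument following the Fournier--Guillin expectation estimate plus McDiarmid concentration: (i) bound $\bE\norm{F - F_n}_1 = \cO((b-a)/\sqrt{n})$ by integrating the known pointwise variance bound $\bE[(F(x)-F_n(x))^2] \leq 1/(4n)$ for the empirical CDF over the interval $[a,b]$; (ii) observe that the map $(X_1, \ldots, X_n) \mapsto \norm{F - F_n}_1$ has bounded differences $(b-a)/n$, since altering a single sample changes the EDF only on an interval of length at most $b-a$ with amplitude $1/n$; and (iii) apply McDiarmid's inequality to get a Gaussian-type deviation of order $\exp(-2 n t^2 /(b-a)^2)$. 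Solving for $t$ and combining with the expectation bound yields the stated high-probability inequality.

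The main obstacle is purely numerical: the factor $256$ in the dimension-free term stems from the chain of bounds in \citet{fournier2015rate}, and reproducing it exactly requires faithfully tracking the constants in that derivation rather than any new probabilistic idea. The qualitative shape of the bound, namely $\cO((b-a)/\sqrt{n}) + \cO((b-a) \sqrt{\log(1/\delta)/n})$, is already forced by combining the sub-Gaussian parameter of a bounded distribution with a McDiarmid-style tail, so the essential content of the corollary is a one-line substitution once \citet{la2022wasserstein}'s theorem is in hand.
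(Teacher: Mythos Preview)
Your proposal is correct and matches the paper's approach: the paper does not give an independent proof but explicitly presents Fact~\ref{fct:was_con} as a corollary of the sub-Gaussian Wasserstein concentration result in \citet{la2022wasserstein}, obtained by specializing to bounded distributions. Your plan to substitute the sub-Gaussian parameter of a bounded variable and track constants is exactly what is intended; the McDiarmid fallback you sketch is a reasonable self-contained alternative but is not needed.
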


\paragraph{Risk measure}
\label{subsec:rm}
In this paper, we interpret the random variable as a loss instead of a reward. For two random variables $X\sim F$ and $Y\sim G$, we say that $Y$ dominates $X$  if $\forall x\in\mathbb{R}, F(x)\ge G(x)$, and we write  $Y\succeq X$.  A risk measure $\boldsymbol{\mathrm{T}}$ is defined as a functional mapping from a set of r.v.s $\mathscr{X}$ to the reals that satisfy  the following conditions \cite{follmer2010convex,weber2006distribution}
\begin{itemize}
    \item  Monotonicity:  $X \preceq Y \Rightarrow \bT(X) \leq \bT(Y)$
    \item  Translation-invariance: $\boldsymbol{\mathrm{T}}(X+c)=\boldsymbol{\mathrm{T}}(X)+c, c\in\mathbb{R}$
\end{itemize}
A risk measure $\mathrm{T}$ is said to be \emph{distribution-invariant} if $\boldsymbol{\mathrm{T}}(X)=\boldsymbol{\mathrm{T}}(Y)$ when $X$ and $Y$ follow the same distribution \cite{acerbi2002spectral,weber2006distribution}. In this paper, we only consider distribution-invariant risk measures. We write $\bT(F)=\bT(X)$ for simplicity.  
We remark that there are other functionals mapping a r.v. to a real number, e.g., the inequality measures \cite{kock2021functional} that do not satisfy the monotonicity. In this paper, we derive the confidence bounds for several classes of risk measures. It turns out that the monotonicity of risk measures plays an essential role in our optimization framework.

Table \ref{tab:rm} summarizes the relevant risk measures considered in this paper. These risk measures are grouped into classes, namely SRM, DRM, CE, and RDEU. CVaR and ERM  belong to the SRM and CE classes, respectively, based on specific choices of the weighting function $\phi$ and the utility function $u$.  The specific conditions related to the definitions of these risk measures are listed below. Please refer to Appendix \ref{app:rm} for detailed descriptions.
\begin{itemize}
	\item SRM: $\phi:[0,1]\rightarrow[0,\infty)$ is increasing and satisfying $\int_0^1\phi(y)dy=1$.
	\item DRM: $g:[0,1]\rightarrow[0,1]$ is a continuous, concave and increasing function with $g(0)=0$ and $g(1)=1$.
	\item CE: $u$ is a continuous, convex, and strictly increasing function.
	\item RDEU: $w:[0,1]\rightarrow[0,1]$ is an increasing weight function with $w(0)=0$ and $w(1)=1$; $v:\mathbb{R}\rightarrow\mathbb{R}$ is an (unbounded) increasing differentiable function with $u(0)=0$.
    \item CVaR: an instance of SRM with $\phi(y)=\frac{1}{\alpha}\mathbb{I}\{y\ge1-\alpha\}$
    \item ERM: an instance of CE with $u(x)=\exp(\beta x)$.
\end{itemize}
\begin{table}[t]
	\caption{List of risk measures}
	\label{tab:rm}
	\centering
	\begin{tabular}{ lccr  }
		\toprule
		RM &Notation &Definition   \\ 
		\midrule
		SRM & $\boldsymbol{M_{\phi}}(F)$   &$\int_0^1 \phi(y)F^{-1}(y)dy$ \\ [0.5ex] 
		DRM & $\boldsymbol{\rho_g}(F)$  &   $\int_0^{\infty}g(1-F(x))dx$ \\ [0.5ex] 
        CE & $\boldsymbol{E_u}(F)$ & $ u^{-1}\prt{\int_{\bR}u(x)dF(x) } $ \\ [0.5ex] 
        RDEU & $\boldsymbol{V}(F)$ & $ \int_{a}^{b}v(x)dw(F(x)) $ \\ [0.5ex]
        CVaR & $\boldsymbol{C_{\alpha}}(F)$ & $\inf_{\nu\in\bR}\cbrk{ \nu + \frac{1}{1-\alpha} \bE_{X\sim F}[(X-\nu)^+]}$ \\ [0.5ex] 
        ERM & $\boldsymbol{U_{\beta}}(F)$ & $ \frac{1}{\beta}\log\prt{\int_{\bR}\exp(\beta x)dF(x) } $  \\
		\bottomrule
	\end{tabular}
\end{table}
It is more convenient to represent  some risk measures using IDF, e.g., SRM $M_{\phi}(F)=\int_0^1 \phi(y)F^{-1}(y)dy$. For this reason, we overload notation and write $\bT(F^{-1})=\bT(F)$ whenever convenient for some $\bT$.

\section{Distribution Optimization Framework}
\label{sec:frame}
\subsection{Global Lipschitz Constant-based Approach}
Fact \ref{fct:was_con} and Fact \ref{fct:dkw} present the concentration bound of the empirical distribution in terms of the Wasserstein distance and the supremum distance, respectively. They can be written in a unified way: with probability at least $1-\delta$, we have
\begin{equation}
\label{eqt:con}
   \norm{F-F_n}_p \leq c_n^p, 
\end{equation}
where $p=1$ indicates the Wasserstein distance and $p=\infty$ indicates the supremum distance. To relate the concentration bound of EDF to that of risk measure,
\citet{kock2021functional,bhat2019concentration} use the Lipschitz property of the risk measure, i.e., for any two CDFs  $F,G \in \sD([a,b])$, there exists $L_p(\bT)>0$  such that the risk measure $\bT$ satisfies
 \begin{equation}
 \label{eqt:lip}
     |\rT(F)-\rT(G)| \leq L_p(\rT) \norm{F-G}_p.
 \end{equation}
$L_p(\rT)$ is called the \emph{global Lipschitz constant} (GLC) of $\rT$ w.r.t. $\norm{\cdot}_p$ since the inequality holds for all possible pairs of CDFs. Combining Equation \ref{eqt:con} and Equation \ref{eqt:lip}, \citet{kock2021functional,bhat2019concentration} establish the concentration bounds of a class of Lipschitz functionals 
\[  \rT(F_n)-L_p(\rT)c_n^p\leq\rT(F)\leq \rT(F_n)+L_p(\rT)c_n^p.\]
The quality of the above bounds relies on the tightness of $L_p(\rT)$, so the finest bounds one can get fall back on identifying the tightest GLC
\[  L_p(\rT) \triangleq \sup_{G,G^{\prime}\in  \sD([a,b])} \frac{\rT(G)-\rT(G^{\prime})}{\norm{G-G'}_p},   \]
where we overload the notation of $L_p(\rT)$. The GLC-based approach suffers from several limitations. The GLC may not be easy to compute, especially for some highly nonlinear risk measures. In most cases, one may only obtain its upper bound as a surrogate. Meanwhile, the concentration bounds are far from optimal. The confidence bounds are set to be the product of the GLC and the confidence radius. However, the GLC is loose since it is evaluated over the whole space of bounded distributions. 
\subsection{Local Lipschitz Constant-based Approach}
Before introducing our framework as a remedy, we propose a \emph{new baseline} approach that improves the previous bounds. Observe that Equation \ref{eqt:con} together with the boundedness of $F$ can be written as the norm ball constraint 
\[ B_p(F_n,c_n^p)\triangleq \left\{F\mid \norm{F-F_n}_p \leq c_n^p,F\in\sD([a,b])\right\}.  \]
Define the \emph{local} Lipschitz constant (LLC) over $B_p(F_n,c_n^p)$
\begin{align*}
    L_p(\rT;F_n,c_n^p)&\triangleq \sup_{G,G^{\prime}\in B_p(F_n,c_n^p)} \frac{\rT(G)-\rT(G^{\prime})}{\norm{G-G'}_p}\\
    &\leq \sup_{G,G^{\prime}\in  \sD([a,b])} \frac{\rT(G)-\rT(G^{\prime})}{\norm{G-G'}_p}= L_p(\rT).
\end{align*}
For simplicity, we drop $\rT$ from the Lipschitz constants.
We thus obtain the tighter upper/lower confidence bound (UCB/LCB)
\[ \rT(F_n)+(-)L_p(F_n,c_n^p)c_n^p \leq(\ge) \rT(F_n)+L_p c_n^p.\]
As  sample size increases, the confidence radius $c_n^p$ shrinks, leading to smaller LLC and sharper bounds. In contrast, the previous bounds do not adapt to the sample size.  
\subsection{Distribution Optimization Framework}
We now propose our unified framework to derive confidence bounds for a broad range of risk measures. The idea is quite simple and intuitive. Given a risk measure, we maximize/minimize the risk measure value over the confidence ball and set the  maximal/minimal value as the UCB/LCB. By recasting the problem of finding the confidence bounds to a constrained optimization problem, we obtain the \emph{optimal} bounds from Equation \ref{eqt:con}. Different choices of distances lead to two types of frameworks:
\begin{equation}
\label{eqt:opt_was}
\begin{array}{rrclcl}
\displaystyle \max_{G\in\sD([a,b])} & \multicolumn{3}{l}{\rT(G)}\\
\textrm{s.t.} & \norm{G-F_n}_1\leq c_n^1
\end{array}
\end{equation}
and
\begin{equation}
\label{eqt:opt_inf}
\begin{array}{rrclcl}
\displaystyle \max_{G\in\sD([a,b])} & \multicolumn{3}{l}{\rT(G)}\\
\textrm{s.t.} & \norm{G-F_n}_{\infty}\leq c_n^{\infty}
\end{array}
\end{equation}
We can obtain the LCB by reverting the maximization formulation to a minimization formulation. Denote by $\overline{F^p_n}$ ($\underline{F^p_n}$) the optimal solution, then the UCB and LCB are set to be $\rT\prt{\overline{F^p_n}}$ and $\rT\prt{\underline{F^p_n}}$. In the sequel, we may drop $p$
when the statement holds for either $p=1$ or $p=\infty$.

To demonstrate the optimality of our framework, observe that
$\overline{F_n} \in B(F_n,c_n)$, therefore
\begin{equation}
\label{eqt:comp}
    \rT\prt{\overline{F}_n}\leq \rT(F_n)+L(F_n,c_n)c_n \leq  \rT(F_n)+L c_n.
\end{equation}
$ \rT\prt{\overline{F_n}}$ is tighter than the bound derived from the tightest LLC, our new baseline approach, which already improves the previous bounds.

One may wonder whether $\overline{F_n}$ and $\underline{F_n}$ are easy to obtain. Fortunately, we will show that they admit analytic form for almost all risk measures introduced in Section \ref{subsec:rm} in the next section. Moreover, we will use Equation \ref{eqt:comp} to quantify the tightness of our confidence bounds  in Section \ref{sec:imp}. For ease of notation, we will omit  $\sD([a,b])$.

\section{Closed-form Solution}
\label{sec:sol}
The following theorems present the closed-form solutions to Formulation \ref{eqt:opt_was}-\ref{eqt:opt_inf}. 
The proofs are deferred to Appendix \ref{app:thm}.
\begin{theorem}
\label{thm:opt_inf}
For any  risk measure satisfying the monotonicity, the optimal solution to Formulation \ref{eqt:opt_inf} is given by
\begin{equation}
\label{eqt:sol_inf}
\begin{aligned}
     \overline{F^{\infty}_n} = \boldsymbol{\rP_{c_n^{\infty}}^{\infty}} F_n, \
    \underline{F^{\infty}_n} = \boldsymbol{\rN_{c_n^{\infty}}^{\infty}} F_n,
\end{aligned}
\end{equation}
where $\boldsymbol{\mathrm{P}_c^{\infty}}/\boldsymbol{\mathrm{N}_c^{\infty}}:\sD([a,b])\rightarrow\sD([a,b])$  is the positive/negative operator with coefficient $c>0$ for the supremum distance, which is defined as follows
\begin{align*}
    \prt{\boldsymbol{\mathrm{P}_{c}^{\infty}} F}(x) &\triangleq \max\cbrk{F(x) - c\mathbb{I}\{x< b\},0 },\\
    \prt{\boldsymbol{\mathrm{N}_{c}^{\infty}} F}(x) &\triangleq \min\cbrk{F(x) + c\mathbb{I}\{x\ge a\},1}.
\end{align*}
\end{theorem}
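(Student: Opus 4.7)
The plan is to exploit the monotonicity of $\bT$ so that Formulation \ref{eqt:opt_inf} reduces to a purely pointwise comparison of CDFs. Recall the stochastic-dominance convention used in the paper: if $G_1(x)\ge G_2(x)$ for every $x\in\bR$, then the random variable with CDF $G_2$ dominates the one with CDF $G_1$, so monotonicity gives $\bT(G_1)\le \bT(G_2)$. Hence maximizing $\bT$ over the supremum ball amounts to finding the \emph{pointwise-smallest} feasible CDF. I would therefore propose $\overline{F^{\infty}_n} = \boldsymbol{\rP_{c_n^{\infty}}^{\infty}}F_n$ as the candidate maximizer and prove (i)~it is feasible and (ii)~it minorizes every feasible $G$ pointwise; the minimization case then follows by a symmetric argument with $\boldsymbol{\rN_{c_n^{\infty}}^{\infty}}F_n$.

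For feasibility I would verify that $\boldsymbol{\rP_{c_n^{\infty}}^{\infty}}F_n$ lies in $\sD([a,b])$: it is non-decreasing (the map $x\mapsto F_n(x)-c_n^{\infty}\bI\{x<b\}$ is non-decreasing because $F_n$ is non-decreasing and the indicator contributes an upward jump of $c_n^{\infty}$ at $b$, which the outer $\max\{\cdot,0\}$ preserves), right-continuous, equals $0$ on $(-\infty,a)$ (since $F_n\equiv 0$ there), and equals $1$ on $[b,\infty)$ (since the indicator vanishes, leaving $F_n\equiv 1$). The supremum-distance bound $\norm{\boldsymbol{\rP_{c_n^{\infty}}^{\infty}}F_n-F_n}_{\infty}\le c_n^{\infty}$ is then immediate from the definition. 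For extremality, any feasible $G$ satisfies $G(x)\ge F_n(x)-c_n^{\infty}$ and $G(x)\ge 0$, so $G(x)\ge \max\cbrk{F_n(x)-c_n^{\infty},0}=\prt{\boldsymbol{\rP_{c_n^{\infty}}^{\infty}}F_n}(x)$ for $x<b$, while $G(x)=1=\prt{\boldsymbol{\rP_{c_n^{\infty}}^{\infty}}F_n}(x)$ for $x\ge b$. Monotonicity of $\bT$ then yields $\bT(G)\le \bT(\overline{F^{\infty}_n})$, establishing optimality.

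The minimization case is entirely symmetric: one checks that $\boldsymbol{\rN_{c_n^{\infty}}^{\infty}}F_n$ is feasible and pointwise-majorizes every feasible $G$, with the indicator $\bI\{x\ge a\}$ playing the dual role of suppressing the $+c_n^{\infty}$ shift on $(-\infty,a)$ so that the candidate stays at $0$ there. The main subtlety — and where I expect most of the care to go — is simply the endpoint bookkeeping: the hard support constraints $F(x)=0$ for $x<a$ and $F(x)=1$ for $x\ge b$ inherent in $\sD([a,b])$ clash with a naive $\pm c_n^{\infty}$ shift, and the indicators in the definitions of $\boldsymbol{\rP_c^{\infty}}$ and $\boldsymbol{\rN_c^{\infty}}$ are precisely the corrections needed to keep the candidate inside the feasible set. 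Once that is handled cleanly, the argument collapses to a single invocation of monotonicity, which is exactly what makes the same closed form work uniformly across \emph{all} risk measures considered in the paper.
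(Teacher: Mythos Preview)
Your proposal is correct and follows essentially the same approach as the paper's own proof: verify that $\boldsymbol{\rP_{c_n^{\infty}}^{\infty}}F_n$ is feasible, show it is pointwise dominated by every feasible $G$ (hence stochastically dominates every such $G$), and invoke monotonicity of $\bT$; the minimization case is symmetric. Your treatment is in fact more careful than the paper's, which dispatches feasibility with ``it is easy to verify'' and does not spell out the endpoint bookkeeping you highlight.
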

The supremum ball $B_{\infty}(F_n,c_n^{\infty})$ consists of the CDFs within the area sandwiched by $\boldsymbol{\mathrm{P}_{c_n^{\infty}}^{\infty}} F_n$ and $\boldsymbol{\mathrm{N}_{c_n^{\infty}}^{\infty}} F_n$ (see Figure 1). Since any risk measure $\rT$ is monotonic, and
\[ \boldsymbol{\mathrm{P}_{c_n^{\infty}}^{\infty}} F_n(x) \leq G(x) \leq \boldsymbol{\mathrm{N}_{c_n^{\infty}}^{\infty}} F_n, \forall x \in \bR, \forall G \in B_{\infty}(F_n,c_n^{\infty}) \]
then $\boldsymbol{\mathrm{P}_{c_n^{\infty}}^{\infty}} F_n$ and $\boldsymbol{\mathrm{N}_{c_n^{\infty}}^{\infty}} F_n$ are the maximizer and the minimizer respectively. Another interpretation is that $\boldsymbol{\mathrm{P}_{c_n^{\infty}}^{\infty}}$ transports the leftmost atoms of $F_n$  with total mass of $c_n^{\infty}$ to the maximally possible atom $b$, while $\boldsymbol{\mathrm{P}_{c_n^{\infty}}^{\infty}}$ transports the rightmost atoms of $F_n$ with total mass of $c_n^{\infty}$ to the minimally possible atom $a$. Although we can  explicitly represent the optimal solutions in the PMF form, it is more convenient to work with the CDF form.  Please refer to Appendix \ref{app:alg_op} for more details. 
\begin{figure}[t]
\label{fig:opt_inf}
\begin{center}
	\centerline{\includegraphics[width=0.49 \textwidth]{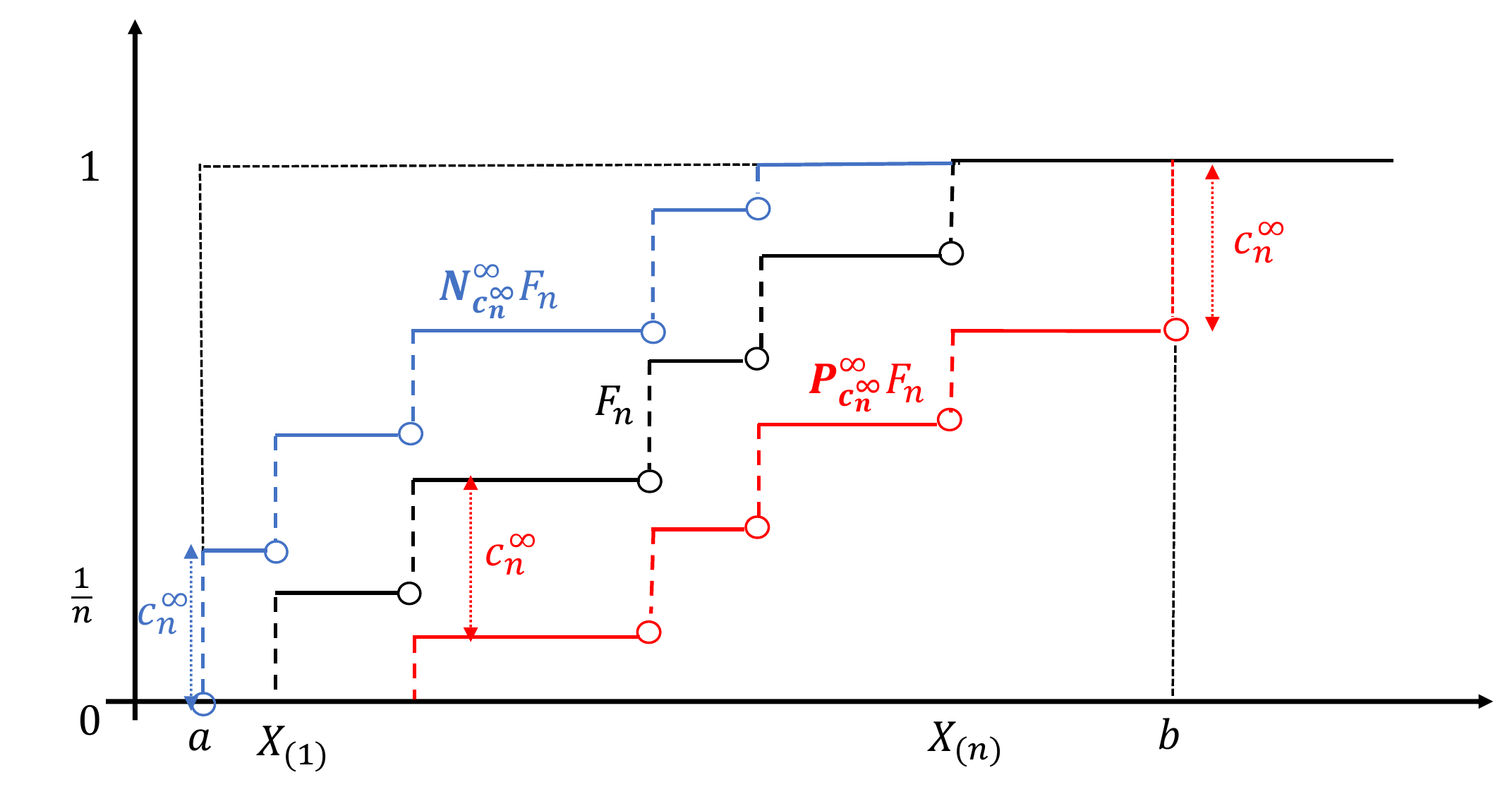}}
	\caption{$F_n$ (black), $\boldsymbol{\rP^{\infty}_{c^{\infty}_n}} F_n$ (blue) and $\boldsymbol{\rN^{\infty}_{c^{\infty}_n}} F_n$ (red).}
\end{center}
\vspace{-4ex}
\end{figure}
\begin{remark}
The positive operator in Equation \ref{eqt:sol_inf} reduces to the optimistic operator introduced in the CVaR bandit/RL \cite{tamkin2019distributionally,keramati2020being}. However, they only consider the case of CVaR, and we generalize it to  arbitrary risk measures.
\end{remark}
\begin{theorem}
\label{thm:opt_was}
For the risk measures except RDEU  in Section \ref{subsec:rm}, the optimal solution to  Formulation \ref{eqt:opt_was} is given by
\begin{equation}
\label{eqt:sol_was}
\begin{aligned}
    \overline{F^1_n} = \boldsymbol{\mathrm{O}_{c_n^{1}}^{1}}F_n, \  \underline{F^{1}_n} = \boldsymbol{\mathrm{P}_{c_n^{1}}^{1}}F_n,
\end{aligned}
\end{equation}
where $ \boldsymbol{\mathrm{P}_{c}^{1}}/\boldsymbol{\mathrm{N}_{c}^{1}}:\sD([a,b])\rightarrow\sD([a,b])$ is called the positive/negative operator for CDF with coefficient $c$ for the Wasserstein distance, which is defined as follows. 
\end{theorem}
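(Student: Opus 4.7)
The plan is to use the quantile representation of the Wasserstein distance, $\|G-F_n\|_1=\int_0^1 |G^{-1}(y)-F_n^{-1}(y)|\,dy$, and then show that $\rP^1_{c_n^1}F_n$ dominates every feasible $G$ in the increasing convex (ICX) order, so that the ICX-monotonicity of each listed risk measure yields the upper confidence bound $\bT(G)\le \bT(\rP^1_{c_n^1}F_n)$. The LCB will then follow by a symmetric argument.

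I first reduce to the case $G^{-1}\ge F_n^{-1}$ pointwise. Given any feasible $G$, the function $y\mapsto\max\{G^{-1}(y),F_n^{-1}(y)\}$ is a valid quantile function with values in $[a,b]$; its $L^1$ distance to $F_n^{-1}$ equals $\int\max\{G^{-1}-F_n^{-1},0\}\,dy \le \int|G^{-1}-F_n^{-1}|\,dy\le c_n^1$, so the corresponding CDF remains in the ball, and by first-order stochastic dominance together with monotonicity of $\bT$ its risk value only increases. Under this reduction $\|G-F_n\|_1 = \int_0^1 (G^{-1}-F_n^{-1})\,dy$.

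The candidate $G^\star\triangleq \rP^1_{c_n^1}F_n$ should correspond to pushing the top mass toward $b$: $G^{\star -1}(y)=F_n^{-1}(y)$ for $y<1-s$ and $G^{\star -1}(y)=b$ for $y\ge 1-s$, where $s\in[0,1]$ is chosen so that $\int_{1-s}^1 (b-F_n^{-1}(y))\,dy=c_n^1$ (capping at $s=1$, i.e.\ $G^\star=\delta_b$, if $c_n^1\ge b-\bE_{F_n}[X]$). I then verify the ICX inequality $\int_p^1 G^{-1}(y)\,dy \le \int_p^1 G^{\star -1}(y)\,dy$ for every $p\in[0,1]$: when $p\ge 1-s$, the right side equals $(1-p)b$ and the left side is at most $(1-p)b$ since $G^{-1}\le b$; when $p<1-s$, subtracting $\int_p^1 F_n^{-1}$ leaves $\int_{1-s}^1(b-F_n^{-1})=c_n^1$ on the right and $\int_p^1(G^{-1}-F_n^{-1})\le \|G-F_n\|_1\le c_n^1$ on the left.

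The main obstacle is converting this ICX domination into $\bT$-domination for each risk measure class in Section~\ref{subsec:rm}. For SRM with nondecreasing $\phi$ (covering CVaR with $\phi(y)=\tfrac{1}{1-\alpha}\mathbb{I}\{y\ge \alpha\}$), a standard integration-by-parts rewrites $M_\phi(F)$ as a nonnegative weighted integral of the stop-loss functional $p\mapsto \int_p^1 F^{-1}$, so the ICX inequality above transfers directly. For CE with convex increasing $u$ (covering ERM with $u(x)=e^{\beta x}$), the ICX order is \emph{defined} by $\int u\,dG\le \int u\,dG^\star$ for every increasing convex $u$, and the increasing $u^{-1}$ preserves inequalities. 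For DRM with concave increasing $g$, I would invoke the Choquet/Kusuoka representation expressing $\rho_g$ as a mixture of CVaRs and reduce to the SRM case. RDEU is excluded precisely because its weighting $w$ can fail to be concave and hence destroy ICX order. Finally, the LCB $\rN^1_{c_n^1}F_n$ follows from the mirror argument that pushes the bottom mass toward $a$, reversing the monotonicity reduction to $G^{-1}\le F_n^{-1}$ and the ICX inequality to $\int_0^p G^{-1}\ge \int_0^p G^{\star -1}$.
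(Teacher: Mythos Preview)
Your argument for the maximizer is correct and is in fact a cleaner route than the paper's: the paper treats CE and SRM/DRM separately, rewriting each objective as a linear functional (in $G$ for CE, in $G^{-1}$ for SRM/DRM) with an increasing weight, and then uses a direct crossing--point comparison against the candidate. Your single ICX argument unifies all cases at once, since every listed risk measure is monotone for the increasing convex order.

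The gap is in the lower confidence bound. The ``mirror'' you describe---pushing the bottom quantiles down to $a$---is \emph{not} the minimizer $\boldsymbol{\rN^{1}_{c}}F_n$ defined in the paper, and the resulting order is the wrong one. Concretely, the paper's $\boldsymbol{\rN^{1}_{c}}F_n$ keeps the bottom atoms of $F_n$ intact and \emph{flattens the top} of the quantile function to a constant level (equivalently, it collapses the largest atoms onto a single atom $b^{-}<X_{(n^{-})}$). Pushing bottom mass to $a$ cannot be optimal: for $\boldsymbol{C_{\alpha}}$ with $\alpha$ small, altering quantiles below level $1-\alpha$ leaves the CVaR unchanged, so your candidate would not even decrease the objective. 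Moreover, the inequality you write, $\int_0^p G^{-1}\ge \int_0^p G^{\star -1}$, is the increasing \emph{concave} order, and none of CVaR, SRM with increasing $\phi$, DRM with concave $g$, or CE with convex $u$ is monotone for that order; hence the final implication $\bT(G^\star)\le\bT(G)$ would fail.

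The fix stays within your framework: after reducing to $G^{-1}\le F_n^{-1}$, show that the flatten--top candidate $G^{\star-1}(y)=\min\{F_n^{-1}(y),v\}$ (with $v$ chosen so that $\int_0^1(F_n^{-1}-v)^{+}=c$) satisfies $\int_p^1 G^{\star -1}\le \int_p^1 G^{-1}$ for every $p$ and every feasible $G$. This is the same ICX direction as before, now with $G^\star$ at the bottom, and your risk--measure monotonicity step then yields $\bT(G^\star)\le\bT(G)$. The key computation is that, for any feasible $G$ and any level $v'\le F_n^{-1}(p)$, the nondecreasing constraint forces $\int_0^p(F_n^{-1}-G^{-1})\ge\int_0^p(F_n^{-1}-v')^{+}$ whenever $G^{-1}(p)\le v'$, which bounds $\int_p^1(F_n^{-1}-G^{-1})\le c-\int_0^p(F_n^{-1}-v')^{+}=\int_p^1(F_n^{-1}-v)$ at the optimal $v'=v$.
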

\vspace{-0ex}
\begin{figure}[t]
\label{fig:1}
\begin{center}
	\centerline{\includegraphics[width=0.5 \textwidth]{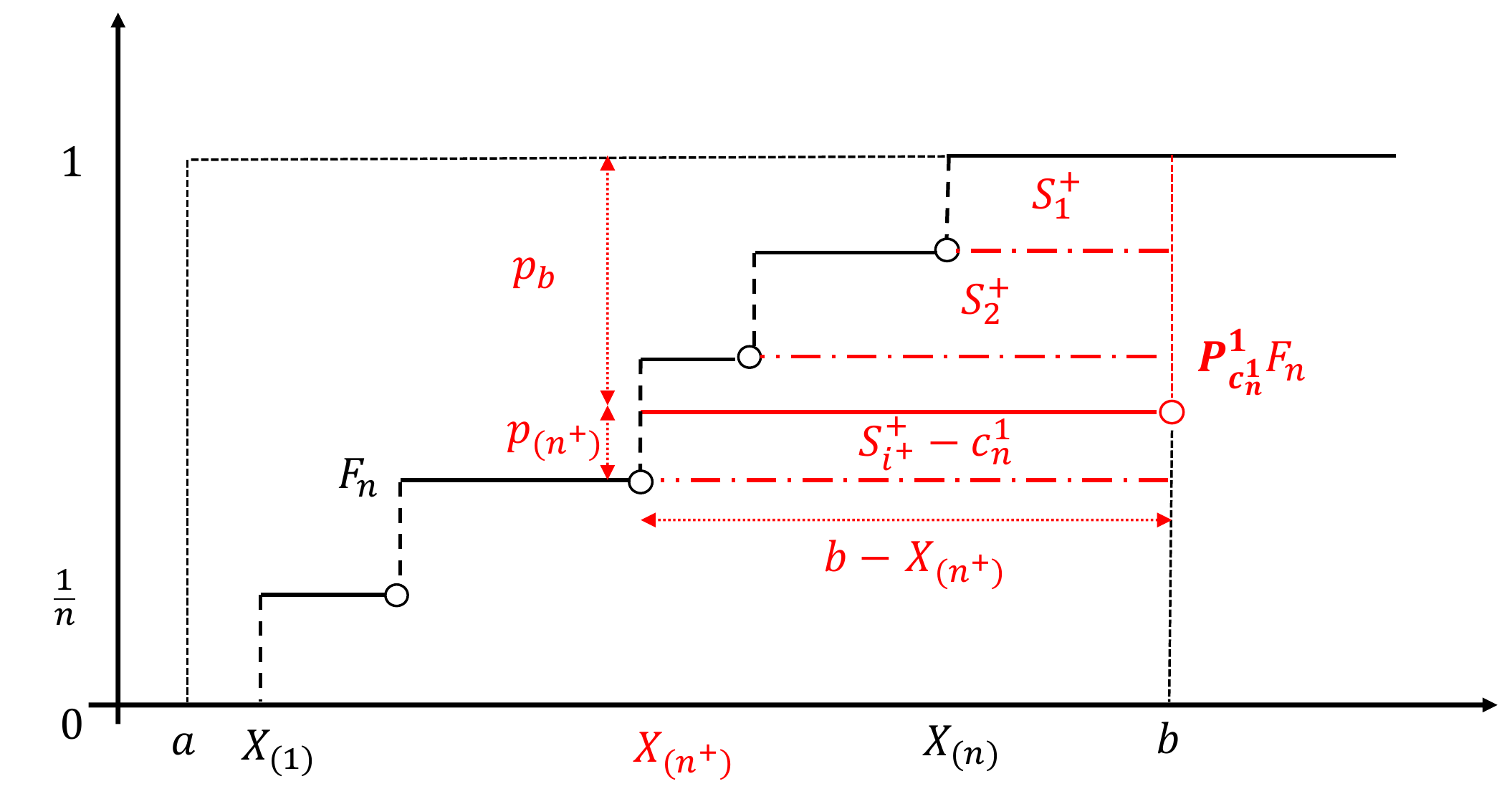}}
    \vspace{-1ex}
	\caption{ $F_n$ (black) and  $\boldsymbol{\rP^1_{c_n^1}} F_n$ (red). $\boldsymbol{\rP^1_{c_n^1}} F_n$ overlaps $F_n$ for $x< X_{(n^+)}$, and  it has only  two jumps at $X_{(n^+)}$ and $b$ for $x\ge X_{(n^+)}$.}
\end{center}
\vspace{-4ex}
\end{figure}
Fix $F_n$ and $c_n^1>0$. Let $X_{(1)}\leq X_{(2)}\cdots \leq X_{(n)}$ be the order statistic of $\{X_i\}$. For $i\in[n]$, we recursively define
\begin{align*}
    S^+_1 \triangleq \frac{1}{n}\prt{b-X_{(n)}}, \ S^+_i \triangleq S^+_{i-1}+\frac{1}{n}(b-X_{(n+1-i)}).
\end{align*}
The geometric interpretation of $S^+_i$ is  the area sandwiched between $F_n$ and the horizontal line $1-\frac{i}{n}$ (see Figure 2). Define $i^+\triangleq\min\{i: S^+_i \ge c_n^1 \}$ as the first index that $S^+_i$ exceeds $c_n^1$. Let $n^+\triangleq n+1-i^+$. 
Then $\boldsymbol{\mathrm{P}_{c_n^1}^1} F_n$ is a categorical distribution 
with atoms $\cbrk{X_{(i)}}_{i\in[n^{+}]}\cup\{b\}$. The probability mass of $X_{(n^+)}$ and $b$ are assigned to be
\[ p_{n^+}\triangleq\frac{1}{b-X_{n^+}}(S^+_{i^+}-c^1_n) , \ p_b \triangleq \frac{i^+}{n}-p_{n^+}, \]
meanwhile the probability mass of the first $n^+-1$ atoms $\cbrk{X_{(i)}}_{i\in[n^{+}-1]}$ remains $\frac{1}{n}$.
To be more precise, $\boldsymbol{\mathrm{P}_{c_n^1}^1} F_n$ is described by the following probability mass function (PMF)
\[  \frac{1}{n}\sum_{i=1}^{n^+-1}\delta_{X_{(i)}} + p_{n^+} \cdot\delta_{X_{(n^+)}} + p_b\cdot b.
\]
The way of transforming $F_n$ into $\boldsymbol{\mathrm{P}_{c_n^{1}}^{1}} F_n$ resembles the well-known \emph{water-filling algorithm} \cite{telatar1999capacity} in wireless commutations in the opposite direction. Imagine that the gravity is reversed to the upward direction, and we  fill the water of amount $c_n^1$ to a tank enclosed by $F_n$ and the vertical line $b$. The water is sequentially filled in the  bins from right to left, in which the $i$-th bin corresponds to the $X_{(n+1-i)}$ until the water is filled up at the $i^+$ bin. By a volume argument, the water level is $\frac{n^+-1}{n}+p_{n^+}$. We then recover the analytic form via the shape of $\boldsymbol{\mathrm{P}_{c_n^{1}}^{1}}F_n$.

Another interpretation is that $\boldsymbol{\mathrm{P}_{c_n^{1}}^{1}}$ replaces the  probability mass $\frac{1}{n}-p_{n^+}$ of $X_{(n^+)}$ and all the atoms to its right $\cbrk{X_{(i)}}_{n^+<i\leq n}$ by the upper bound $b$.
For convenience, we let $X_{(0)}=a$. We recursively define for $i\in[n]$
\begin{align*}
    S^-_1 \triangleq \frac{X_{(n)}-X_{(n-1)}}{n},
    S^-_i \triangleq S^-_{i-1}+\frac{i(X_{(n+1-i)}-X_{(n-i)})}{n}.
\end{align*}
Now $S^-_i$ represents the area sandwiched between $F_n$ and the vertical line $X_{(n-i)}$ (see Figure 3). Define $i^-\triangleq\min\{i: S^-_i \ge c_n^1 \}$ as the first index that $S^-_i$ exceeds $c_n^1$. Let $n^-\triangleq n+1-i^-$. 
Then $\boldsymbol{\rN_{c_n^1}^1} F_n$ is a categorical distribution 
with atoms $\cbrk{X_{(i)}}_{i\in[n^{-}-1]}\cup\{b^-\}$,
where $b^-$ is given by 
\[b^-\triangleq X_{(n^{-}-1)}+\frac{n}{i^-}(S^-_n-c^1_n).\]
$\boldsymbol{\rN_{c_n^1}^1} F_n$ is given by the PMF
\[  \frac{1}{n}\sum_{i=1}^{n^--1}\delta_{X_{(i)}} + \frac{i^-}{n} \cdot b^-.
\]
$\boldsymbol{\rN_{c_n^{1}}^{1}} F_n$  mirrors the water-filling, but we now fill the water rightward until it is filled out with water level $b^-$.  It can also be interpreted as replacing the atoms to the right of $X_{(n^-)}$ with a total mass of $\frac{i^-}{n}$ by a single atom $b^-<X_{(n^-)}$.  
\begin{figure}[t]
\label{fig:neg_opt_was}
\begin{center}
	\centerline{\includegraphics[width=0.5 \textwidth]{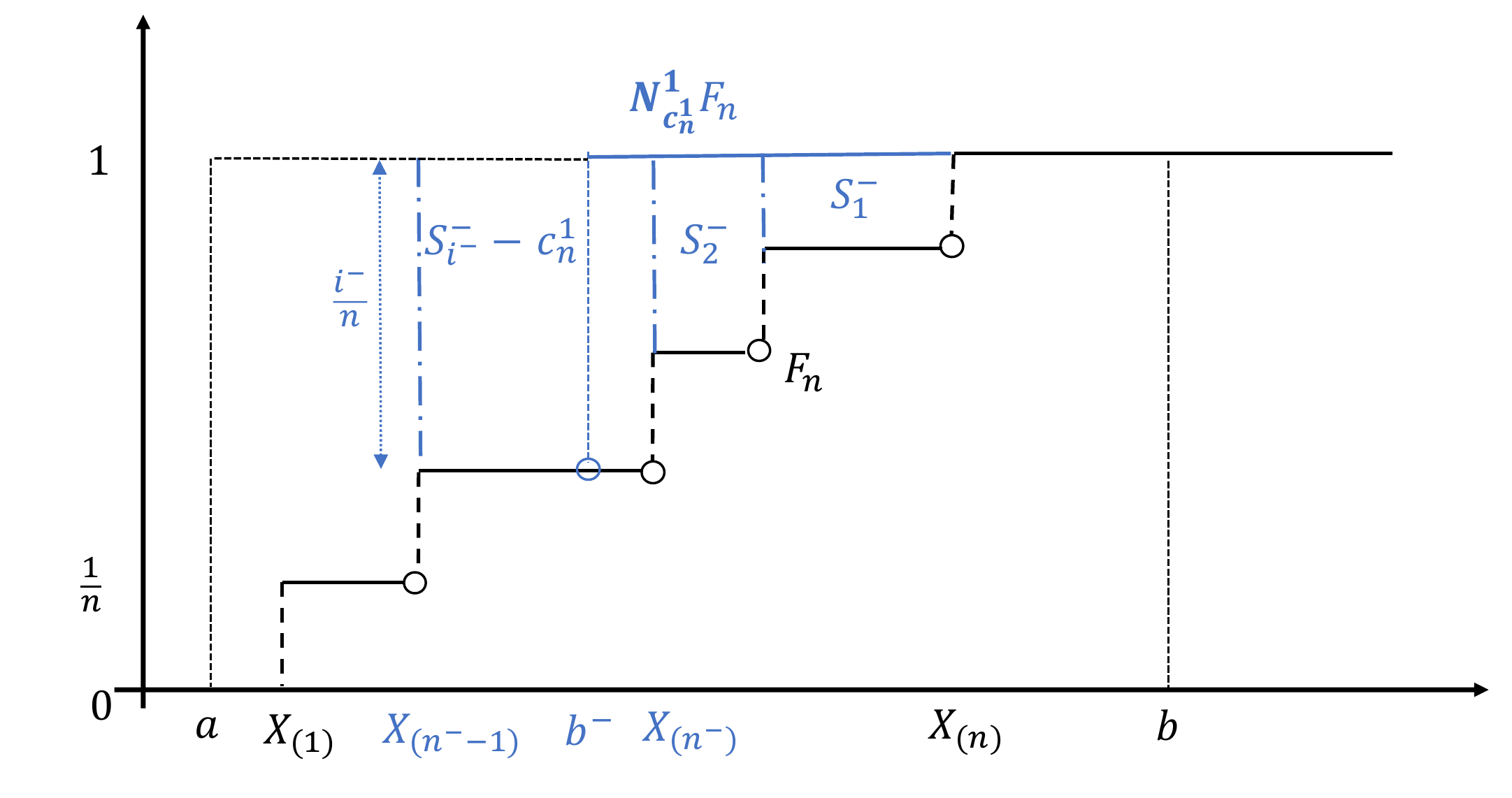}}
    \vspace{-1ex}
	\caption{ $F_n$ (black) and $\boldsymbol{\rN^1_{c^1_n}} F_n$ (blue). $\boldsymbol{\rN^1_{c_n^1}} F_n$ overlaps $F_n$ for $x< b^-$ and has a single jump at $b^-$ with height $\frac{i^-}{n}$.}
\end{center}
\vspace{-4ex}
\end{figure}

\paragraph{Computational issue.} We present the algorithms to actualize Equation \ref{eqt:opt_was}-\ref{eqt:opt_inf} in practice in Appendix \ref{app:alg_op}. We demonstrate that their computational complexity increases slightly more than the LC-based methods. 
\begin{remark}
    For both distances, we only require $F$ to be bounded above by a known constant $b$ to perform $\boldsymbol{\mathrm{P}_{c_n}}F_n$, and require $F$ to be bounded below by a known constant $a$ to perform $\boldsymbol{\rN_{c_n}} F_n$. Thus we only require $F$ to be bounded on one side to obtain the one-sided confidence bound. 
\end{remark}

\section{Improvement of Confidence Bounds}
\label{sec:imp}
\subsection{Derivation of the LLC}
\label{subsec:local_lip}
We present a systematic way of computing the LLC over the confidence ball $B(F_n,c_n)$, which bridges our framework and the GLC-based method. Define $\psi(t;F,G)\triangleq\mathrm{T}((1-t)F+t G)$ for $F, G\in \mathscr{D}([a,b])$ and $t\in[0,1]$. For simplicity, we may drop $F,G$ and write $\psi(t)$ if it is clear from the context.  Note that $\psi(0)=\mathrm{T}(F)$, $\psi(1)=\mathrm{T}(G)$ and $(1-t)F+t G\in \mathscr{D}([a,b])$ for all $t\in[0,1]$. It can be shown that $\boldsymbol{\psi}$ is continuously differentiable under some mild conditions on $\rT$. Observe that
\begin{align*}
    L(\rT;F_n,c_n)&=\sup_{F,G\in B(F_n,c_n) } \frac{\rT(F)-\rT(G)}{\norm{F-G}} \\
    &= \sup_{F,G\in B_p(F_n,c_n) } \frac{\psi(1;F,G)-\psi(0;F,G)}{\norm{F-G}}\\
    & \leq \sup_{F,G\in B(F_n,c_n), t\in[0,1] } \frac{\psi^{\prime}(t;F,G)}{\norm{F-G}} \\
    &\leq \sup_{F,G\in B(F_n,c_n), t\in[0,1] } \boldsymbol{\upsilon}((1-t)F+t G) \\
    &= \sup_{F\in B(F_n,c_n)} \boldsymbol{\upsilon}(F),
\end{align*}
where $\boldsymbol{\upsilon}$ is a functional that satisfies for any $F,G$
\[  \psi'\prt{t;F,G})\leq 	\boldsymbol{\upsilon}((1-t)F+t G)\norm{F-G}.	 \]
Note that $\boldsymbol{\upsilon}$ implicitly depends on $p$ and the risk measure $\rT$. Consequently, we can obtain an upper bound on the LLC by bounding the last term. We can obtain the upper bound on the GLC by removing the ball constraint. For interested readers, please refer to  in Table \ref{tab:func} in Appendix \ref{app:imp} for the functional $\upsilon$ for different risk measures. We will use SRM as an example to illustrate the procedure. 
\subsubsection{An Example: SRM}
Here we consider an alternative form of SRM $ \boldsymbol{M_{\phi}}(F)=\int_a^b\phi(F(x))x dF(x)$.  $\psi$ is continuously differentiable with derivative 
\begin{align*}
\psi^{\prime}(t)&=\frac{d}{d t}\int_a^b\phi((1-t) F+t G)(x))x d((1-t) F+t G)(x)\\
&=-\int_a^b(G-F)(x)\phi((1-t) F+t G)(x))dx \\
&\leq \norm{G-F}_p \norm{\phi(F+t H)}_q.
\end{align*}
We omit the details in the second equality and leave the full derivations to Appendix \ref{app:imp}. Since
\begin{align*}
\frac{\psi^{\prime}(t;F,G)}{\norm{F-G}_p} \leq \norm{\phi((1-t) F+t G)}_q,
\end{align*}
where $\norm{\cdot}_q$ is the dual norm of $\norm{\cdot}_p$, we obtain $\boldsymbol{\upsilon}(F)=\norm{\phi(F)}_q$. 
\begin{table*}
	\caption{Comparison between the LLC and the GLC}	
	\label{tab:lip}
	\centering
	\begin{tabular}{ lccccccr  }
		\toprule
		RM  & Local $(p=1)$  &Global $(p=1)$ &Improvement & Local $(p=\infty)$  &Global $(p=\infty)$ &Improvement\\  
		\midrule
		CVaR   &$\frac{1}{\alpha}$ & $\frac{1}{\alpha}$ &\xmark &$\frac{b-F_n^{-1}((1-\alpha-c)^+)}{\alpha}$ & $\frac{b-a}{\alpha}$ &\cmark\\ [0.5ex] 
        SRM  & $\phi(1)$ & $\phi(1)$ &\xmark & $\norm{\phi(\underline{F^{\infty}_n})}_1$  & $(b-a)\phi(1)$ &\cmark\\ [0.5ex] 
        DRM &$\norm{g^{\prime}}_{\infty}$ &$\norm{g^{\prime}}_{\infty}$ &\xmark &$\norm{g^{\prime}(1-\underline{F^{\infty}_n})}_1$ &$(b-a)\norm{g^{\prime}}_{\infty}$ &\cmark\\ [0.5ex] 
        ERM  &$\frac{\exp(\beta b)}{\int_a^b\exp(\beta x)d \underline{F^1_n}(x))}$ &$\exp(\beta (b-a))$ &\cmark &$\frac{\exp(\beta b)-\exp(\beta a)}{\beta\int_a^b\exp(\beta x)d \underline{F^{\infty}_n}(x))}$ &$\frac{\exp(\beta (b-a))-1}{\beta}$ &\cmark\\ [0.5ex]
        RDEU & N/A\footnote{We cannot obtain the LLC of RDEU for $p=1$, but the GLC is still available.} & $\norm{w^{\prime}}_{\infty}\norm{v^{\prime}}_{\infty}$ & N/A & $\norm{w^{\prime}(\underline{F^{\infty}_n})v^{\prime}}_1$ & $\norm{w^{\prime}}_{\infty}\norm{v^{\prime}}_1$ &\cmark\\
        \bottomrule
	\end{tabular}
\end{table*}
Consider the case $p=\infty$. Since $\underline{F^{\infty}_n} \preceq F$ for $F\in B_{\infty}(F_n,c_n^{\infty})$,
then $ \phi\prt{\underline{F^{\infty}_n}(x)}\ge \phi(F(x)),\ \forall F\in B_{\infty}(F_n,c_n^{\infty}),\ \forall x\in[a,b]$.
It holds that
\begin{align*}
    \max_{F\in B_{\infty}(F_n,c_n^{\infty})}\norm{\phi(F)}_{1}&=\max_{F\in B_{\infty}(F_n,c_n^{\infty})} \int_a^b\phi(F(x))dx
    \\
    =\int_a^b\phi(\underline{F^{\infty}_n}(x))dx 
    &=\norm{\phi\prt{\underline{F^{\infty}_n}}}_{1}.
\end{align*}
In contrast, the GLC can be bounded by choosing $F=\delta_a$
\[ \max_{F\in \mathscr{D}([a,b])} \int_a^b\phi(F(x))dx=(b-a)\norm{\phi}_{\infty}=(b-a)\phi(1). \]

Following such principle, we obtain the LLCs for other risk measures (cf. Table \ref{tab:lip}). 

\subsection{Improvement of Distribution Optimization Framework}
\label{subsec:imp}
Equation \ref{eqt:comp} qualitatively establishes that the confidence bounds derived from our framework are tighter than that based on the LLC
\begin{equation*}
\rT\prt{\overline{F_n}}-\rT(F_n)\leq L(F_n,c_n)c_n.
\end{equation*}
Furthermore, we can quantitatively show the improvement 
\begin{align*}
\rT\prt{\overline{F_n}}-\rT(F_n)&=\boldsymbol{\psi}\prt{1;F_n,\overline{F_n}}-\boldsymbol{\psi}\prt{0;F_n,\overline{F_n}}\\
&\leq \max_{t\in[0,1]}\psi^{\prime}\prt{t;F_n,\overline{F_n}}\\
&\leq \max_{t\in[0,1]}\boldsymbol{\upsilon}\prt{(1-t)F_n+t\cdot\overline{F_n}} c_n,
\end{align*}
where the second to the last inequality follows from the definition of $\boldsymbol{\upsilon}$. The following also holds 
\begin{align*}
\rT(F_n)-\rT\prt{\underline{F_n}}\leq \max_{t\in[0,1]}\boldsymbol{\upsilon}\prt{(1-t) \underline{F_n}+t F_n} c_n.
\end{align*}
Therefore, it is more convenient to compare $\rT\prt{\overline{F_n}}-\rT(F_n)$ and $L(F_n,c_n)c_n$,
which is shown for the supremum distance in Table \ref{tab:impr}. For convenience, we normalize them by $c_n$ and state the results for general CDF $F$. Our UCBs  are strictly and consistently tighter than the LLC-based bounds for the supremum distance. Due to the space limit, the results for the Wasserstein distance are shown in Appendix \ref{app:imp}. 
\begin{table*}
	\caption{Improvement of confidence bounds for supremum distance over LLC}
	\label{tab:impr}
	\centering
	\begin{tabular}{ lcccccr  }
		\toprule
		RM & CVaR &SRM &DRM &ERM &RDEU \\  
		\midrule
         $L_{\infty}(\rT;F,c)$  & $\frac{b-F^{-1}(1-\alpha-c)}{\alpha}$    & $\norm{\phi(\underline{F^{\infty}})}_1$ &   $\norm{g^{\prime}(1-\underline{F^{\infty}})}_1$  & $\frac{\exp(\beta b)-\exp(\beta a)}{\int_a^b\exp(\beta x)d \underline{F^{\infty}}(x)}$    & $\norm{w^{\prime}(\underline{F^{\infty}})v^{\prime}}_1 $\\ [1.ex] 
        $\frac{\rT\prt{\overline{F^{\infty}}}-\rT(F)}{c}$  & $\frac{b-F^{-1}(1-\alpha)}{\alpha}$    & $ \norm{\phi(F)}_1$ &   $\norm{g^{\prime}(1-F)}_1$  & $\frac{\exp(\beta b)-\exp(\beta a)}{\beta \int_a^b\exp(\beta x)d F(x)}$   & $\norm{w^{\prime}(F)v^{\prime}}_1 $ \\ [1.ex] 
        Improvement  &\cmark  &\cmark &\cmark  &\cmark   &\cmark \\ [1.ex] 
    \bottomrule
	\end{tabular}
\end{table*}
\vspace{-4ex}
\subsection{Illustrating example: CVaR}
We use CVaR to illustrate Table \ref{tab:lip} and Table \ref{tab:impr}. Table \ref{tab:lip} compares the LLC with GLC for different risk measures. The second row in Table \ref{tab:lip} shows that  the LLC and GLC of CVaR for the Wasserstein distance are identical. In addition, the GLC of CVaR for supremum distance is $\frac{b-a}{\alpha}$, which is larger than its LLC  $\frac{b-F^{-1}((1-\alpha-c)^+)}{\alpha}$. 

Table \ref{tab:impr} presents the improvement of our bound for the supremum distance. The second column in Table \ref{tab:impr} implies 
\begin{align*}
\frac{\boldsymbol{C_{\alpha}}\prt{\overline{F^{\infty}}}-\boldsymbol{C_{\alpha}}(F)}{c} &= \frac{b-F^{-1}(1-\alpha)}{1-\alpha} \\
<  L_{\infty}(\boldsymbol{C_{\alpha}};F,c) &= \frac{b-F^{-1}(1-\alpha-c)}{\alpha} \\
< L_{\infty}(\boldsymbol{C_{\alpha}})  &=  \frac{b-a}{\alpha}.
\end{align*}
Our upper bound is strictly tighter than the LLC-based and GLC-based bound. The improvement depends on the distribution $F$ and $\alpha$. For small $\alpha$ and distribution $F$ with non-fat upper tail, $b-F^{-1}(1-\alpha)\ll b-a$, leading to a much finer bound. In particular, consider a uniform distribution  
\begin{align*}
\frac{\boldsymbol{C_{\alpha}}\prt{\overline{F^{\infty}}}-\boldsymbol{C_{\alpha}}(F)}{c} &= \frac{\alpha(b-a)}{\alpha} \\
<  L_{\infty}(\boldsymbol{C_{\alpha}};F,c) &= \frac{(\alpha+c)(b-a)}{\alpha} \\
< L_{\infty}(\boldsymbol{C_{\alpha}})  &=  \frac{b-a}{\alpha}.
\end{align*}
Our bound for uniform distribution considerably improves by a factor of $1/\alpha$ and $(\alpha+c)/\alpha$ compared to the GLC-based and LLC-based bound, respectively.

\section{Application to Risk-sensitive Bandits}
\label{sec:app}
Now we consider the risk-sensitive multi-armed bandit (MAB) problems. The quality of each arm is measured by the risk measure value of its loss distribution. The loss distribution of the $i$-th arm is denoted by $F_i$, and the risk value associated with $\rT$ is $\rT(F_i)$. The algorithm interacts with a bandit instance $\nu=(F_i)_{i\in[K]}$ for $N$ rounds
. In each round $t\in[N]$, the algorithm $\pi$ chooses an arm $I_t\in[K]$ and observes the loss $X_t\sim F_{I_t}$. The performance of an algorithm $\pi$ is measured by the cumulative regret 
\[	\text{Regret}(\pi,\nu,N)\triangleq \bE\brk{\sum_{t\in[N]} \rT(F_{I_t})-\min_{i\in[K]}\rT(F_{i}) }. 		\] 
While the UCB-type algorithms \cite{auer2002finite} are widely applied to the risk-neutral MAB problems, they rely on the concentration bound of the mean. We propose a \emph{meta-algorithm} (cf. Algorithm \ref{alg:lcb}) to deal with generic risk measures, where the LCB is derived from our framework. The algorithm maintains the EDF $\hat{F}_{i,t}$ for each arm $i$ 
\begin{equation}
\vspace{-1ex}
\label{eqt:edf_lcb}
\hat{F}_{i,t}\triangleq \frac{1}{s_i(t)}\sum_{t'=1}^{t-1}\mathbb{I}\cbrk{X_{t'}\leq \cdot, I_{t'}=i},
\end{equation}
where $s_i(t)\triangleq\sum_{t'=1}^{t-1}\mathbb{I}\{I_{t'}=i\}$ is the number of times of pulling arm $i$ up to time $t$. For convenience, we assume the first arm is the optimal arm, i.e., $\rT(F_1)< \rT(F_i), i\neq 1$.
\begin{algorithm}[tb]
	\caption{\texttt{Lower Confidence Bound}}
	\label{alg:lcb}
	\begin{algorithmic}[1]
		\STATE{\textbf{Input:} $N$,  $a$}
		\FOR{round $t\in[K]$}
		\STATE{Pull arm $I_t\leftarrow t$}
		\ENDFOR
		\FOR{round  $t = K+1, K+2, \cdots, N$}
		\STATE{Compute $\hat{F}_{i,t}$ via Equation \ref{eqt:edf_lcb}}
		\STATE{Set $c_i(t)\leftarrow \sqrt{\frac{\log(2KN^2)}{s_i(t)}}$ for all $i\in[K]$}
		\STATE{$\underline{F_{i,t}}\leftarrow \boldsymbol{\rN^{\infty}_{c_i(t)}}\hat{F}_{i,t}$}
		\STATE{$I_t\leftarrow \arg\min_{i\in[K]}\boldsymbol{\mathrm{T}}\prt{\underline{F_{i,t}}}$}
		\ENDFOR
	\end{algorithmic}
\end{algorithm}
When specializing the risk measure to the CVaR, we obtain a distribution-dependent regret upper bound.
\begin{proposition}
\label{prop:reg}
The expected regret of  Algorithm \ref{alg:lcb} on a instance $\nu\in \sD([a,\infty])$ with $\rT=C_{\alpha}$ is bounded as
\begin{align*}
&\text{Regret}(\text{LCB},\nu,N) \\
&\leq \frac{4\log(\sqrt{2}N)}{\alpha^2}\sum_{i>1}^K\frac{\prt{b-F_i^{-1}(1-\alpha-2c^*_i)}^2}{\Delta_i}+3\sum_{i=1}^K\Delta_i,
\end{align*}
\vspace{-0ex}
where the sub-optimality gap $\Delta_i\triangleq C_{\alpha}(F_i)-C_{\alpha}(F_1)<b-F_i^{-1}(1-\alpha)$, and $c_i^*$ is a $F_i$-dependent constant that solves the equation $2\frac{b-F_i^{-1}(1-\alpha-2c)}{\alpha}c=\Delta_i$.
\end{proposition}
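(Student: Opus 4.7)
The plan is to adapt the standard UCB-type regret analysis, using the lower confidence bound $\underline{F_{i,t}}$ produced by our framework together with the local Lipschitz constant of $\boldsymbol{C_\alpha}$ from Section~\ref{subsec:local_lip}. \textbf{Good event.} First I would introduce the concentration event
\[\mathcal{E} \triangleq \bigcap_{i\in[K],\,t\in[N]}\{\|\hat{F}_{i,t}-F_i\|_\infty \leq c_i(t)\}\]
and apply Fact~\ref{fct:dkw} with a union bound over arms and the at most $N$ possible values of $s_i(t)$; the choice of $c_i(t)$ makes $\mathbb{P}(\mathcal{E}^c)=O(1/N)$, so the contribution of $\mathcal{E}^c$ to the expected regret is $O(\sum_i \Delta_i)$ and absorbs into the $3\sum_i\Delta_i$ term together with the $K$ forced pulls from the initialization phase.

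\textbf{LCB and selection rule.} On $\mathcal{E}$, the explicit form of $\boldsymbol{\rN^{\infty}_{c}}$ in Theorem~\ref{thm:opt_inf} gives $\underline{F_{i,t}}(x)\geq F_i(x)$ pointwise, so $F_i\succeq \underline{F_{i,t}}$ and monotonicity of $\boldsymbol{C_\alpha}$ yields $\boldsymbol{C_\alpha}(\underline{F_{i,t}})\leq \boldsymbol{C_\alpha}(F_i)$. Combining this with the selection rule of Algorithm~\ref{alg:lcb}, whenever a suboptimal arm $i>1$ is pulled at round $t>K$ we have
\[\boldsymbol{C_\alpha}(F_i)-\boldsymbol{C_\alpha}(\underline{F_{i,t}})\geq \boldsymbol{C_\alpha}(F_i)-\boldsymbol{C_\alpha}(F_1)=\Delta_i.\]

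\textbf{Forcing $s_i(t)$ large.} I would next upper-bound the left-hand side using the local Lipschitz constant. Since both $F_i$ and $\underline{F_{i,t}}$ lie in $B_\infty(\hat F_{i,t},c_i(t))$ with $\|F_i-\underline{F_{i,t}}\|_\infty\leq 2c_i(t)$, the CVaR row of Table~\ref{tab:lip} gives
\[\boldsymbol{C_\alpha}(F_i)-\boldsymbol{C_\alpha}(\underline{F_{i,t}})\leq \frac{2c_i(t)}{\alpha}\bigl(b-\hat F_{i,t}^{-1}((1-\alpha-c_i(t))^+)\bigr).\]
Translating the empirical quantile to the population one via the DKW relation $\hat F_{i,t}^{-1}(u)\geq F_i^{-1}(u-c_i(t))$ gives $\Delta_i\leq (2/\alpha)(b-F_i^{-1}((1-\alpha-2c_i(t))^+))\,c_i(t)$. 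Because the right-hand side is increasing in $c_i(t)$ and equals $\Delta_i$ at $c_i(t)=c_i^*$, we conclude $c_i(t)\geq c_i^*$ whenever arm $i$ is pulled on $\mathcal{E}$, i.e.
\[s_i(t)\leq \frac{\log(2KN^2)}{(c_i^*)^2}=\frac{4\log(2KN^2)}{\alpha^2\Delta_i^2}\bigl(b-F_i^{-1}(1-\alpha-2c_i^*)\bigr)^2.\]
Summing $\Delta_i\cdot\mathbb{E}[N_i(N+1)]$ over $i>1$ then yields the first term of the stated bound, modulo reconciling the logarithmic factor with the $\log(\sqrt{2}N)$ in the theorem (which amounts to the exact form of the union bound).

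\textbf{Main obstacle.} The subtle step is the third one: the local Lipschitz constant of Table~\ref{tab:lip} is evaluated at $\hat F_{i,t}$, whereas the final regret bound must be phrased in terms of $F_i$. Transferring the quantile $\hat F_{i,t}^{-1}((1-\alpha-c_i(t))^+)$ to $F_i^{-1}((1-\alpha-2c_i(t))^+)$ introduces a second shift of size $c_i(t)$, which is precisely why $c_i^*$ is defined implicitly via $2(b-F_i^{-1}(1-\alpha-2c))c/\alpha=\Delta_i$. Verifying that this equation admits a unique solution on the feasible range (so that the inequality $\Delta_i\leq (2/\alpha)c(b-F_i^{-1}(1-\alpha-2c))$ can be inverted to $c\geq c_i^*$) requires strict monotonicity of the map $c\mapsto c(b-F_i^{-1}(1-\alpha-2c))$, while existence is ensured by the assumption $\Delta_i<b-F_i^{-1}(1-\alpha)$ stated in the proposition.
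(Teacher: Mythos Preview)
Your proposal is correct and matches the paper's proof in all essential respects: the paper also relies on the DKW good event, the validity of $\boldsymbol{C_\alpha}(\underline{F_{i,t}})\le \boldsymbol{C_\alpha}(F_i)$, and the key local bound $\boldsymbol{C_\alpha}(F_i)-\boldsymbol{C_\alpha}(\underline{F_{i,t}})\le 2\frac{b-F_i^{-1}(1-\alpha-2c_i)}{\alpha}c_i$ (obtained there via Corollary~\ref{cor:cvar_inf} split into two pieces rather than a single application of the LLC, and with a per-arm good event $\mathcal{G}_i$ at a fixed threshold $u_i$ rather than your global $\mathcal{E}$). These are purely organizational variants of the same argument, and your remark about the logarithmic factor correctly identifies the only remaining bookkeeping.
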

The proof is deferred to Appendix \ref{app:reg}.
\begin{remark}
The distribution-dependent constant $c_i^*$ always exists and satisfies $c_i^*\in(0,(1-\alpha)/2)$. Let $g(c)\triangleq2\frac{b-F_i^{-1}(1-\alpha-2c)}{\alpha}c$. Observe that $g$ is an increasing function with $g(0)=0$ and $g((1-\alpha)/2)=\frac{1-\alpha}{\alpha}(b-a)>\Delta_i$ for $\alpha\in(0,0.5]$.
In particular, for uniform distribution, we can show that
\begin{align*}
    b-F_i^{-1}(1-\alpha-2c^*_i) = \frac{b-a}{2}\prt{ \sqrt{\alpha^2+\frac{4\alpha\Delta_i}{b-a}}+\alpha}.
\end{align*}
\end{remark}
\begin{remark}
The meta-algorithm for CVaR reduces to the \texttt{CVaR-UCB} algorithm \cite{tamkin2019distributionally}. 
Interestingly, \citet{tamkin2019distributionally} empirically observes that \texttt{CVaR-UCB} outperforms the GLC-based algorithm \texttt{U-UCB} \cite{cassel2018general}  with an order of magnitude improvement, but they only provide a regret bound of
\[ \frac{4\log(\sqrt{2}N)}{\alpha^2}\sum_{i>1}^K\frac{\prt{b-a}^2}{\Delta_i}+3\sum_{i=1}^K\Delta_i,\]
which matches that of \texttt{U-UCB}. We fill this gap by quantifying the improvement in the magnitude
\[ \sum_{i>1}\frac{\prt{b-F_i^{-1}(1-\alpha-2c^*_i)}^2}{\Delta_i^2}/\sum_{i>1}\frac{(b-a)^2
}{\Delta_i^2} < 1.	\]
\end{remark}
\begin{remark}
\citet{baudry2021optimal} introduces a Thompson Sampling algorithm \texttt{B-CVTS} for CVaR bandit with bounded rewards, which is the first  asymptotic optimal CVaR bandit algorithm. Notably, our main contribution in this paper is the framework to improve confidence bounds rather than designing optimal CVaR bandit algorithms. Additionaly, \texttt{CVaR-UCB} has several advantages. \texttt{CVaR-UCB} can perform  incremental updates to compute the CVaR values, but \texttt{B-CVTS} needs to maintain and sample from the posterior of distributions. Thus the time and space complexity of \texttt{CVaR-UCB} is quite low. Moreover, \texttt{CVaR-UCB} can be applied to semi-unbounded distributions, e.g., log-normal distribution, while \texttt{B-CVTS} assumes bounded rewards.
\end{remark}

\vspace{-1ex}
\section{Numerical Experiments}
\label{sec:exp}
To better visualize the benefits of our framework relative to those of LLC and GLC, we conducted a series of empirical comparisons. Details and complete figures are deferred to Appendix \ref{app:num}.
\vspace{-1ex}
\paragraph{Confidence bounds.}  We consider five different beta distributions and two risk measures: CVaR and ERM. 
Due to space limitations, we provide the results for one typical beta distribution and one particular distance in Figure 4. Our bounds are consistently tighter than LC-based ones for various risk measures and varying sample sizes.
\begin{figure}[t]
     \centering
     \begin{subfigure}[b]{0.22\textwidth}
         \centering
         \includegraphics[width=1.1\textwidth]{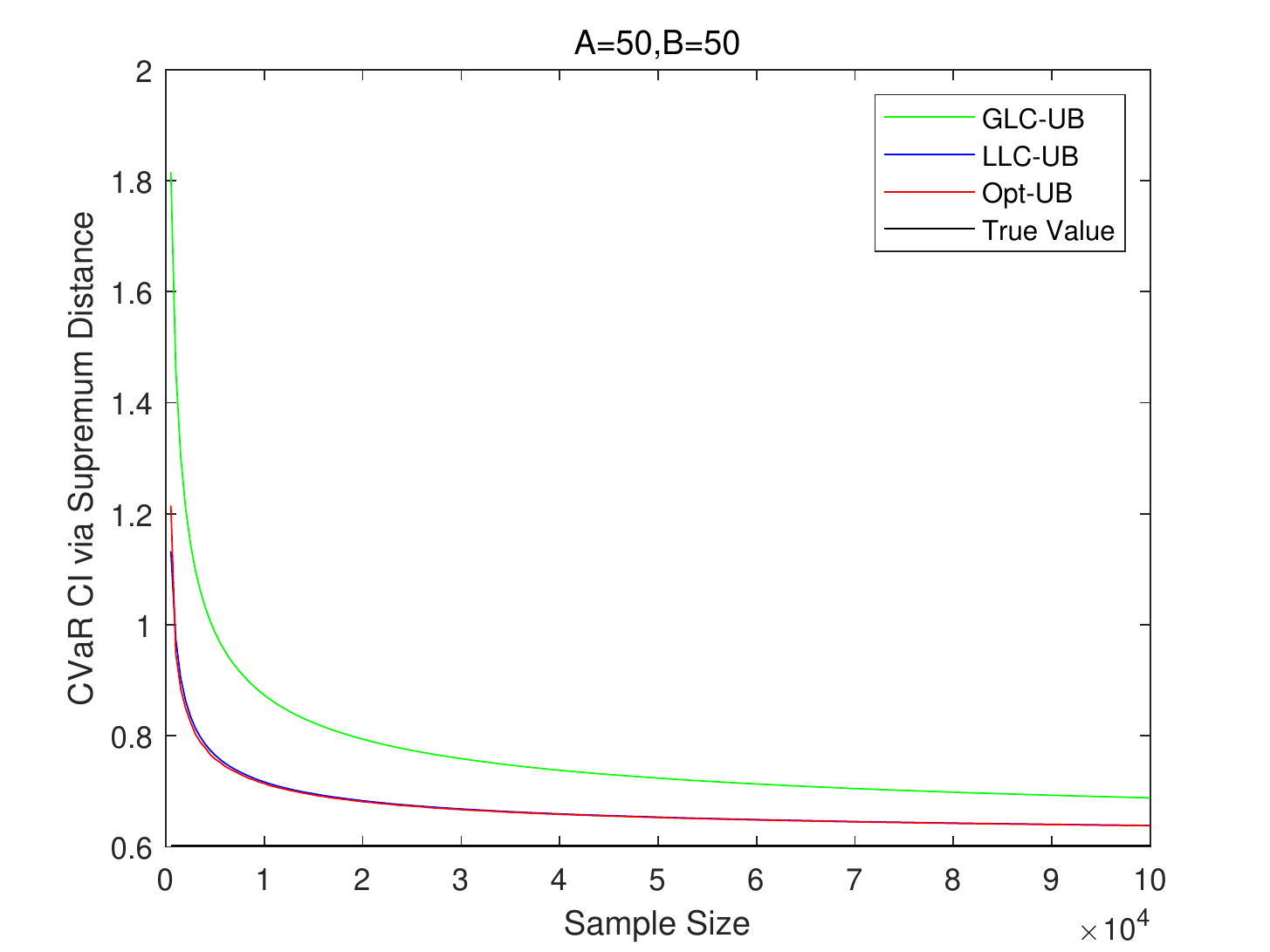}
         \caption{CVaR UCB via supremum distance}
     \end{subfigure}
     \hfill
     \begin{subfigure}[b]{0.22\textwidth}
         \centering
         \includegraphics[width=1.1\textwidth]{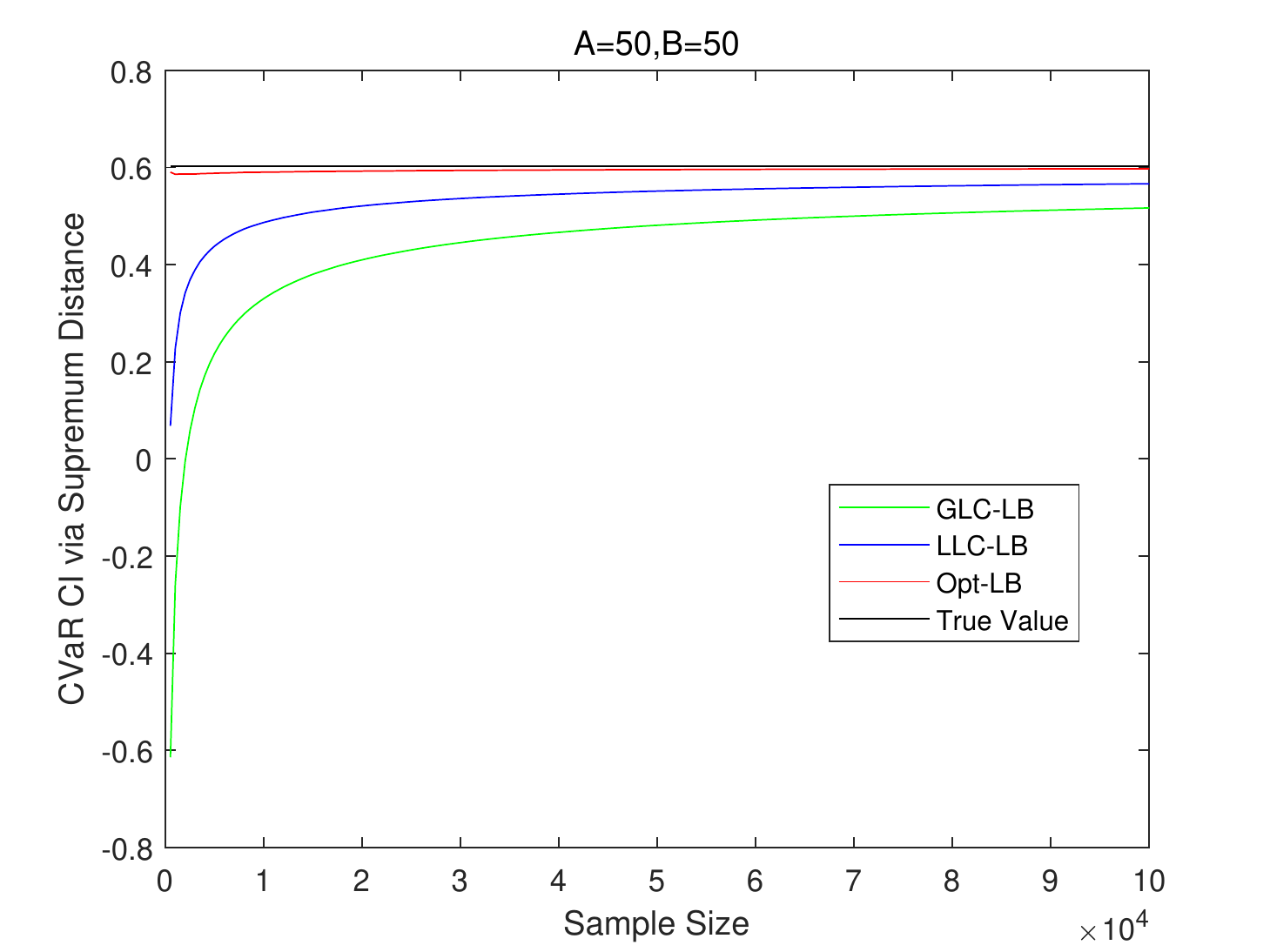}
         \caption{CVaR LCB via supremum distance}
     \end{subfigure}
    \begin{subfigure}[b]{0.22\textwidth}
         \centering
         \includegraphics[width=1.1\textwidth]{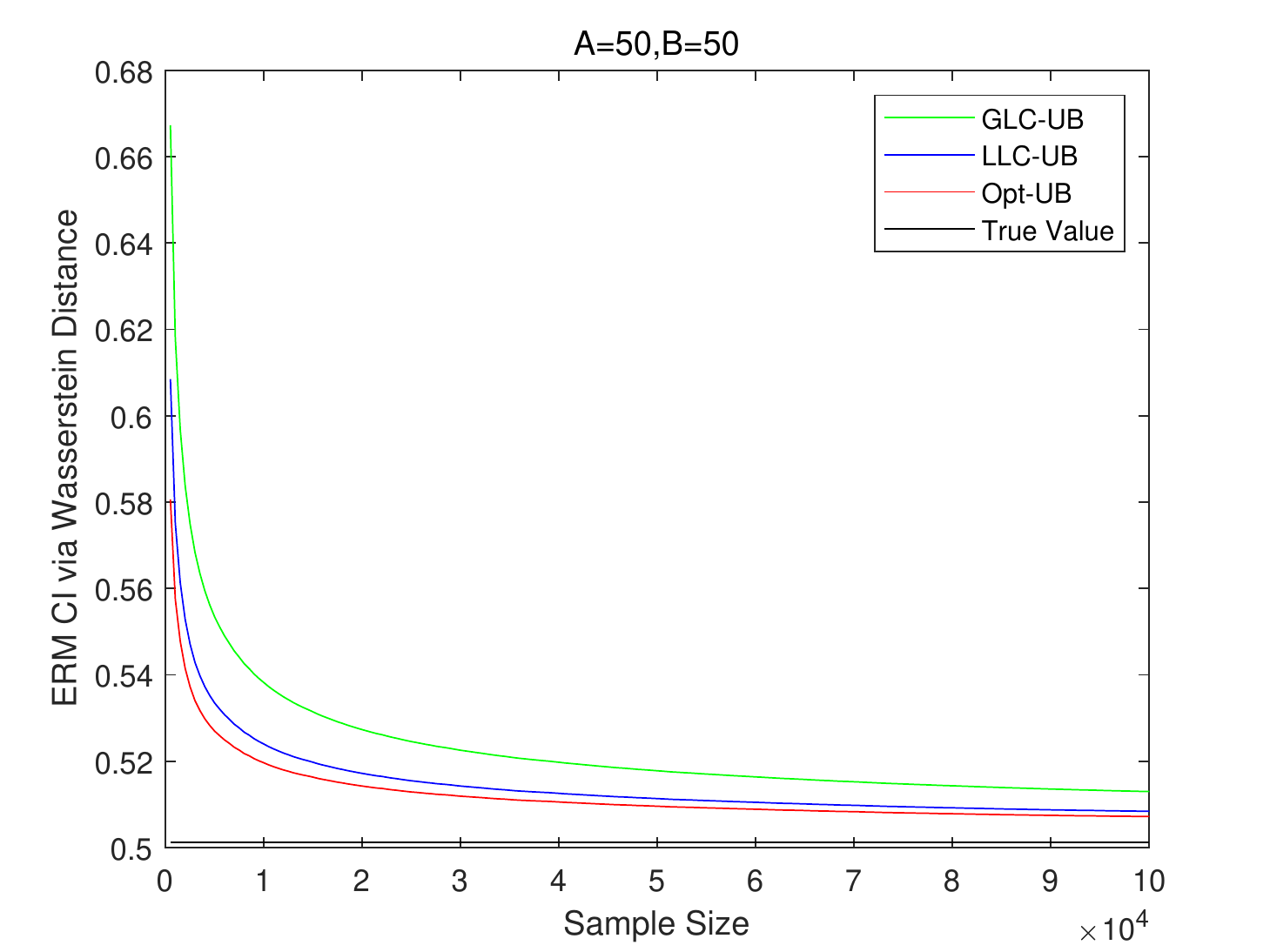}
         \caption{ERM UCB via Wasserstein distance}
     \end{subfigure}
     \hfill
     \begin{subfigure}[b]{0.22\textwidth}
         \centering
         \includegraphics[width=1.1\textwidth]{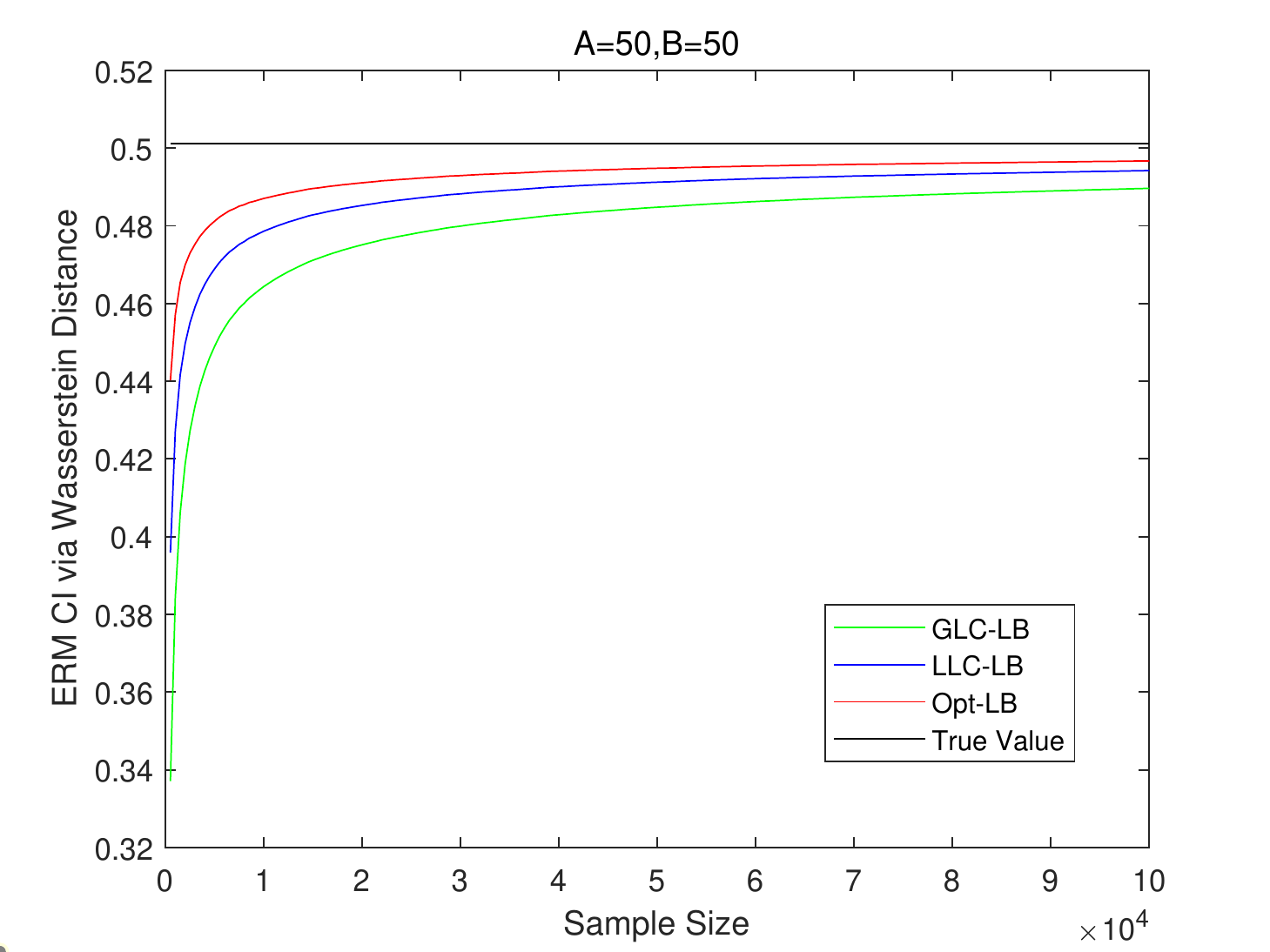}
         \caption{ERM LCB via Wasserstein distance}
     \end{subfigure}
        \caption{Comparisons of CIs for CVaR and ERM with varying sample sizes.}
        \label{fig:ci}
        \vspace{-2ex}
\end{figure}

\vspace{-1ex}
\paragraph{CVaR bandits.}    
We compare  \texttt{CVaR-UCB} with the UCB algorithm using GLC (\texttt{GLC-UCB}) and using LLC (\texttt{LLC-UCB}) in Figure 5. It shows that \texttt{CVaR-UCB} outperforms \texttt{LLC-UCB} and \texttt{GLC-UCB}.
\begin{figure}[ht]
\begin{center}
	\centerline{\includegraphics[width=0.40\textwidth]{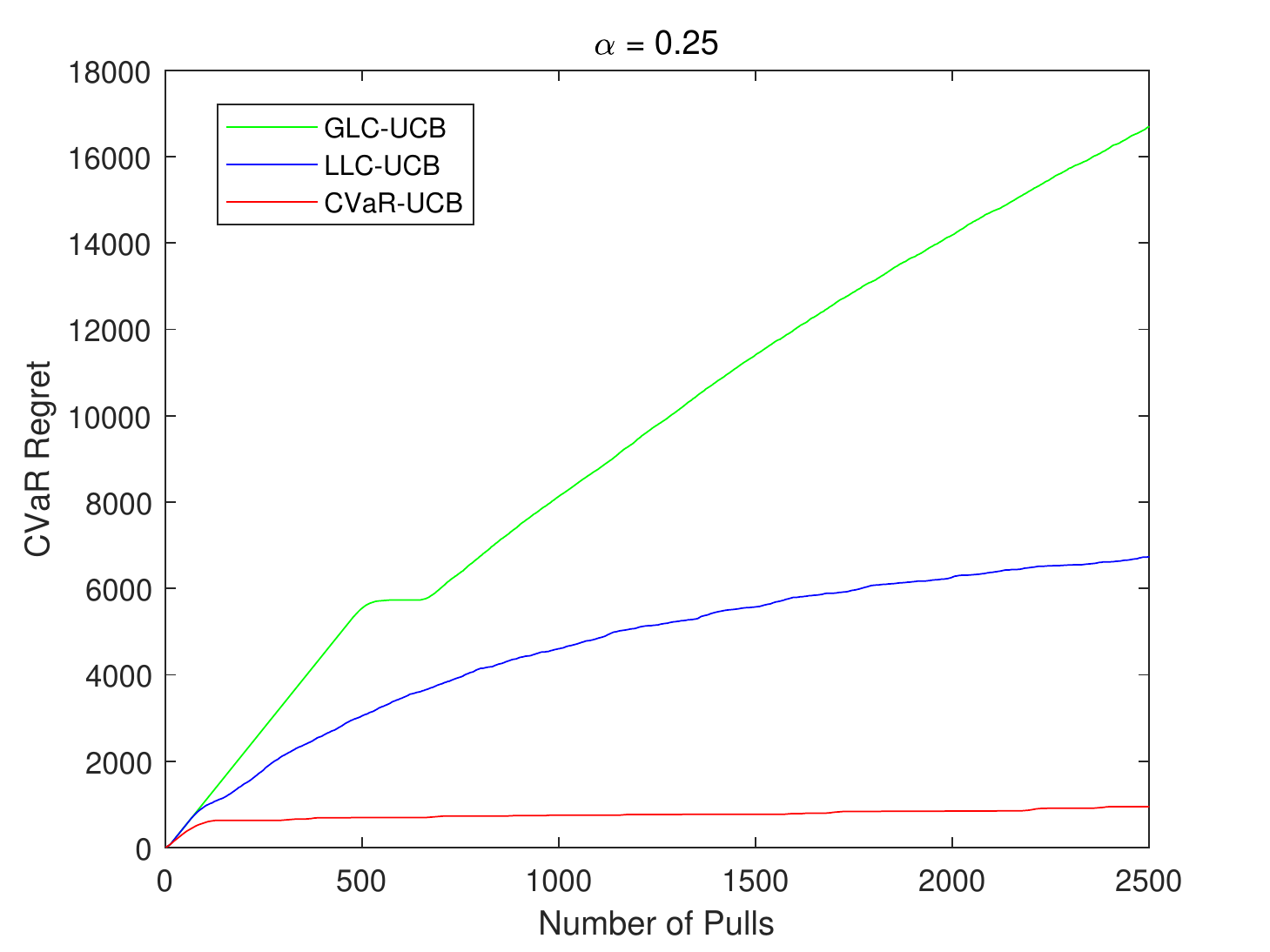}}
	\caption{Cumulative CVaR-regret of \texttt{CVaR-UCB} (red), \texttt{LLC-UCB} (blue), and \texttt{GLC-UCB} (green). }
\end{center}
\vspace{-6ex}
\end{figure}

\vspace{-1ex}
\section{Conclusion}
\label{sec:con}
We propose a distribution optimization framework to obtain improved confidence bounds of several risk measures. By viewing the solutions as certain transformations of the EDF, we design efficient algorithms to compute the confidence bounds. The tightness of our bounds is further illustrated via comparisons with the new baseline method.

The major limitation is that our framework only deals with bounded distribution in general. However, it is applicable to semi-unbounded distributions for CVaR, SRM, DRM, and RDEU. It would be interesting to study the distribution optimization framework under more general assumptions, e.g., sub-Gaussian or sub-exponential distributions. Another  promising future direction is to generalize the framework to the multivariate setting. 
One may apply the multivariate DKW inequality to the multivariate risk measures. 

\section*{Acknowledgements}
We thank all the anonymous reviewers for their helpful comments and suggestions. The work of Zhi-quan Luo was supported in part by the National Key Research and Development Project under grant 2022YFA1003900 and in part by the Guangdong Provincial Key Laboratory of Big Data Computing.

\newpage
\bibliography{example_paper.bib}
\bibliographystyle{icml2023}

\newpage
\appendix
\onecolumn

\section{Table of Notation}
$$
\begin{array}{ll}
\hline \text { Symbol } & \text { Explanation } \\
\hline \mathscr{D} & \text { The space of all CDFs} \\
\mathscr{D}([a,b]) & \text { The space of all CDFs supported on $[a,b]$} \\
B_p(F,c) & \text { The $\norm{\cdot}_p$ norm ball centered at $F$ with radius $c$ } \\
F_n & \text { The empirical distribution function corresponding to $n$ samples from $F$}  \\
c_n^p & \text { The confidence radius w.r.t. $\norm{\cdot}_p$ for $n$ samples}  \\
\rT & \text { Risk measure } \\
L_p(\rT) & \text { The global Lipschitz constant of $\rT$ w.r.t. $\norm{\cdot}_p$ } \\
L_p(\rT;F,c) & \text { The local Lipschitz constant of $\rT$ w.r.t. $\norm{\cdot}_p$ over $B_p(F_n,c_n^p)$ } \\
\overline{F^p_n} & \text { The maximizer of Formulation \ref{eqt:opt_was} or \ref{eqt:opt_inf}} \\
\underline{F^{p}_n} & \text { The minimizer of Formulation \ref{eqt:opt_was} or \ref{eqt:opt_inf}} \\
\boldsymbol{\mathrm{P}_{c}^{1}} & \text { The positive operator  w.r.t. $\norm{\cdot}_1$ with coefficient $c$ } \\
\boldsymbol{\mathrm{N}_{c}^{1}} & \text { The negative operator  w.r.t. $\norm{\cdot}_1$ with coefficient $c$ } \\
\boldsymbol{\mathrm{P}_{c}^{\infty}} & \text { The positive operator  w.r.t. $\norm{\cdot}_{\infty}$ with coefficient $c$ } \\
\boldsymbol{\mathrm{N}_{c}^{\infty}} & \text { The negative operator  w.r.t. $\norm{\cdot}_{\infty}$ with coefficient $c$ } \\
\nu & \text { Bandit instance } \\
N & \text { Number of total rounds } \\
K & \text { Number of total arms } \\
\pi & \text { Bandit algorithm } \\
s_i(t) & \text { The number of times of pulling arm $i$ up to time $t$ } \\
\hline
\end{array}
$$

\section{Risk Measures}
\label{app:rm}
\paragraph{Conditional Value at Risk (CVaR)}
CVaR \cite{rockafellar2000optimization} is a popular risk measure in  financial portfolio optimization \cite{krokhmal2002portfolio,zhu2009worst}. Formally, the CVaR value at level $\alpha\in(0,1)$ for a distribution $F$ is defined as
\[  \boldsymbol{C_{\alpha}}(F)\triangleq \inf_{\nu\in\bR}\cbrk{ \nu + \frac{1}{1-\alpha} \bE_{X\sim F}[(X-\nu)^+]}.    \]
\citet{acerbi2002coherence} showed that when $F$ is a continuous distribution,  $\boldsymbol{C_{\alpha}}(F)=\bE_{X\sim F}[X|X\ge F^{-1}(1-\alpha)]$.
\paragraph{Spectral risk measure (SRM)}
SRM is a generalization of CVaR that adopts a non-constant weighting function \cite{acerbi2002spectral}. The SRM of $F$ is defined as
\[  \boldsymbol{S_{\phi}}(F)\triangleq \int_0^1 \phi(y)F^{-1}(y)dy, \]
where $\phi:[0,1]\rightarrow[0,\infty)$ is weighting function. $\phi$ is said to be admissible if it is increasing and satisfies that $\int_0^1\phi(y)dy=1$. \citet{acerbi2002spectral} showed that $\boldsymbol{S_{\phi}}(F)$ is a coherent risk measure if $\phi$ is admissible. SRM can be viewed as a weighted average of the quantiles $F^{-1}$, with weight specified by $\phi(y)$. In fact, $\boldsymbol{S_{\phi}}(F)$ specializes in $\boldsymbol{C_{\alpha}}+(F)$ for $\phi(y)=\frac{1}{1-\alpha}\mathbb{I}\{y\ge1-\alpha\}$.
\paragraph{Distortion risk measure (DRM)}
DRM  is originally from the insurance problems and later applied to investment risks \cite{wang1996premium,wang2004cat}. For a distribution $F\in \sD([0,\infty))$, the DRM $\rho_g(F)$ is defined as
\[  \boldsymbol{\rho_g}(F)\triangleq \int_0^{\infty}g(1-F(x))dx,    \]
where $g:[0,1]\rightarrow[0,1]$ is a continuous increasing function with $g(0)=0$ and $g(1)=1$. We refer to $g$ as the distortion function. Similar to SRM, DRM reduces to CVaR by setting $g(y)=\min\prt{\frac{x}{1-\alpha},1}$.
\paragraph{Entropic risk measure (ERM)}
ERM is a well-known risk measure in risk-sensitive decision-making, including mathematical finance \cite{follmer2016stochastic}, Markovian decision processes \cite{bauerle2014more}. The ERM value of $F$ with coefficient $\beta\neq 0$ is defined as
\[  \boldsymbol{U_{\beta}}(F)\triangleq \frac{1}{\beta}\log(\bE_{X\sim F}[\exp(\beta X)])=\frac{1}{\beta}\log\prt{\int_{\bR}\exp(\beta x)dF(x)}.\]
\paragraph{Certainty equivalent}
Certainty equivalent can be viewed as a generalization of  ERM, which replace the exponential utility function with a  more general  function. Let $u$ be a continuous and strictly increasing function such that its inverse $u^{-1}$ exists, then the certainty equivalent $C_u(F)$ of $F$ associated with a function $u$ is given by
\[  \boldsymbol{C_u}(F)\triangleq u^{-1}(\bE_{X\sim F}[u(X)])=u^{-1}\prt{\int_{\bR}u(x)dF(x) }. \]
It is trivial that the certainty equivalent $C_u(F)$ reduces to the ERM when $u(x)=\exp(\beta x)$.

\paragraph{Rank dependent expected utility (RDEU)}
RDEU value \cite{quiggin2012generalized} of $F\in\mathscr{D}([a,b])$ is defined as
\begin{align*}
\boldsymbol{V}(F)\triangleq \int_{a}^{b}v(x)dw(F(x)),
\end{align*}
where $w:[0,1]\rightarrow[0,1]$ is an increasing weight function such that $w(0)=0$ and $w(1)=1$, and $v:\mathbb{R}\rightarrow\mathbb{R}$ be an (unbounded) increasing differentiable function with $v(0)=0$.

\section{Proof of Theorems}
\label{app:thm}
For CDFs $F,G \in \mathscr{D}$, the Wasserstein distance between them can be represented by their IDFs
 \begin{align*}
     W_1(F,G)=\int_{-\infty}^{\infty}\abs{F(x)-G(x)}dx  = \int_{0}^{1}\abs{F^{-1}(y)-G^{-1}(y)}dy,
 \end{align*}
With slight abuse of notation, we write $W_1(F,G)=\norm{F^{-1}-G^{-1}}_1$.
We will prove the theorems for the  more general formulations in the following. 
\begin{equation}
\label{eqt:opt_was_gen}
\begin{array}{rrclcl}
\displaystyle \max_{G\in\sD([a,b])} & \multicolumn{3}{l}{\rT(G)}\\
\textrm{s.t.} & \norm{G-F}_1\leq c
\end{array}
\end{equation}
and
\begin{equation}
\label{eqt:opt_inf_gen}
\begin{array}{rrclcl}
\displaystyle \max_{G\in\sD([a,b])} & \multicolumn{3}{l}{\rT(G)}\\
\textrm{s.t.} & \norm{G-F}_{\infty}\leq c
\end{array}
\end{equation}
Observe that $\norm{G-F}_1=\norm{G^{-1}-F^{-1}}_1$, thus we can recast  Formulation \ref{eqt:opt_was} as
\begin{equation}
\label{eqt:opt_was_inv_gen}
\begin{array}{rrclcl}
\displaystyle \max_{G\in\sD([a,b])} & \multicolumn{3}{l}{\rT(G^{-1})}\\
\textrm{s.t.} & \norm{G^{-1}-F^{-1}}_1\leq c
\end{array}
\end{equation}

\begin{proposition}
\label{prop:opt_was}
For  risk measures ERM and CE  defined in Section \ref{subsec:rm}, the optimal solution to  Formulation \ref{eqt:opt_was_gen} is given by
\begin{equation}
\begin{aligned}
    \overline{F^1} = \boldsymbol{\rP^1_{c}}F, \  \underline{F}^{1} = \boldsymbol{\rN^1_{c}}F,
\end{aligned}
\end{equation}
where $ \boldsymbol{\rP^1_{c}}/\boldsymbol{\rN^1_{c}}:\sD([a,b])\rightarrow\sD([a,b])$ is  the positive/negative operator for CDF for the Wasserstein distance, which defined as follows. 
\end{proposition}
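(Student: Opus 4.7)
The plan is to reformulate the problem in terms of inverse distribution functions (IDFs), exploiting the identity $\norm{G-F}_1 = \norm{G^{-1}-F^{-1}}_1$ and the fact that for CE (and ERM as a special case) we have $\bT(G) = u^{-1}\prt{\int_0^1 u(G^{-1}(y))\,dy}$. Since $u^{-1}$ is strictly increasing, maximizing $\bT(G)$ over the Wasserstein ball is equivalent to maximizing $\int_0^1 u(G^{-1}(y))\,dy$ over non-decreasing $G^{-1}:(0,1]\to[a,b]$ satisfying $\norm{G^{-1}-F^{-1}}_1\le c$.

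First, I would perform a \emph{monotone reduction}: for any feasible $G$, define $G'^{-1}(y)\triangleq\max\cbrk{G^{-1}(y),F^{-1}(y)}$. Then $G'^{-1}$ is non-decreasing, lies in $[a,b]$, and $\int_0^1 (G'^{-1}-F^{-1})\,dy = \int_0^1 \max(G^{-1}-F^{-1},0)\,dy \le \norm{G^{-1}-F^{-1}}_1 \le c$, so $G'$ is feasible. By monotonicity of $\rT$ (since $G'\preceq G$ in stochastic dominance) we have $\rT(G')\ge\rT(G)$. Hence WLOG $G^{-1}\ge F^{-1}$ pointwise, and the constraint collapses to the linear budget $\int_0^1(G^{-1}(y)-F^{-1}(y))\,dy\le c$ with $G^{-1}(y)\in[F^{-1}(y),b]$.

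Next I would apply a Lagrangian / pointwise optimization argument. Introducing a multiplier $\lambda\ge 0$ for the budget, the problem decouples in $y$: maximize $u(G^{-1}(y))-\lambda G^{-1}(y)$ over $G^{-1}(y)\in[F^{-1}(y),b]$. Because $u$ is convex, the maximum of the convex function $u(\cdot)-\lambda(\cdot)$ on an interval is attained at one of the endpoints. The rule selecting $b$ versus $F^{-1}(y)$ is
\[ \frac{u(b)-u(F^{-1}(y))}{b-F^{-1}(y)} \gtrless \lambda. \]
By convexity of $u$ and monotonicity of $F^{-1}$, the left-hand side is non-decreasing in $y$, so the set of $y$ assigned to $b$ is an upper interval $(y^*,1]$. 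This immediately yields the bang-bang structure
\[ G^{-1}(y) = F^{-1}(y)\mathbb{I}\{y\le y^*\} + b\,\mathbb{I}\{y> y^*\}, \]
which automatically preserves monotonicity (since $F^{-1}(y^*)\le b$) and, crucially, does not require $\lambda$ to be computed explicitly: $y^*$ is pinned down by saturating the budget $\int_{y^*}^1 (b-F^{-1}(y))\,dy = c$. Translating back to the CDF, the resulting $G$ coincides with $F$ strictly below some level and puts the mass $1-y^*$ at the endpoint $b$, which is precisely the general definition of $\boldsymbol{\rP^1_c} F$ (the empirical-case description in Theorem~\ref{thm:opt_was} being its specialization to $F=F_n$, including the ``split jump'' at $X_{(n^+)}$ arising when $y^*$ falls strictly inside the flat portion of $F$).

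The negative operator is handled symmetrically: restrict to $G^{-1}\le F^{-1}$ and \emph{minimize} $\int u(G^{-1}(y))\,dy$ under the budget $\int(F^{-1}-G^{-1})\,dy\le c$. The first-order condition with strict convexity of $u$ forces $u'(G^{-1}(y))$ to be constant on the decreased region, i.e.\ $G^{-1}$ is constant at some value $b^-$ on an upper interval $(y^*,1]$, equal to $F^{-1}$ below. The value $b^-$ and the threshold $y^*$ are jointly determined by budget saturation and the monotonicity constraint $b^-\ge F^{-1}(y^{*-})$; this matches the formula for $\boldsymbol{\rN^1_c} F$ after translating IDF back to CDF. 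The main obstacle is the careful handling of the monotonicity constraint together with the non-smoothness of $F^{-1}$ (jumps and flats), and the verification that the pointwise Lagrangian optimum is automatically monotone and that $y^*$ is well-defined; degenerate cases where $c$ exceeds the maximum possible shift $\int_0^1(b-F^{-1})\,dy$, or where $F$ already has an atom at $b$, must be treated separately but are straightforward.
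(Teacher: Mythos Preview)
Your approach is correct but takes a genuinely different route from the paper's own argument.

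The paper works on the CDF side. It integrates by parts to write $\boldsymbol{C_u}(G)=u^{-1}\prt{u(b)-\int_a^b G(x)u'(x)\,dx}$, so that maximizing $\boldsymbol{C_u}$ becomes \emph{minimizing the linear functional} $G\mapsto\int G\,u'$ over the Wasserstein ball. The monotone reduction $G^*\le F$ is obtained by a somewhat ad hoc ``reflection'' construction, and optimality of the candidate $\tilde G=\boldsymbol{\rP^1_c}F$ is then proved by a direct comparison: for any feasible $G$ on the boundary one exhibits a crossing point $g'(c)$ and uses monotonicity of $u'$ (i.e.\ convexity of $u$) to show $\int(\tilde G-G)u'\le 0$.

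You instead parametrize by the IDF, keep the \emph{convex} objective $\int_0^1 u(G^{-1})$, reduce to $G^{-1}\ge F^{-1}$ via the pointwise max (which is cleaner than the paper's reflection), and derive the bang-bang form through a Lagrangian. Both arguments ultimately hinge on convexity of $u$: the paper uses it as monotonicity of the weight $u'$ in the linearized objective, you use it as ``a convex function on an interval is maximized at an endpoint.'' Your route makes the bang-bang structure (and hence the $u$-independence of the optimizer) transparent, and it unifies naturally with the IDF argument the paper uses separately for SRM/DRM in Proposition~\ref{prop:opt_was_inv}. The paper's route is more elementary in that it avoids any appeal to duality.

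Two small points. First, your Lagrangian step implicitly relies on strong duality: when $F^{-1}$ has flats, $y^*(\lambda)$ can jump and no $\lambda$ exactly saturates the budget; you should note that on the flat, \emph{every} threshold in that interval is a Lagrangian maximizer, so one can pick $y^*$ to hit the budget. Second, for the negative operator you invoke \emph{strict} convexity of $u$ to force $G^{-1}$ constant on the decreased region, but the paper only assumes $u$ convex; the argument still goes through (the level solution remains a Lagrangian minimizer, possibly non-unique), but the wording should be adjusted.
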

Define $S^+(F,x)\triangleq \int_x^b \prt{F(z)-F(x)}dz$. Its geometric meaning is the area sandwiched between $F$ and a constant $F(x)$ from $x$ (see Figure 4 (a)). Notice that $S^+$ may be discontinuous w.r.t. $x$ since $F$ can be discontinuous w.r.t. $x$ ($S^+(x)<S^+(x^-)$). For $c>0$, we define $g^+(F,c)\triangleq\max\{x\ge a: S^+(F,x)\leq c\}\in[a,b)$. For simplicity, we drop $F$ from the notations if it is clear from the context. Given $F\in\sD([a,b])$ as input, $\mathrm{O}_c^1$ outputs a CDF 
\[      \prt{\boldsymbol{\rP^1_{c}} F}(x)\triangleq  \begin{cases}
F(g^+(c))-\frac{c-S^+(g^+(c))}{b-g^+(c)}, x\in[g^+(c),b), \\
F(x), \text{otherwise}.
\end{cases} 
\]
Analogously, we define $S^-(F,x)\triangleq \int_x^b \prt{1-F(z)}dz$ and $g^-(F,c)\triangleq\max\{x\ge a: S^-(F,x)\leq c\}\in[a,b)$. We omit $F$ for simplicity. Note that $S^-$ is continuous w.r.t. $x$. Hence $g^-(c)=\{x: S^-(x)= c\}$. For $F\in\sD([a,b])$, $\boldsymbol{\rN^1_{c}}$ outputs a CDF 
\[      \prt{\boldsymbol{\rN^1_{c}} F}(x)\triangleq  \begin{cases}
1, x\in[g^-(c),b), \\
F(x), \text{otherwise}.
\end{cases} 
\]
\begin{proposition}
\label{prop:opt_was_inv}
For  risk measures CVaR, SRM and DRM defined in Section \ref{subsec:rm}, the optimal solution to  \ref{eqt:opt_was_inv_gen} is given by
\begin{equation}
\begin{aligned}
    \prt{\overline{F^1}}^{-1} = \boldsymbol{\rP^1_{c}} F^{-1}, \  \prt{\underline{F^{1}}}^{-1} = \boldsymbol{\rN^1_{c}} F^{-1},
\end{aligned}
\end{equation}
where we overload notations to denote by $ \boldsymbol{\rP^1_{c}}/\boldsymbol{\rN^1_{c}}:(\sD([a,b]))^{-1}\rightarrow(\sD([a,b]))^{-1}$ the positive/negative operator for IDF for the Wasserstein distance.
\end{proposition}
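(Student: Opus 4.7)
My approach exploits a common integral representation of CVaR, SRM, and DRM in terms of the quantile function, and reduces the problem to a rearrangement-type optimization. First, for each of the three risk measures I would establish
\[
  \rT(F) \;=\; \int_0^1 h(y)\, F^{-1}(y)\, dy,
\]
with $h\colon [0,1]\to[0,\infty)$ non-decreasing. For SRM this is already the definition with $h=\phi$; for CVaR, $h(y)=\tfrac{1}{1-\alpha}\mathbb{I}\{y\ge 1-\alpha\}$; and for DRM, integration by parts yields $\boldsymbol{\rho_g}(F)=\int_0^1 F^{-1}(y)\,dw_g(y)$ with $w_g(y)\triangleq 1-g(1-y)$ non-decreasing and concave, so the density $w_g'(y)=g'(1-y)$ is non-decreasing wherever defined. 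Non-negativity and monotonicity of $h$ come directly from the admissibility of $\phi$ and concavity of $g$; these two properties drive the whole argument.

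Second, under this representation Formulation \ref{eqt:opt_was_inv_gen} becomes a linear maximization of $\int h\,G^{-1}$ over valid IDFs in a Wasserstein ball. A monotone exchange argument shows we may assume $G^{-1}\ge F^{-1}$ pointwise: replacing $G^{-1}$ by $\max(G^{-1},F^{-1})$ preserves the IDF property (pointwise max of non-decreasing functions is non-decreasing), does not increase $\|G^{-1}-F^{-1}\|_1$, and does not decrease the objective because $h\ge 0$. Setting $u\triangleq G^{-1}-F^{-1}\ge 0$, the problem becomes
\[
  \max \int_0^1 h(y)u(y)\,dy \quad \text{s.t.}\quad 0\le u\le b-F^{-1},\;\;\int u\le c,\;\;F^{-1}+u \text{ non-decreasing.}
\]
I claim the optimum is $u^*(y)=(b-F^{-1}(y))\,\mathbb{I}\{y\ge y^*\}$ with $y^*$ chosen so that $\int_{y^*}^1(b-F^{-1}(y))\,dy=c$; this $u^*$ is trivially feasible, since the resulting $G^{-1}$ equals $F^{-1}$ on $[0,y^*)$ and jumps up to the constant $b$ on $[y^*,1]$. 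For optimality, on $[0,y^*)$ one has $u^*=0$ so $u-u^*\ge 0$, while on $[y^*,1]$ one has $u^*=b-F^{-1}\ge u$ so $u-u^*\le 0$; combined with $h$ non-decreasing this yields the pointwise bound $h(y)(u-u^*)(y)\le h(y^*)(u-u^*)(y)$ on both pieces (the sign of $u-u^*$ is what makes this work), hence
\[
  \int_0^1 h(u-u^*)\,dy \;\le\; h(y^*)\Bigl(\int u - c\Bigr)\;\le\;0,
\]
so $\int h u\le \int h u^*$.

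Finally, I would identify $F^{-1}+u^*$ with $\boldsymbol{\rP^1_c}F^{-1}$ under the overloaded notation. The associated CDF of the optimal $G^{-1}$ is the measure of $F$ restricted to $[a,F^{-1}(y^*))$ together with an atom of mass $1-y^*$ at $b$; matching $F(g^+(c))=y^*$ and $S^+(g^+(c))=c$ (with the partial-atom correction $(c-S^+(g^+(c)))/(b-g^+(c))$ when $F$ has an atom at $g^+(c)$), this CDF coincides with $\boldsymbol{\rP^1_c}F$ from Proposition \ref{prop:opt_was}, so its inverse is $\boldsymbol{\rP^1_c}F^{-1}$. The lower-bound case is perfectly symmetric: the mirror exchange gives $G^{-1}\le F^{-1}$, the candidate optimum is $u^{\dagger}(y)=(a-F^{-1}(y))\mathbb{I}\{y\le y^\dagger\}$, and one identifies the corresponding IDF with $\boldsymbol{\rN^1_c}F^{-1}$. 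The main obstacle I anticipate is atomic $F$ and non-smooth $g$: when $F^{-1}$ has a flat step straddling the budget threshold, $y^*$ must be specified together with a partial-atom adjustment so that the budget is met exactly (this is precisely what the correction term in Proposition \ref{prop:opt_was} encodes on the CDF side); and for DRM without a density for $g'$, the rearrangement step should be phrased against the Stieltjes measure $dw_g$, leveraging concavity of $w_g$ as the measure-theoretic analog of monotonicity of $h$.
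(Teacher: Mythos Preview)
Your maximization argument is correct and matches the paper's approach: both reduce SRM, CVaR, and DRM to a linear objective $\int_0^1 h(y)G^{-1}(y)\,dy$ with non-decreasing $h$ (the paper performs the same integration by parts for DRM to obtain $h(y)=g'(1-y)$), and your rearrangement inequality $\int h(u-u^*)\le h(y^*)\bigl(\int u-c\bigr)\le 0$ is a clean equivalent of the crossing argument the paper invokes from the proof of Proposition~\ref{prop:opt_was}.

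The minimization case, however, is \emph{not} ``perfectly symmetric,'' and your candidate is wrong. Your $u^{\dagger}(y)=(a-F^{-1}(y))\,\mathbb{I}\{y\le y^{\dagger}\}$ pushes the \emph{lower} tail of $G^{-1}$ down to $a$; but since $h$ is non-decreasing, to minimize $\int h\,G^{-1}$ the budget must be spent where $h$ is largest, i.e.\ on the \emph{upper} tail. The reason the two directions are not mirror images is the non-decreasing constraint on $G^{-1}$: raising the upper tail to $b$ preserves monotonicity automatically, whereas lowering the upper tail to $a$ would destroy it, so the best feasible move is to flatten the upper tail to a constant. That is precisely what $\boldsymbol{\rN^1_c}F^{-1}$ does (see its definition immediately after the proposition): it leaves $F^{-1}$ unchanged on $(0,g^-(c)]$ and sets it to a constant just above $F^{-1}(g^-(c))$ on $(g^-(c),1]$. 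Your $u^{\dagger}$ does not match this and is strictly suboptimal whenever $h$ is non-constant (e.g.\ for $h(y)=2y$ and $F^{-1}(y)=y$ on $[0,1]$, the upper-tail flattening yields a strictly smaller objective). The min case therefore requires an optimality argument that accounts for the \emph{active} monotonicity constraint; your rearrangement step as written does not apply, and the identification with $\boldsymbol{\rN^1_c}F^{-1}$ fails.
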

We overload notations and define $S^+(F^{-1},y)\triangleq \int_y^1 \prt{b-F^{-1}(z)}dz$.  For $c>0$, we define $g^+(F^{-1},c)\triangleq\min\{y: S^+(F^{-1},y)\ge c\}\in(0,1)$. For simplicity, we drop $F$ from the notations if it is clear from the context. Given $F^{-1}\in\sD([a,b])^{-1}$ as input, $\boldsymbol{\rP^1_{c}}$ outputs a IDF 
\[      \prt{\boldsymbol{\rP^1_{c}} F^{-1}}(y)\triangleq 
\begin{cases}
1, y\in(g^+(c),1], \\
F^{-1}(y), \text{otherwise}.
\end{cases} 
\]
Analogously, we define $S^-(F^{-1},y)\triangleq \int_y^1 \prt{F^{-1}(z)-F^{-1}(y)}dz$ and $g^-(F^{-1},c)\triangleq\min\{y: S^-(F^{-1},y)\ge c\}\in(0,1)$ for $c>0$. Notice that $S^-$ may be discontinuous w.r.t. $y$ ($S^-(y+)<S^-(y)$). Given $F^{-1}\in\sD([a,b])^{-1}$ as input, $\boldsymbol{\rN^1_{c}}$ outputs a IDF 
\[      \prt{\boldsymbol{\rN^1_{c}} F^{-1}}(y)\triangleq \begin{cases}
F^{-1}(g^-(c))+\frac{S^{-1}(g^{-1}(c))-c}{1-g^-(c)}, y\in(g^-(c),1], \\
F^{-1}(y), \text{otherwise}.
\end{cases} 
\]
\begin{proposition}
\label{prop:opt_inf}
For any  risk measure, the optimal solution to Formulation \ref{eqt:opt_inf_gen} is given by
\begin{equation}
\begin{aligned}
     \overline{F^{\infty}} = \boldsymbol{\rP^{\infty}_{c}} F, \
    \underline{F}^{\infty} = \boldsymbol{\rN^{\infty}_{c}} F,
\end{aligned}
\end{equation}
where $\boldsymbol{\rP^{\infty}_{c}}/\boldsymbol{\rN^{\infty}_{c}}:\sD([a,b])\rightarrow\sD([a,b])$  is the positive/negative operator with coefficient $c>0$ for the supremum distance, which is defined as follows
\begin{align*}
    \prt{\boldsymbol{\rP^{\infty}_{c}} F}(x) &\triangleq \max\cbrk{F(x) - c\mathbb{I}\{x\in[a,b),0 },\\
    \prt{\boldsymbol{\rN^{\infty}_{c}} F}(x) &\triangleq \min\cbrk{F(x) + c\mathbb{I}\{x\in[a,b),1}.
\end{align*}
\end{proposition}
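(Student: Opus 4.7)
The plan is to exploit the single property that every admissible $\rT$ must satisfy, namely monotonicity with respect to first-order stochastic dominance, and reduce the optimization to a pointwise comparison of CDFs. Since the feasible set $B_\infty(F,c)\cap\sD([a,b])$ is specified by inequalities on $G(x)$ at each point $x$, the envelope CDFs obtained by taking the extreme admissible value at every $x$ are natural candidates for the maximizer and minimizer.

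First I would fix any feasible $G\in B_\infty(F,c)\cap\sD([a,b])$ and derive pointwise two-sided bounds on $G(x)$. For $x\in[a,b)$, the combination of $|G(x)-F(x)|\le c$ with $G(x)\in[0,1]$ yields
\[
\max\{F(x)-c,0\}\;\le\;G(x)\;\le\;\min\{F(x)+c,1\}.
\]
For $x\ge b$ any $G\in\sD([a,b])$ satisfies $G(x)=1$, and for $x<a$ it satisfies $G(x)=0$; these values coincide with $(\boldsymbol{\rP^\infty_c}F)(x)$ and $(\boldsymbol{\rN^\infty_c}F)(x)$ because of the indicator $\mathbb{I}\{x<b\}$ (respectively $\mathbb{I}\{x\ge a\}$) in their definitions. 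Thus for every admissible $G$,
\[
(\boldsymbol{\rP^\infty_c}F)(x)\;\le\;G(x)\;\le\;(\boldsymbol{\rN^\infty_c}F)(x),\qquad\forall x\in\bR.
\]

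Next I would verify that the envelopes are themselves feasible, i.e.\ that $\boldsymbol{\rP^\infty_c}F,\boldsymbol{\rN^\infty_c}F\in\sD([a,b])$ and that each lies inside $B_\infty(F,c)$. Monotonicity is preserved under taking pointwise maxima/minima with a constant, and right-continuity transfers from $F$ on $[a,b)$; the only non-routine points are $x=b$ for $\boldsymbol{\rP^\infty_c}F$ and $x=a$ for $\boldsymbol{\rN^\infty_c}F$, where the indicator causes a possible jump up to $1$ or down from $0$. Right-continuity at these points follows directly from the definition. The distance bound $\|\boldsymbol{\rP^\infty_c}F-F\|_\infty\le c$ is immediate from $\max\{F(x)-c,0\}\in[F(x)-c,F(x)]$ on $[a,b)$ and equality with $F$ elsewhere; similarly for the negative operator.

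Finally, the two pointwise inequalities above say exactly that $\boldsymbol{\rP^\infty_c}F\succeq G\succeq \boldsymbol{\rN^\infty_c}F$ in first-order stochastic dominance (recall from the paper that $Y\succeq X$ means $F_Y\le F_X$ pointwise). Applying the monotonicity axiom of $\rT$ then yields
\[
\rT(\boldsymbol{\rN^\infty_c}F)\;\le\;\rT(G)\;\le\;\rT(\boldsymbol{\rP^\infty_c}F),
\]
so the envelopes attain the maximum and minimum in Formulation \ref{eqt:opt_inf_gen}. The only real obstacle is the boundary bookkeeping: making sure the constructions remain bona fide CDFs on $[a,b]$ (in particular hitting $0$ left of $a$ and $1$ at $b$) while simultaneously being the tightest pointwise envelopes of the $\ell_\infty$-ball; this is precisely what the indicator factors $\mathbb{I}\{x<b\}$ and $\mathbb{I}\{x\ge a\}$ are engineered to handle, so once they are accounted for the argument is short.
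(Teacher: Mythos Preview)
Your proposal is correct and follows essentially the same approach as the paper: establish that $\boldsymbol{\rP^\infty_c}F$ and $\boldsymbol{\rN^\infty_c}F$ are feasible, show they are pointwise lower and upper envelopes of all $G\in B_\infty(F,c)$, and then invoke monotonicity of $\rT$ to conclude. Your write-up is in fact more thorough than the paper's, which only spells out the maximization case and relegates feasibility to ``easy to verify,'' whereas you explicitly treat both directions and the boundary bookkeeping.
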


Figure 4 
illustrates how the operators defined in Proposition \ref{prop:opt_was}-\ref{prop:opt_inf} transforms a typical continuous CDF $F\in\sD([a,b])$.

\subsection{Proof of Proposition \ref{prop:opt_was}}
We only provide the proof for CE because ERM is a special case of CE by choosing $u(x)=\exp(\beta x)$. For simplicity, we write the optimization problem as 
\begin{equation*}
\begin{array}{rrclcl}
\displaystyle \max_{G\in\sD([a,b])} & \multicolumn{3}{l}{\boldsymbol{C_u}(G)}\\
\textrm{s.t.} & \norm{G-F}_1\leq c
\end{array}
\end{equation*}
\begin{proof}
	We consider the maximization problem first. 
	The objective function can be written as
	\[  \boldsymbol{C_u}(G)=u^{-1}\prt{\int_a^b u(x)dG(x)}=u^{-1}\prt{u(b)-\int_a^b G(x)u^{\prime}(x)dx}. \]
	Since $u^{-1}$ is monotonically increasing, we can reformulate 	the original optimization problem as 
	\begin{equation*}
	\begin{array}{rrclcl}
	\displaystyle \min_{G} & \multicolumn{3}{l}{\int_a^b G(x)u^{\prime}(x)dx}\\
	\textrm{s.t.} &G\in B_1(F,c)
	\end{array}
	\end{equation*}
	The fact that the ball constraint $G\in B_1(F,c)$ is symmetric about $F$ implies  $G^*(x)\leq F(x), \forall x\in[a,b]$. Suppose $G^*(y)>F(y)$ for $y$ in a set that is a union of  disjoint intervals $\cup_i I_i=\cup_i (a_i,b_i)$, and $G^*(y)\leq F(y)$ otherwise. We can choose 
	\[      H(y)=
	\begin{cases}
	\max\{2F(y)-G^*(y),F(a_i)\}, y\in  I_i, \\
	G^*(y), \text{otherwise},
	\end{cases} 
	\]
	which satisfies the ball constraint. However, $C_u(H)>C_u(G^*)$ since $\int_{I_i}H(x)u^{\prime}(x)dx\leq \int_{I_i}F(x)u^{\prime}(x)dx<\int_{I_i}G^{*}(x)\exp(\beta x)dx$, which leads to a contradiction. Hence we have $G^*(x)\leq F(x), \forall x\in[a,b]$. Define 
	\[      \tilde{G}(x)\triangleq \prt{\boldsymbol{\rP^{1}_{c}} F}(x)=
	  \begin{cases}
F(g^+(c))-\frac{c-S^+(g^+(c))}{b-g^+(c)}, x\in[g^+(c),b), \\
F(x), \text{otherwise}.
\end{cases} 
	\]
	Notice that the (new) minimization problem has a linear objective and convex ball constraint. Thus, the optimal solution exists in the boundary of the ball constraint. It suffices to consider the following optimization problem
	\begin{equation*}
	\begin{array}{rrclcl}
	\displaystyle \min_{G} & \multicolumn{3}{l}{\int_a^b G(x)u^{\prime}(x)dx}\\
	\textrm{s.t.} &\norm{F-G}_1=c,\\
		&G \succeq F.
	\end{array}
	\end{equation*}
	It is easy to check that $\tilde{G}$ satisfies the constraints. It remains to show that  $\int_a^b \tilde{G}(x)u^{\prime}(x)dx\leq \int_a^b G(x)u^{\prime}(x)dx$ for any feasible $G$. Consider a feasible $G\neq \tilde{G}$. It is obvious that $G(b^-)\ge \tilde{G}(g^+(c))$, otherwise $\norm{F-G}_1=\int_a^b F(x)-G(x)dx>\int_a^b F(x)-\tilde{G}(x)dx=c$, which contradicts with $G \in B_1(F,c)$. If $G(b^-)=\tilde{G}(g^+(c))$, then we again have $G=\tilde{G}$. Otherwise $\norm{F-G}_1>c$. Therefore it holds that  $G(b^-)>\tilde{G}(g^+(c))$. We have $G(g^+(c))<\tilde{G}(g^+(c))$, since otherwise $\norm{F-G}_1<\norm{F-\tilde{G}}_1=c$. Since $G$ is monotonically increasing and right continuous, there exists $g^{\prime}(c)\in(g^+(c),b)$ such that $G(x)<\tilde{G}(x)=\tilde{G}(g^+(c))$ for $x\in[g^+(c),g^{\prime}(c))$ and $G(x)>\tilde{G}(x)=\tilde{G}(g^+(c))$ for $x\in(g^{\prime}(c),b)$. Moreover, $G(x)\leq\tilde{G}(x)=F(x)$ for $x\in[a,g^+(c))$.	It follows that
	\begin{align*}
	\int_a^b G(x)u^{\prime}(x)dx -  \int_a^b \tilde{G}(x)u^{\prime}(x)dx
	&=\int_{a}^{g^{\prime}(c)}(G(x)-\tilde{G}(x))u^{\prime}(x)dx+\int_{g^{\prime}(c)}^b (G(x)-\tilde{G}(x))u^{\prime}(x)dx \\
	&\ge u^{\prime}(g^{\prime}(c))\int_{g^{\prime}(c)}^b (G(x)-\tilde{G}(x))dx-u^{\prime}(g^{\prime}(c))\int_a^{g^{\prime}(c)} (\tilde{G}(x)-G(x))dx\\
	&=0.
	\end{align*}
	 The last equality follows from that
	$\norm{F-G}_1-\norm{F-\tilde{G}}_1=\int_a^b F(x)-G(x)dx-\int_a^b F(x)-\tilde{G}(x)dx=\int_a^b \tilde{G}(x)-G(x)dx=0$.
\end{proof}
Similarly, we can prove that the optimal solution to the minimization problem is given by
\[      \prt{\boldsymbol{\rN^{1}_{c}} F^{-1}}(y)\triangleq 
\begin{cases}
F^{-1}(g^-(c))+\frac{S^{-1}(g^{-1}(c))-c}{1-g^-(c)}, y\in(g^-(c),1], \\
F^{-1}(y), \text{otherwise}.
\end{cases} 
\]
\subsection{Proof of Proposition \ref{prop:opt_was_inv}}
We only provide proof for SRM and DRM because CVaR is a special case of SRM or DRM.
\subsubsection{Proof for SRM}
\begin{proof}
	The objective function is given by
	\[	\boldsymbol{M_{\phi}}(G)= \int_0^1\phi(y)G^{-1}(y)dy,\]
	which is linear in $G^{-1}$. Meanwhile, the constraint $\norm{G^{-1}-F^{-1}}_1\leq c$ is also a convex ball constraint w.r.t. $G^{-1}$. Furthermore, $\phi(y)$ is an increasing function. Using analogous arguments to the proof of Theorem \ref{thm:opt_was} completes the proof.
\end{proof}
\subsubsection{Proof for DRM}
\begin{proof}
	By a change of variable $y=F(x)$, the DRM can be represented as
	\begin{align*}
	\boldsymbol{\rho_g}(G)=\int_0^1 g(1-y)dG^{-1}(y)&=g(1-y)F^{-1}(y)|_0^1-\int_0^1 G^{-1}(y)dg(1-y)\\
	&=-a+\int_0^1 G^{-1}(y)g^{\prime}(1-y)dy.
	\end{align*}
	Again, the objective function is  linear in $G^{-1}$, and the constraint  is also a convex ball constraint. Besides, $g^{\prime}(1-y)$ is increasing in $y$ since $g$ is concave. Using analogous arguments to the proof of Theorem \ref{thm:opt_was} completes the proof.
\end{proof}

\subsection{Proof of Proposition \ref{prop:opt_inf}}
It is easy to verify that $\boldsymbol{\rP^{\infty}_{c}} F\in B_{\infty}(F,c)$. Consider an arbitrary $F\in\sD([a,b])$ and $c>0$. For $x\in[a,b)$, we have 
\[  \prt{\boldsymbol{\rP^{\infty}_{c}} F}(x) = \max\{F(x)-c,0\}\leq G(x),\ \forall G \in B_{\infty}(F,c).  \]
Besides, $\prt{\boldsymbol{\rP^{\infty}_{c}} F}(b)=G(b)=1$ for any  $G \in B_{\infty}(F,c)$. Therefore $G \preceq \boldsymbol{\rP^{\infty}_{c}} F$ for any $G\in B_{\infty}(F,c)$. The result follows from the monotonicity of $\rT$.

\section{Derivations of Results in Section \ref{sec:imp}}
\label{app:imp}
\subsection{Identification of $\boldsymbol{\nu}$}
\begin{table}[ht]
	\caption{$\boldsymbol{\upsilon}(F)$}	
	\label{tab:func}
	\centering
	\begin{tabular}{ lcr  }
		\toprule
	  RM &$\boldsymbol{\upsilon}(F)$    \\
		\midrule
		CVaR &$\frac{1}{\alpha}\norm{\mathbb{I}\{F(\cdot)\ge1-\alpha\}}_q$  \\ [0.5ex] 
        SRM &$\norm{\phi(F)}_q$  \\ [0.5ex] 
        DRM &$\norm{g^{\prime}(1-F)}_{q}$  \\ [0.5ex] 
        ERM &$\frac{\norm{\exp(\beta\cdot)}_q}{\int_a^bu(x)dF(x)}$  \\ [0.5ex] 
        CE & $\norm{u^{\prime}}_q(u^{-1})^{\prime}(\int_a^bu(x)dF(x))$ \\ [0.5ex] 
        RDEU &$\norm{w^{\prime}(F)v^{\prime}}_q$\\  
		\bottomrule
	\end{tabular}
\end{table}
We list functional $\boldsymbol{\upsilon}$ for different risk measures in Table \ref{tab:func}. The  functional $\boldsymbol{\upsilon}$ is crucial to compute the LLC and to show the tightness of our method. We provide detailed derivations of $\boldsymbol{\upsilon}$ in the following. Since the CVaR (ERM) is a special case of SRM (CE), we omit the derivation of  CVaR and ERM. Recall that  $\boldsymbol{\upsilon}((1-t)F+t G,p)$ is a functional satisfying that for any $F,G$
\[  \psi^{\prime}\prt{t;F,G}\leq 	\boldsymbol{\upsilon}((1-t)F+t G,p)\norm{F-G}_p.	 \]
\subsubsection{SRM}
Consider $\boldsymbol{M_{\phi}}$ in the form of
\[ \boldsymbol{M_{\phi}}(F)=\int_a^b\phi(F(x))x dF(x),\]
where $\phi$ is increasing and integrates to 1. Hence $\psi$ is continuously differentiable with derivative 
\begin{align*}
    \psi^{\prime}(t;F,G)&=\frac{d}{d t}\int_a^b\phi((1-t) F+t G)(x))x d((1-t) F+t G)(x)\\
    &=\frac{d}{d t}\brk{\int_a^b\phi((1-t) F+t G)(x))x dF(x)+ t \int_a^b\phi((1-t) F+t G)(x))x d(G-F)(x)}\\
    &=\underbrace{\int_a^b\phi^{\prime}((1-t) F+t G)(x))(G(x)-F(x))x dF(x)}_{(a)}+
    \underbrace{\int_a^b\phi((1-t) F+t G)(x))x d(G-F)(x)}_{(b)} \\
    &+ \underbrace{t\int_a^b\phi^{\prime}((1-t) F+t G)(x))(G(x)-F(x))x d(G-F)(x)}_{(c)}.
\end{align*} 
Since
\begin{align*}
    (b)&=\phi((1-t) F+t G)(x))x(G-F)(x)|_a^b-\int_a^b(G-F)(x) d\brk{\phi((1-t) F+t G)(x))x}\\
    &= -\int_a^b(G-F)(x) d\brk{\phi((1-t) F+t G)(x))x}
\end{align*}
and 
\begin{align*}
    (c)=\int_a^b(G-F)(x)x d\phi((1-t) F+t G)(x))-\int_a^b\phi^{\prime}((1-t) F+t G)(x))(G(x)-F(x))x dF(x)
\end{align*}
We have 
\begin{align*}
    \psi^{\prime}(t)=(a)+(b)+(c)&=-\int_a^b(G-F)(x)\phi((1-t) F+t G)(x))dx=\langle G-F, -\phi((1-t) F+t G) \rangle\\
    &\leq \norm{G-F}_p \norm{\phi((1-t) F+t G)}_q.
\end{align*}
Hence we can choose $\boldsymbol{\upsilon}(F,p)=\norm{\phi(F)}_q$.
\subsubsection{DRM}
A distortion risk measure associated with  distortion function $g$ for a distribution $F$ is 
\[ \boldsymbol{\rho_g}(F)=\int_a^b g(1-F(x))dx,\]
where $g:[0,1]\rightarrow[0,1]$ is a non-decreasing function with $g(0)=0$ and $g(1)=1$. Thus $g^{\prime}$ is non-negative. $\psi$ is continuously differentiable with derivative 
\begin{align*}
    \psi^{\prime}(t;F,G)&= \frac{d}{d t}\int_a^b g(1-(1-t)F(x)-t G(x))dx = \int_a^b g^{\prime}(1-(1-t)F(x)-t G(x))(F(x)-G(x))dx\\
    &= \langle F-G, g^{\prime}(1-(1-t)F-t G) \rangle\\
    &\leq \norm{F-G}_p \norm{g^{\prime}(1-(1-t)F-t G)}_q.
\end{align*}
Hence we can choose $\boldsymbol{\upsilon}(F,p)=\norm{g^{\prime}(1-F)}_q$.
\subsubsection{CE}
For a CE $C_u$, we define $\boldsymbol{E_{u}}(F)\triangleq\int u(x)dF(x)$. Notice that $\boldsymbol{E_u}$ is a \emph{linear} functional, i.e.,
\[  \boldsymbol{E_u}((1-t)F+t G)=(1-t)\boldsymbol{E_u}(F)+t \boldsymbol{E_u}(G)\]
for any $F,G\in D([a,b])$. The function $\psi$ for $\boldsymbol{E_u}$ satisfies
\begin{align*}
    \psi^{\prime}(t;F,G)&=\frac{d}{dt}\boldsymbol{E_u}((1-t)F+tG)=\frac{d}{dt}(1-t)\boldsymbol{E_u}(F)+t \boldsymbol{E_u}(G)=\boldsymbol{E_u}(G)-\boldsymbol{E_u}(F)\\
    &=\int_a^b u(x)dG(x)-\int_a^b u(x)dF(x)\\
    &=u(x)G(x)|_a^b-\int_a^b G(x)du(x)-u(x)F(x)|_a^b+\int_a^b F(x)du(x)\\
    &=\int_a^b (F-G)(x)du(x)\\
    &=\langle F-G, u^{\prime}\rangle,
\end{align*}
where the last equality follows from that $F(b)=G(b)=1$ and $F(a)=G(a)=0$. The certainty equivalent of a distribution $F$ with utility function $u$ is defined as $C_u(F)=u^{-1}(E_u(F))$. It follows that 
\begin{align*}
    \psi^{\prime}(t;F,G)&= \prt{u^{-1}}^{\prime}(\boldsymbol{E_u}((1-t)F+t G))\cdot \langle F-G, u^{\prime}\rangle \leq \prt{u^{-1}}^{\prime}(\boldsymbol{E_u}((1-t)F+t G))\cdot \norm{ F-G}_p \cdot \norm{u^{\prime}}_q.
\end{align*}
Hence we can choose $\boldsymbol{\upsilon}(F,p)=\prt{u^{-1}}^{\prime}(\boldsymbol{E_u}(F)) \norm{u^{\prime}}_q$.
\subsubsection{RDEU}
Let $w:[0,1]\rightarrow[0,1]$ be an \emph{increasing} weight function such that $w(0)=0$ and $w(1)=1$. Let $v:\mathbb{R}\rightarrow\mathbb{R}$ be an (unbounded) \emph{increasing differentiable} function with $u(0)=0$. The RDEU value of $F\in\mathscr{D}([a,b])$ is given by
\begin{align*}
    \boldsymbol{V}(F)&= \int_{a}^{b}v(x)dw(F(x))=v(x)w(F(x))|_a^b-\int_{a}^{b}w(F(x))dv(x)=v(b)-\int_{a}^{b}w(F(x))v^{\prime}(x)dx.
\end{align*}
We have 
\begin{align*}
    \psi^{\prime}(t;F,G)&=\frac{d}{dt} \brk{v(b)-\int_{a}^{b}w((1-t)F(x)+tG(x))v^{\prime}(x)dx}\\
    &=-\int_{a}^{b}w^{\prime}((1-t)F(x)+tG(x))(G(x)-F(x))v^{\prime}(x)dx\\
    &=\langle F-G, w^{\prime}((1-t)F+t G)v^{\prime} \rangle\\
    &\leq \norm{F-G}_p \norm{w^{\prime}((1-t)F+t G)v^{\prime}}_q
\end{align*}
Hence we can choose $\boldsymbol{\upsilon}(F,p)=\norm{w^{\prime}(F)v^{\prime}}_q$.

\subsection{Derivation of the LLC}
In Section \ref{subsec:local_lip}, we have shown that
\begin{align*}
    L_p(\rT;F_n,c_n^p)&=\sup_{F,G\in B_p(F_n,c_n^p) } \frac{\rT(F)-\rT(G)}{\norm{F-G}_p} \\
    &= \sup_{F,G\in B_p(F_n,c_n^p) } \frac{\psi(1;F,G)-\psi(0;F,G)}{\norm{F-G}_p}\\
    & \leq \sup_{F,G\in B_p(F_n,c_n^p), t\in[0,1] } \frac{\psi^{\prime}(t;F,G)}{\norm{F-G}_p} \\
    &\leq \sup_{F,G\in B_p(F_n,c_n^p), t\in[0,1] } \boldsymbol{\upsilon}((1-t)F+t G,p) \\
    &= \sup_{F\in B_p(F_n,c_n^p)} \boldsymbol{\upsilon}(F,p).
\end{align*}
We have derived the functional $\boldsymbol{\upsilon}$ in the previous subsection, which enables us to obtain an upper bound on $L_p(\rT;F,c)$. We only provide the derivations for DRM, CE, and RDEU here because SRM is presented as an illustrating example in Section \ref{subsec:local_lip}. We first give a useful fact to deal with the Wasserstein distance.
\begin{fact}
\label{fct:apprx_ball}
Fix  real numbers $a<b$ For any $G\in\mathscr{D}([a,b])$ and any $c$, there exists a continuous and monotonically  increasing CDF $F\in\mathscr{D}([a,b])$ such that $\norm{F-G}_1\leq c$.
\end{fact}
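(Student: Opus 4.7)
The plan is to build $F$ as a three-step perturbation of $G$---shrink the support, smooth by convolution with a uniform kernel, then mix with the uniform distribution on $[a,b]$---and to control $\|F-G\|_1$ by the triangle inequality. We may assume $c>0$, as the statement is otherwise vacuous.

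Pick a parameter $\delta\in(0,(b-a)/2)$ to be chosen later. First I would \emph{shrink}: let $\phi_\delta(x)\triangleq(a+\delta)+(1-2\delta/(b-a))(x-a)$, an affine bijection from $[a,b]$ onto $[a+\delta,b-\delta]$ with $\sup_{x\in[a,b]}|\phi_\delta(x)-x|\le\delta$, and let $G_1$ be the pushforward of $G$ under $\phi_\delta$. The coupling $(X,\phi_\delta(X))$ with $X\sim G$ then gives $\|G-G_1\|_1\le\delta$. Next I would \emph{smooth}: with $Y\sim\mathrm{Uniform}[-\delta,\delta]$ independent of $X_1\sim G_1$, let $G_\delta$ be the law of $X_1+Y$. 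Since $G_1$ is supported in $[a+\delta,b-\delta]$, we have $G_\delta\in\sD([a,b])$, and $G_\delta$ admits the bounded density $f_{G_\delta}(z)=\tfrac{1}{2\delta}\bigl[G_1(z+\delta)-G_1^{-}(z-\delta)\bigr]$, so its CDF is absolutely continuous; the same coupling gives $\|G_1-G_\delta\|_1\le\bE|Y|=\delta/2$. Finally I would \emph{mix}: let $U$ denote the CDF of the uniform on $[a,b]$ and set $F\triangleq(1-\epsilon)G_\delta+\epsilon U$ for some small $\epsilon\in(0,1)$. As a convex combination of two continuous CDFs, $F$ is continuous, and its density dominates $\epsilon/(b-a)>0$ on $(a,b)$, so $F$ is strictly monotonically increasing on $[a,b]$.

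Assembling these three estimates through the triangle inequality together with the identity $\|G_\delta-F\|_1=\epsilon\|G_\delta-U\|_1$ yields
$$\|F-G\|_1 \le \|G-G_1\|_1+\|G_1-G_\delta\|_1+\epsilon\|G_\delta-U\|_1 \le \tfrac{3}{2}\delta+\epsilon(b-a).$$
Choosing $\delta\le c/3$ and $\epsilon\le c/(2(b-a))$ then forces $\|F-G\|_1\le c$, as required. The main obstacle is that a naive convolution would push mass outside $[a,b]$; the preliminary affine contraction is exactly what is needed to keep $G_\delta$ supported in $[a,b]$, and the subsequent mixture with $U$ takes care of the case where $G_\delta$ fails to be strictly increasing (e.g.\ when $G$ is a Dirac mass and $G_\delta$'s density vanishes on a large subinterval).
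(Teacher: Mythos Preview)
Your proof is correct. The paper, however, does not supply a proof of this fact at all: it is stated as a standalone ``Fact'' in the appendix and then immediately invoked, the authors evidently treating the density of continuous strictly increasing CDFs in $\sD([a,b])$ under $\norm{\cdot}_1$ as a routine observation.

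Your three-step construction (affine contraction to make room, convolution with a uniform kernel to kill jumps, mixture with the uniform to force strict monotonicity) is a clean and fully rigorous way to realise this density statement. Each $\norm{\cdot}_1$ estimate you give is valid, and the final choice of $\delta$ and $\epsilon$ closes the argument. One could shorten the construction somewhat---for instance, the contraction and smoothing can be merged by convolving directly with a kernel supported on $[0,\delta]$ (shifting mass only rightward keeps the support in $[a,b]$ provided $G$ is first translated left by $\delta/2$, or one simply accepts a support of $[a,b+\delta]$ and rescales)---but your version has the virtue of making every constraint ($F\in\sD([a,b])$, continuity, strict monotonicity) transparently satisfied by a dedicated step. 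Since the paper omits any argument, your proposal in fact fills a small gap in the exposition rather than duplicating it.
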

\subsubsection{DRM}
For DRM, $\boldsymbol{\upsilon}(F,p)=\norm{g^{\prime}(1-F)}_q$. Since  $g$ is concave,  $g^{\prime}$ is monotonically decreasing.
\paragraph{Case $p=1$.} 
By Fact \ref{fct:apprx_ball}, there exists a continuous CDF $F\in B_1(F_n,r_n^1)$. Such $F$ attains all possible value in $[0,1]$, hence
\[ \max_{F\in B_1(F_n,c_n^1)}\norm{g^{\prime}(1-F)}_{\infty}=\max_{F\in B_1(F_n,c_n^1)}\max_{x\in\mathbb{R}}g^{\prime}(1-F(x))=
\max_{y\in[0,1]}g^{\prime}(y)=\norm{g^{\prime}}_{\infty}.\]

\paragraph{Case $p=\infty$.} Since
\[ \max_{F\in B_{\infty}(F_n,c_n^{\infty})}\norm{g^{\prime}(1-F)}_1=\max_{F\in B_{\infty}(F_n,c_n^{\infty})}\int_a^b g^{\prime}(1-F(x))dx,  \]
it follows that
\[ \max_{F\in B_{\infty}(F_n,c_n^{\infty})}\int_a^b g^{\prime}(1-F(x))dx=\norm{g^{\prime}(1-\boldsymbol{\rN^{\infty}_{c_n^{\infty}}}F_n)}_1 \]

\subsubsection{CE}
For CE, $\boldsymbol{\upsilon}(F,p)=\prt{u^{-1}}^{\prime}(\boldsymbol{E_u}(F)) \norm{u^{\prime}}_q$. Since $u$ is convex, $\prt{u^{-1}}^{\prime}$ is a decreasing function. It follows that
\begin{align*}
    \sup_{F\in B_p(F_n,c_n^p)} \prt{u^{-1}}^{\prime}(\boldsymbol{E_u}(F)) \norm{u^{\prime}}_q = (u^{-1})^{\prime}(\boldsymbol{E_u}(\boldsymbol{\rN_{c_n^p}^p} F_n))\norm{u^{\prime}}_q.
\end{align*}

\subsubsection{RDEU}
For RDEU, $\boldsymbol{\upsilon}(F,p)=\norm{w^{\prime}(F)v^{\prime}}_q$. Recall that $w:[0,1]\rightarrow[0,1]$ is an increasing weight function with $w(0)=0$ and $w(1)=1$, and $v:\mathbb{R}\rightarrow\mathbb{R}$ is an (unbounded) increasing differentiable function with $v(0)=0$. We further assume that $w$ is convex so that $w^{\prime}$ is an increasing function.
\paragraph{Case $p=1$.} 
By Fact \ref{fct:apprx_ball}, there exists a continuous CDF $F\in B_1(F_n,r_n^1)$. Such $F$ attains all possible value in $[0,1]$, hence
\[ \max_{F\in B_1(F_n,c_n^1)}\norm{w^{\prime}(F)v^{\prime}}_{\infty}=\max_{F\in B_1(F_n,c_n^1)}\max_{x\in\mathbb{R}}w^{\prime}(F(x))v^{\prime}(x),\]
which might not admit closed form in general. Meanwhile, the GLC (holds without monotonicity)
\[  L_{1}=\max_{F\in\mathscr{D}([a,b])}\norm{w^{\prime}(F)v^{\prime}}_{\infty}=\norm{w^{\prime}}_{\infty}\norm{v^{\prime}}_{\infty}.\]
\paragraph{Case $p=\infty$.} The following holds
\begin{align*}
    \max_{F\in B_{\infty}(F_n,c_n^{\infty})}\norm{w^{\prime}(F)v^{\prime}}_{1}&=\max_{F\in B_{\infty}(F_n,c_n^{\infty})}\int_a^b w^{\prime}(F(x))v^{\prime}(x)dx=\norm{w^{\prime}(\boldsymbol{\rN_{c_n^{\infty}}^{\infty}}F_n)v^{\prime}}_1.
\end{align*}
The GLC (holds without monotonicity) is given by
\[  L_{\infty}=\max_{F\in\mathscr{D}([a,b])}\norm{w^{\prime}(F)v^{\prime}}_1=\norm{w^{\prime}}_{\infty}\norm{v^{\prime}}_1.\]
The last equality uses that for any $d\in[0,1]$, $F(x)=d$ for any $x\in(a,b)$ if $F=d\psi_a+(1-d)\psi_b$ (scaled Bernoulli distribution).
\subsection{Improved Confidence Bound}
In Section \ref{subsec:imp}, we established that 
\begin{align*}
\rT\prt{\boldsymbol{\boldsymbol{\rP_{c_n^p}^p}} F_n}-\rT(F_n)&=\psi\prt{1;F_n,\boldsymbol{\rP_{c_n^p}^p} F_n}-\psi\prt{0;F_n,\boldsymbol{\rP_{c_n^p}^p} F_n} \leq \max_{t\in[0,1]}\psi^{\prime}\prt{t;F_n,\boldsymbol{\rP_{c_n^p}^p} F_n}\\
&\leq \max_{t\in[0,1]}\boldsymbol{\upsilon}\prt{(1-t)F_n+t \boldsymbol{\rP_{c_n^p}^p} F_n,p} c_n^p
\end{align*}
as well as
\begin{align*}
\rT(F_n)-\rT\prt{\boldsymbol{\rN_{c_n^p}^p} F_n}\leq \max_{t\in[0,1]}\boldsymbol{\upsilon}\prt{(1-t) \boldsymbol{\rN_{c_n^p}^p} F_n+t F_n,p} c_n^p.
\end{align*}
The above inequalities lead to the following bounds. 
\subsubsection{Supremum distance}
\begin{proposition}[CE]
	For $F\in\sD([a,b])$, it holds that (assume $\beta>0$)
	\begin{align*}
	\boldsymbol{C_{u}}(\boldsymbol{\rP_c^{\infty}} F)-\boldsymbol{C_{u}}(F) &\leq \norm{u^{\prime}}_{1}(u^{-1})^{\prime}\prt{\int_a^bu(x)d F(x)}\cdot c \leq \norm{u^{\prime}}_{1}(u^{-1})^{\prime}\prt{\int_a^bu(x)d \boldsymbol{\rN_c^{\infty}}F(x)}\cdot c, \\
	\boldsymbol{C_{u}}(F) - \boldsymbol{C_{u}}(\boldsymbol{\rN_c^{\infty}}F) &\leq \norm{u^{\prime}}_{1}(u^{-1})^{\prime}\prt{\int_a^bu(x)d \boldsymbol{\rN_c^{\infty}}F(x)}\cdot c.
	\end{align*}
\end{proposition}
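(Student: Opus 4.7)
The plan is to apply the general improvement inequality derived in Section \ref{subsec:imp} to the certainty equivalent and exploit the explicit form of the functional $\boldsymbol{\upsilon}$ computed in Appendix \ref{app:imp}. Specifically, for the upper bound we start from
\[
\boldsymbol{C_u}(\boldsymbol{\rP_c^\infty}F) - \boldsymbol{C_u}(F) \le \max_{t\in[0,1]} \boldsymbol{\upsilon}\!\left((1-t)F + t\,\boldsymbol{\rP_c^\infty}F,\infty\right)\cdot c,
\]
and substitute $\boldsymbol{\upsilon}(G,\infty) = (u^{-1})'(E_u(G))\,\|u'\|_1$, obtained in Appendix \ref{app:imp} via the dual pairing $\psi'(t) = (u^{-1})'(E_u((1-t)F+tG))\langle F-G, u'\rangle$ and H\"older's inequality with $q=1$.

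The next step is to identify where the maximum over $t$ is attained. I would use two structural facts. First, $E_u$ is \emph{affine} in the CDF: $E_u((1-t)F + t\,\boldsymbol{\rP_c^\infty}F) = (1-t)E_u(F) + t\,E_u(\boldsymbol{\rP_c^\infty}F)$. Second, since $\boldsymbol{\rP_c^\infty}F \succeq F$ in stochastic order (its CDF lies pointwise below $F$) and $u$ is increasing, $E_u(\boldsymbol{\rP_c^\infty}F) \ge E_u(F)$, so the above affine function is nondecreasing in $t$. Convexity of $u$ makes $u^{-1}$ concave, hence $(u^{-1})'$ is nonincreasing, so the composition $t \mapsto (u^{-1})'(E_u((1-t)F + t\,\boldsymbol{\rP_c^\infty}F))$ is nonincreasing, attaining its maximum at $t=0$. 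This immediately yields the first inequality $\boldsymbol{C_u}(\boldsymbol{\rP_c^\infty}F) - \boldsymbol{C_u}(F) \le (u^{-1})'(E_u(F))\,\|u'\|_1\cdot c$.

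For the intermediate comparison, I would observe that $\boldsymbol{\rN_c^\infty}F \preceq F$ stochastically, so $E_u(\boldsymbol{\rN_c^\infty}F) \le E_u(F)$, and applying the monotonicity of $(u^{-1})'$ once more gives $(u^{-1})'(E_u(F)) \le (u^{-1})'(E_u(\boldsymbol{\rN_c^\infty}F))$, completing the chain. The lower-bound statement is handled by symmetry: starting from
\[
\boldsymbol{C_u}(F) - \boldsymbol{C_u}(\boldsymbol{\rN_c^\infty}F) \le \max_{t\in[0,1]} \boldsymbol{\upsilon}\!\left((1-t)\boldsymbol{\rN_c^\infty}F + tF,\infty\right)\cdot c,
\]
the same affine plus stochastic-dominance argument shows $E_u$ is nondecreasing along the path $t \mapsto (1-t)\boldsymbol{\rN_c^\infty}F + tF$, so the maximum of the decreasing quantity $(u^{-1})'(\cdot)$ is attained at $t=0$, evaluating to $(u^{-1})'(E_u(\boldsymbol{\rN_c^\infty}F))$.

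I do not expect a serious obstacle: the result is essentially a specialization of the framework bound plus three clean monotonicity observations (stochastic dominance of the operators, linearity of $E_u$, and concavity of $u^{-1}$). The one delicate point worth writing out carefully is verifying that the convex combination $(1-t)F + t\,\boldsymbol{\rP_c^\infty}F$ remains in $\sD([a,b])$ and that the chain-rule computation used to derive $\boldsymbol{\upsilon}$ for CE in Appendix \ref{app:imp} applies along this path, so that the maximal-$t$ bound genuinely controls $\boldsymbol{C_u}(\boldsymbol{\rP_c^\infty}F) - \boldsymbol{C_u}(F)$; both follow from convexity of $\sD([a,b])$ and the continuity/differentiability of $u,u^{-1}$ assumed in the definition of CE.
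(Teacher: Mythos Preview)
Your proposal is correct and follows essentially the same approach as the paper. The paper does not give a standalone proof of this proposition; it simply states in Appendix \ref{app:imp} that ``the above inequalities lead to the following bounds,'' referring to the general framework bound $\rT(\overline{F})-\rT(F)\le \max_{t}\boldsymbol{\upsilon}((1-t)F+t\overline{F})\cdot c$ together with the identification $\boldsymbol{\upsilon}(F,\infty)=(u^{-1})'(\boldsymbol{E_u}(F))\|u'\|_1$ and the observation that $(u^{-1})'$ is decreasing when $u$ is convex. Your write-up makes these implicit steps explicit and correct: linearity of $\boldsymbol{E_u}$, stochastic dominance of $\boldsymbol{\rP_c^\infty}F$ and $\boldsymbol{\rN_c^\infty}F$ relative to $F$, and monotonicity of $(u^{-1})'$ combine exactly as you describe to locate the maximum at $t=0$ in each case.
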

\begin{corollary}[ERM]
	For $F\in\sD([a,b])$, it holds that (assume $u$ convex)
	\begin{align*}
	\boldsymbol{U_{\beta}}(\boldsymbol{\rP_c^{\infty}}F)-\boldsymbol{U_{\beta}}(F) &\leq \frac{\exp(\beta b)-\exp(\beta a)}{\beta \int_a^b\exp(\beta x)d F(x)}\cdot c\leq \frac{L_{\infty}(\boldsymbol{U_{\beta}})c}{\int_a^b\exp(\beta x)d \boldsymbol{\rN_c^{\infty}} F(x)}, \\
	\boldsymbol{U_{\beta}}(F) - \boldsymbol{U_{\beta}}(\boldsymbol{\rN_c^{\infty}}F) &\leq \frac{L_{\infty}(U_{\beta})c}{\int_a^b\exp(\beta x)d \boldsymbol{\rN_c^{\infty}} F(x)}.
	\end{align*}
\end{corollary}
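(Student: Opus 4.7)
The corollary is a direct specialization of the preceding CE proposition to the exponential utility $u(x) = \exp(\beta x)$, which is convex whenever $\beta > 0$ (the hypothesis ``$u$ convex'' in the statement really amounts to $\beta>0$). The plan is a routine substitution followed by a single stochastic-dominance comparison; no genuinely new machinery is needed beyond what the CE proposition already supplies.

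First I would compute the two ingredient quantities in the CE proposition for this choice of $u$. Since $u^{\prime}(x) = \beta\exp(\beta x)$ is nonnegative on $[a,b]$,
\[
\norm{u^{\prime}}_1 = \int_a^b \beta \exp(\beta x)\, dx = \exp(\beta b) - \exp(\beta a),
\]
and since $u^{-1}(y) = \beta^{-1}\log y$ we have $(u^{-1})^{\prime}(y) = 1/(\beta y)$. Substituting into the upper-bound portion of the CE proposition gives
\[
\boldsymbol{U_{\beta}}\!\prt{\boldsymbol{\rP_c^{\infty}} F} - \boldsymbol{U_{\beta}}(F) \leq \norm{u^{\prime}}_1\,(u^{-1})^{\prime}\!\prt{\int_a^b u(x)\, dF(x)}\, c = \frac{\exp(\beta b) - \exp(\beta a)}{\beta \int_a^b \exp(\beta x)\, dF(x)}\, c,
\]
which is exactly the first asserted inequality. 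The second line of the corollary (for the negative direction) is obtained by the identical substitution applied to the lower-bound part of the CE proposition, with $F$ in the denominator replaced by $\boldsymbol{\rN_c^{\infty}} F$.

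For the remaining middle inequality it suffices to show that the denominator strictly shrinks when $F$ is replaced by $\boldsymbol{\rN_c^{\infty}} F$. By construction $(\boldsymbol{\rN_c^{\infty}} F)(x) \geq F(x)$ pointwise, which in the paper's order convention means $\boldsymbol{\rN_c^{\infty}} F \preceq F$; since the map $x \mapsto \exp(\beta x)$ is increasing for $\beta>0$, the usual monotonicity of expectation under first-order stochastic dominance yields
\[
\int_a^b \exp(\beta x)\, d\boldsymbol{\rN_c^{\infty}} F(x) \leq \int_a^b \exp(\beta x)\, dF(x).
\]
Inverting these strictly positive integrals and identifying $L_{\infty}(\boldsymbol{U_{\beta}})$ with the ERM-specific constant $\norm{u^{\prime}}_1/\beta = (\exp(\beta b) - \exp(\beta a))/\beta$ closes the chain.

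The main obstacle, such as it is, is purely notational rather than mathematical: one must read $L_{\infty}(\boldsymbol{U_{\beta}})$ in the corollary as $\norm{u^{\prime}}_1/\beta$ rather than as the translation-invariant GLC $(\exp(\beta(b-a))-1)/\beta$ listed in Table~\ref{tab:lip}. The two coincide when $a=0$; in general one must interpret the symbol so that the stated comparison is the sharp one coming from the $(u^{-1})^{\prime}$-monotonicity step in the CE proposition. Modulo this reading, every step is a one-line calculation and the corollary follows immediately.
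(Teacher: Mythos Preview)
Your proposal is correct and matches the paper's approach: the corollary is stated without proof in the paper, being an immediate specialization of the preceding CE proposition with $u(x)=\exp(\beta x)$, and your substitution of $\norm{u'}_1=\exp(\beta b)-\exp(\beta a)$ and $(u^{-1})'(y)=1/(\beta y)$ together with the stochastic-dominance comparison is exactly the intended route. Your notational observation is also accurate---the symbol $L_{\infty}(\boldsymbol{U_{\beta}})$ here must be read as $(\exp(\beta b)-\exp(\beta a))/\beta$ (the direct output of the CE bound), which differs from the translation-invariant GLC entry $(\exp(\beta(b-a))-1)/\beta$ in Table~\ref{tab:lip} unless $a=0$.
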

\begin{proposition}[SRM]
	For $F\in\sD([a,b])$, it holds that (assume $\phi$ increasing)
	\begin{align*}
	\boldsymbol{M_{\phi}}(\boldsymbol{\rP_c^{\infty}}F)-\boldsymbol{M_{\phi}}(F) &\leq \int_a^b\phi(F(x))dx\cdot c\leq\int_a^b\phi(\boldsymbol{\rN_c^{\infty}}F(x))dx\cdot c=L_{\infty}(\boldsymbol{M_{\phi}};F,c)c \\
	\boldsymbol{M_{\phi}}(F) - \boldsymbol{M_{\phi}}(\boldsymbol{\rN_c^{\infty}}F) &\leq\int_a^b\phi(\boldsymbol{\rN_c^{\infty}}F(x))dx\cdot c=L_{\infty}(\boldsymbol{M_{\phi}};F,c)c.
	\end{align*}
\end{proposition}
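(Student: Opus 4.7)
The plan is to invoke the derivation of $\psi^{\prime}(t;F,G)$ for SRM that appears earlier in Appendix~\ref{app:imp}, then exploit the monotonicity of $\phi$ together with the fact that the operators $\boldsymbol{\rP_c^{\infty}}$ and $\boldsymbol{\rN_c^{\infty}}$ preserve a pointwise ordering between CDFs. Recall the derivation gave
\[
\psi^{\prime}(t;F,G)=\int_a^b (F-G)(x)\,\phi\bigl((1-t)F(x)+t G(x)\bigr)\,dx,
\]
so that $\boldsymbol{M_{\phi}}(G)-\boldsymbol{M_{\phi}}(F)=\int_0^1 \psi^{\prime}(t;F,G)\,dt$. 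This is the only calculus ingredient needed.

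For the upper bound, set $G=\boldsymbol{\rP_c^{\infty}}F$. By the definition of $\boldsymbol{\rP_c^{\infty}}$ we have $G(x)\leq F(x)$ for every $x$, hence $(1-t)F(x)+t G(x)\leq F(x)$ for all $t\in[0,1]$. Since $\phi$ is increasing, $\phi\bigl((1-t)F(x)+t G(x)\bigr)\leq \phi(F(x))$. Plugging this into the formula for $\psi^{\prime}$ and using Hölder's inequality with $\|F-G\|_{\infty}\leq c$ yields
\[
\boldsymbol{M_{\phi}}(\boldsymbol{\rP_c^{\infty}}F)-\boldsymbol{M_{\phi}}(F)\leq \int_a^b (F-G)(x)\,\phi(F(x))\,dx\leq c\int_a^b \phi(F(x))\,dx,
\]
which is the first inequality. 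The second inequality in the chain follows at once from $F(x)\leq \boldsymbol{\rN_c^{\infty}}F(x)$ and monotonicity of $\phi$, and the final equality is just the identification of $L_{\infty}(\boldsymbol{M_{\phi}};F,c)=\|\phi(\boldsymbol{\rN_c^{\infty}}F)\|_1$ established in the LLC derivation for SRM.

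The LCB bound is obtained by a symmetric argument with $G=\boldsymbol{\rN_c^{\infty}}F$. Now $G(x)\geq F(x)$ pointwise, so $(1-t)F(x)+t G(x)\leq G(x)=\boldsymbol{\rN_c^{\infty}}F(x)$, and monotonicity of $\phi$ gives $\phi\bigl((1-t)F+t G\bigr)\leq \phi(\boldsymbol{\rN_c^{\infty}}F)$. Since $\boldsymbol{M_{\phi}}(F)-\boldsymbol{M_{\phi}}(G)=-\int_0^1\psi^{\prime}(t;F,G)\,dt=\int_0^1\int_a^b(G-F)(x)\,\phi((1-t)F+tG)(x))\,dx\,dt$ and $G-F\geq 0$ with $\|G-F\|_{\infty}\leq c$, we conclude
\[
\boldsymbol{M_{\phi}}(F)-\boldsymbol{M_{\phi}}(\boldsymbol{\rN_c^{\infty}}F)\leq c\int_a^b\phi(\boldsymbol{\rN_c^{\infty}}F(x))\,dx=L_{\infty}(\boldsymbol{M_{\phi}};F,c)\,c.
\]

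There is no real obstacle: the only subtlety is noticing that it is \emph{the shape of the operator} (not just the ball radius) that drives the improvement, specifically that $\boldsymbol{\rP_c^{\infty}}F\preceq F$ pointwise, which lets us evaluate $\phi$ at $F$ rather than at $\boldsymbol{\rN_c^{\infty}}F$ in the UCB bound, thereby producing a strictly tighter quantity than the LLC $\|\phi(\boldsymbol{\rN_c^{\infty}}F)\|_1$. The Hölder step is routine because both $F-G$ and $\phi\circ(\cdot)$ are nonnegative under the chosen orderings.
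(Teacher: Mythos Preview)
Your proposal is correct and follows essentially the same approach as the paper: you invoke the identity $\psi'(t;F,G)=\int_a^b(F-G)(x)\,\phi((1-t)F+tG)(x)\,dx$ from the SRM derivation in Appendix~\ref{app:imp}, use the pointwise orderings $\boldsymbol{\rP_c^{\infty}}F\leq F\leq \boldsymbol{\rN_c^{\infty}}F$ together with monotonicity of $\phi$ to bound the $\phi$-factor, and then apply $\|F-G\|_\infty\leq c$. The only cosmetic difference is that you integrate $\psi'$ over $[0,1]$ whereas the paper's framework bounds $\psi(1)-\psi(0)$ by $\max_{t}\psi'(t)$; both routes yield the same final estimate since the bound on $\psi'(t)$ is uniform in $t$.
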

\begin{proposition}[DRM]
	For $F\in\sD([a,b])$, it holds that (assume $g$ concave)
	\begin{align*}
	\boldsymbol{\rho_{g}}(\boldsymbol{\rP_c^{\infty}}F)-\boldsymbol{\rho_{g}}(F) &\leq \int_a^b g^{\prime}(1-F(x))dx\cdot c\leq\int_a^b g^{\prime}(1-\boldsymbol{\rN_c^{\infty}}F(x))dx\cdot c=L_{\infty}(\boldsymbol{\rho_{g}};F,c)c, \\
	\boldsymbol{\rho_{g}}(F)-\boldsymbol{\rho_{g}}(\boldsymbol{\rN_c^{\infty}}F) &\leq\int_a^b g^{\prime}(1-\boldsymbol{\rN_c^{\infty}}F(x))dx\cdot c=L_{\infty}(\boldsymbol{\rho_{g}};F,c)c.
	\end{align*}
\end{proposition}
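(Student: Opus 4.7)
The plan is to specialize the interpolation-differentiation-Hölder scheme from Section~\ref{subsec:imp} to DRM, using the functional $\boldsymbol{\upsilon}(F,\infty)=\norm{g'(1-F)}_1$ already identified in Appendix~\ref{app:imp}, and then to sharpen the resulting $L^1$ integrand pointwise by exploiting the stochastic ordering of $F$ relative to $\boldsymbol{\rP_c^{\infty}} F$ and $\boldsymbol{\rN_c^{\infty}} F$, combined with the fact that $g'$ is nonincreasing (which is dual to the concavity of $g$). Throughout, I use $\norm{F-\boldsymbol{\rP_c^{\infty}}F}_{\infty}\le c$ and $\norm{F-\boldsymbol{\rN_c^{\infty}}F}_{\infty}\le c$, which are immediate from Theorem~\ref{thm:opt_inf}.

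For the first chain, I would set $\psi(t)\triangleq \boldsymbol{\rho_g}\prt{(1-t)F+t\,\boldsymbol{\rP_c^{\infty}}F}$ so that $\boldsymbol{\rho_g}(\boldsymbol{\rP_c^{\infty}}F)-\boldsymbol{\rho_g}(F)=\int_0^1\psi'(t)\,dt$. The differentiation formula derived for DRM in Appendix~\ref{app:imp} gives $\psi'(t)=\int_a^b(F-\boldsymbol{\rP_c^{\infty}}F)(x)\cdot g'\prt{1-[(1-t)F+t\,\boldsymbol{\rP_c^{\infty}}F](x)}\,dx$, and Hölder with $(p,q)=(\infty,1)$ yields $\psi'(t)\le c\cdot\norm{g'(1-(1-t)F-t\,\boldsymbol{\rP_c^{\infty}}F)}_1$. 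By definition $\boldsymbol{\rP_c^{\infty}}F\le F$ pointwise, so the convex combination is also dominated by $F$; taking $1-(\cdot)$ and using the monotonicity of $g'$ gives the pointwise bound $g'(1-(1-t)F-t\,\boldsymbol{\rP_c^{\infty}}F)\le g'(1-F)$. Integrating over $t\in[0,1]$ delivers the first inequality. The middle inequality then follows because $\boldsymbol{\rN_c^{\infty}}F\succeq F$ implies $1-\boldsymbol{\rN_c^{\infty}}F\le 1-F$, to which the monotonicity of $g'$ reverses the order, while the final equality is exactly the expression for $L_{\infty}(\boldsymbol{\rho_g};F,c)$ recorded in Table~\ref{tab:lip}.

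The second chain is entirely symmetric. Setting $\psi(t)\triangleq\boldsymbol{\rho_g}\prt{(1-t)\boldsymbol{\rN_c^{\infty}}F+tF}$, the same Hölder step produces $\psi'(t)\le c\cdot\norm{g'(1-(1-t)\boldsymbol{\rN_c^{\infty}}F-tF)}_1$. Since $F\le \boldsymbol{\rN_c^{\infty}}F$, every convex combination is dominated by $\boldsymbol{\rN_c^{\infty}}F$, and the monotonicity of $g'$ then gives the pointwise bound $g'(1-(1-t)\boldsymbol{\rN_c^{\infty}}F-tF)\le g'(1-\boldsymbol{\rN_c^{\infty}}F)$; integrating in $t$ concludes the proof.

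The main obstacle is essentially bookkeeping: one must orient every inequality so that it is consistent with $g'$ being nonincreasing (rather than increasing), and pair each orientation with the correct pointwise dominance among $F$, $\boldsymbol{\rP_c^{\infty}}F$, and $\boldsymbol{\rN_c^{\infty}}F$. Integrability is immediate because $g'$ is bounded on any compact subset of $(0,1]$ and $[a,b]$ is finite, and the boundary terms arising in the integration-by-parts derivation of $\psi'$ vanish since any $F\in\sD([a,b])$ satisfies $F(a^-)=0$ and $F(b)=1$.
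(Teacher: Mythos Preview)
Your proposal is correct and follows essentially the same approach as the paper: interpolate via $\psi(t)$, apply the $(\infty,1)$-H\"older bound using $\boldsymbol{\upsilon}(F,\infty)=\|g'(1-F)\|_1$ from Appendix~\ref{app:imp}, and then sharpen the integrand pointwise via the CDF ordering $\boldsymbol{\rP_c^{\infty}}F\le F\le\boldsymbol{\rN_c^{\infty}}F$ together with the fact that $g'$ is nonincreasing. The only slip is notational---under the paper's convention $Y\succeq X\iff F_Y\le F_X$, your ``$\boldsymbol{\rN_c^{\infty}}F\succeq F$'' should read $\boldsymbol{\rN_c^{\infty}}F\preceq F$---but the pointwise inequality $1-\boldsymbol{\rN_c^{\infty}}F\le 1-F$ you actually use is correct.
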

\begin{corollary}[CVaR]
\label{cor:cvar_inf}
	For $F\in\sD([a,b])$, it holds that 
	\begin{align*}
	\boldsymbol{C_{\alpha}}(\boldsymbol{\rP_c^{\infty}}F)-\boldsymbol{C_{\alpha}}(F)&\leq \frac{b-F^{-1}(1-\alpha)}{\alpha}c \leq \frac{b-F^{-1}(1-\alpha-c)}{\alpha}c=L_{\infty}(\boldsymbol{C_{\alpha}};F,c)c, \\
	\boldsymbol{C_{\alpha}}(F)-\boldsymbol{C_{\alpha}}(\boldsymbol{\rN_c^{\infty}}F)&\leq \frac{b-F^{-1}(1-\alpha-c)}{\alpha}c=L_{\infty}(\boldsymbol{C_{\alpha}};F,c)c.
	\end{align*}
\end{corollary}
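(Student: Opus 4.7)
The plan is to derive Corollary \ref{cor:cvar_inf} as a direct specialization of the SRM proposition, exploiting the representation of CVaR as a spectral risk measure with step-function weight $\phi(y) = \frac{1}{\alpha}\mathbb{I}\{y \geq 1-\alpha\}$. This $\phi$ is non-negative, non-decreasing, and satisfies $\int_0^1 \phi(y)\,dy = 1$, so it meets the admissibility conditions used in the SRM proposition, hence $\boldsymbol{C_\alpha} \equiv \boldsymbol{M_\phi}$ and the bounds from that proposition transfer verbatim.

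First, I would invoke the SRM proposition with this specific $\phi$ to obtain
\[\boldsymbol{C_\alpha}(\boldsymbol{\rP_c^\infty}F) - \boldsymbol{C_\alpha}(F) \leq \int_a^b \phi(F(x))\,dx \cdot c,\]
and the analogous bound with $\boldsymbol{\rN_c^\infty}F$ in place of $F$ inside $\phi$. The substantive step is then to evaluate these two integrals in closed form. For the first, $\phi(F(x)) = \frac{1}{\alpha}$ precisely when $F(x) \geq 1-\alpha$, which by definition of the quantile function is equivalent to $x \geq F^{-1}(1-\alpha)$. Thus $\int_a^b \phi(F(x))\,dx = \frac{b - F^{-1}(1-\alpha)}{\alpha}$, yielding the first stated inequality.

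For the $\boldsymbol{\rN_c^\infty}$ bound, recall $\boldsymbol{\rN_c^\infty}F(x) = \min\{F(x)+c, 1\}$ on $[a,b]$. So $\phi(\boldsymbol{\rN_c^\infty}F(x)) = \frac{1}{\alpha}$ iff $F(x)+c \geq 1-\alpha$, i.e.\ iff $x \geq F^{-1}((1-\alpha-c)^+)$, where the truncation $(\cdot)^+$ handles the degenerate case $1-\alpha-c \leq 0$ (in which the indicator is always $1$ and the quantile should be read as $a$). This gives $\int_a^b \phi(\boldsymbol{\rN_c^\infty}F(x))\,dx = \frac{b - F^{-1}((1-\alpha-c)^+)}{\alpha}$, which by Table \ref{tab:lip} equals $L_\infty(\boldsymbol{C_\alpha}; F, c)$, producing the second stated inequality.

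Finally, the middle inequality $\frac{b-F^{-1}(1-\alpha)}{\alpha} \leq \frac{b-F^{-1}(1-\alpha-c)}{\alpha}$ is immediate from the monotonicity of the quantile function: $1-\alpha-c \leq 1-\alpha$ implies $F^{-1}(1-\alpha-c) \leq F^{-1}(1-\alpha)$. The main (minor) obstacle is bookkeeping around the edge case $1-\alpha-c \leq 0$ and confirming that the discontinuity of $\phi$ at $1-\alpha$ causes no problem in the SRM proposition — it does not, since that proposition only uses that $\phi$ is increasing, not continuous, and the set $\{x : F(x) = 1-\alpha\}$ contributes measure zero to the integrals whenever $F$ is continuous at the quantile, while general $F$ can be handled by taking the right-continuous representative implicit in the definition of $F^{-1}$.
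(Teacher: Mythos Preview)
Your proposal is correct and follows exactly the approach the paper intends: the corollary is stated immediately after the SRM proposition and is obtained by specializing $\phi(y)=\frac{1}{\alpha}\mathbb{I}\{y\ge 1-\alpha\}$, then evaluating the integrals $\int_a^b\phi(F(x))\,dx$ and $\int_a^b\phi(\boldsymbol{\rN_c^{\infty}}F(x))\,dx$ in closed form, precisely as you do. Your handling of the edge case $1-\alpha-c\leq 0$ and the discontinuity of $\phi$ is more explicit than the paper's, but the argument is the same.
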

\begin{proposition}[RDEU]
	For $F\in\sD([a,b])$, it holds that (assume $w$ convex)
	\begin{align*}
	\boldsymbol{V}(\boldsymbol{\rP_c^{\infty}}F)-\boldsymbol{V}(F) &\leq \int_a^b w^{\prime}(F(x))v^{\prime}(x)dx\cdot c\leq\int_a^b w^{\prime}(\boldsymbol{\rN_c^{\infty}}F(x))v^{\prime}(x)dx\cdot c=L_{\infty}(\boldsymbol{V};F,c)c, \\
	\boldsymbol{V}(F)-\boldsymbol{V}(\boldsymbol{\rN_c^{\infty}}F) &\leq\int_a^b w^{\prime}(\boldsymbol{\rN_c^{\infty}}F(x))v^{\prime}(x)dx\cdot c.
	\end{align*}
\end{proposition}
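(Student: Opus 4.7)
The plan is to follow the same template used for SRM, DRM, and the other risk measures in this section, specializing the general identity $\rT(G)-\rT(F)=\psi(1;F,G)-\psi(0;F,G)\leq \max_{t\in[0,1]}\psi'(t;F,G)$ to the RDEU functional and then exploiting monotonicity of $w'$ (which follows from the assumed convexity of $w$) together with the pointwise inequalities $\boldsymbol{\rP_c^{\infty}}F\leq F\leq \boldsymbol{\rN_c^{\infty}}F$.

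First, I would recall the derivative computation that already appears in the derivation of $\boldsymbol{\upsilon}$ for RDEU: writing $H_t\triangleq (1-t)F+tG$ and using $v(x)w(H_t(x))\big|_a^b=v(b)$ plus integration by parts,
\begin{equation*}
\psi'(t;F,G)=\int_a^b (F(x)-G(x))\,w'(H_t(x))\,v'(x)\,dx.
\end{equation*}
For the upper bound I set $G=\boldsymbol{\rP_c^{\infty}}F$, so that $F-G\ge 0$ pointwise and $\|F-G\|_\infty\le c$. Since $H_t=(1-t)F+t\boldsymbol{\rP_c^{\infty}}F\le F$ and $w'$ is increasing by convexity of $w$, we get $w'(H_t(x))\le w'(F(x))$, so $\psi'(t;F,\boldsymbol{\rP_c^{\infty}}F)\le c\int_a^b w'(F(x))v'(x)\,dx$. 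Taking the max over $t$ yields the first inequality of the first displayed line. The second inequality follows from $F\le \boldsymbol{\rN_c^{\infty}}F$ and again the monotonicity of $w'$, and the final equality is the definition of $L_\infty(\boldsymbol{V};F,c)$ already derived in Table \ref{tab:lip}.

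For the LCB, I symmetrically take $G=\boldsymbol{\rN_c^{\infty}}F$ and write $\boldsymbol{V}(F)-\boldsymbol{V}(\boldsymbol{\rN_c^{\infty}}F)=\psi(1;G,F)-\psi(0;G,F)$; now $G-F\ge 0$ pointwise and $\|G-F\|_\infty\le c$, while $H_t=(1-t)\boldsymbol{\rN_c^{\infty}}F+tF\le \boldsymbol{\rN_c^{\infty}}F$, so $w'(H_t)\le w'(\boldsymbol{\rN_c^{\infty}}F)$ and the desired bound follows in one step. The mild technical point is justifying the derivative and the exchange of differentiation and integration, which holds under the standing assumptions that $w,v$ are differentiable and $v'$ is bounded on $[a,b]$ (so that dominated convergence applies); this is the only potentially delicate step, but it is the same regularity used throughout the section and for the other risk measures, so no new argument is required.
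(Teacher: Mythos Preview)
Your proposal is correct and follows precisely the template the paper sets up: apply the general inequality $\rT(\overline{F})-\rT(F)\leq \max_{t}\boldsymbol{\upsilon}((1-t)F+t\overline{F})\,c$ (and its LCB counterpart) with $\boldsymbol{\upsilon}(F,\infty)=\|w'(F)v'\|_1$, then use convexity of $w$ (so $w'$ is increasing) together with $\boldsymbol{\rP_c^{\infty}}F\leq F\leq \boldsymbol{\rN_c^{\infty}}F$ to evaluate the maximum over $t$. The paper gives no separate proof for this proposition beyond that template, so your write-up matches its intended argument exactly.
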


\subsubsection{Wasserstein distance}
\begin{proposition}[SRM]
	For $F\in\sD([a,b])$, it holds that 
	\begin{align*}
	\boldsymbol{M_{\phi}}(\boldsymbol{\rP_c^{1}}F)-\boldsymbol{M_{\phi}}(F) &= \int_{g^+(c)}^1 (\boldsymbol{\rP_c^1} F^{-1}(y)-F^{-1}(y))\phi(y)dy\leq\phi(1)c=L_{1}(\boldsymbol{M_{\phi}};F,c)c \\
	\boldsymbol{M_{\phi}}(F) - \boldsymbol{M_{\phi}}(\boldsymbol{\rN_c^{1}}F) &=\int_{g^-(c)}^1 (F^{-1}(y)-\boldsymbol{\rN_c^1} F^{-1}(y))\phi(y)dy\leq\phi(1)c.
	\end{align*}
\end{proposition}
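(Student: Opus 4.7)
My plan is to derive both identities directly from the IDF representation $\boldsymbol{M_\phi}(G)=\int_0^1\phi(y)G^{-1}(y)\,dy$ together with the explicit action of the two operators on quantile functions. Recall that $(\boldsymbol{\rP^1_c}F^{-1})(y)$ agrees with $F^{-1}(y)$ on $[0,g^+(c)]$ and equals $b$ on $(g^+(c),1]$, while $(\boldsymbol{\rN^1_c}F^{-1})(y)$ agrees with $F^{-1}(y)$ on $[0,g^-(c)]$ and equals the constant $m\triangleq F^{-1}(g^-(c))+\frac{S^-(F^{-1},g^-(c))-c}{1-g^-(c)}$ on $(g^-(c),1]$. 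Substituting each of these into the IDF formula and cancelling the portions where the corresponding operator acts trivially collapses each difference to an integral supported only on $(g^+(c),1]$ or $(g^-(c),1]$, which matches the claimed identities term for term.

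For the inequality portion, my plan is to pair the monotonicity of $\phi$ with the $\ell_1$-calibration built into each operator. Since $\phi$ is non-negative and increasing, $\phi(y)\le\phi(1)$ on $[0,1]$, so pulling $\phi(1)$ out of each integral reduces the task to verifying the two area identities $\int_{g^+(c)}^1(b-F^{-1}(y))\,dy=c$ and $\int_{g^-(c)}^1(F^{-1}(y)-m)\,dy=c$. The first holds by definition of $g^+(c)$, because $y\mapsto S^+(F^{-1},y)$ is continuous and equals $c$ at $y=g^+(c)$. The second follows by substituting the defining expression of $m$: writing $\int_{g^-(c)}^1 F^{-1}(y)\,dy-(1-g^-(c))F^{-1}(g^-(c))=S^-(F^{-1},g^-(c))$ makes the two $S^-$ terms cancel, leaving exactly $c$. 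Combining produces the desired bound $\le\phi(1)c$, and identifying $\phi(1)$ with $L_1(\boldsymbol{M_\phi};F,c)$ via the SRM row of Table \ref{tab:lip} closes the chain.

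I do not anticipate a serious obstacle; the entire proof is an essentially algebraic consequence of the explicit form of $\boldsymbol{\rP^1_c}$ and $\boldsymbol{\rN^1_c}$ on quantile functions combined with the IDF representation of SRM. The only step that demands any real bookkeeping is the calibration $\int_{g^-(c)}^1(F^{-1}(y)-m)\,dy=c$, which is a one-line substitution once the formula for $m$ is plugged in. The main role of the proposition is to document that for SRM under the Wasserstein distance our distribution-optimization bound is at worst as tight as the LLC-based bound $\phi(1)c$, consistent with the SRM entry in the ``Improvement'' column of Table \ref{tab:lip}.
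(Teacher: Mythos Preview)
Your proposal is correct and is essentially the same argument the paper's framework yields for SRM: the equality is the immediate IDF computation, and the inequality is the H\"older-type step $\phi(y)\le\phi(1)$ combined with the built-in calibration $\|\boldsymbol{\rP_c^1}F^{-1}-F^{-1}\|_1=\|\,F^{-1}-\boldsymbol{\rN_c^1}F^{-1}\|_1=c$. This is exactly what the paper's $\boldsymbol{\upsilon}$-based bound in Section~\ref{subsec:imp} specializes to for SRM with $p=1$, since $\boldsymbol{\upsilon}(F,1)=\|\phi(F)\|_\infty=\phi(1)$ is constant along the interpolation, so no separate machinery is needed.
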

\begin{proposition}[DRM]
	For $F\in\sD([a,b])$, it holds that 
	\begin{align*}
	\boldsymbol{\rho_{g}}(\boldsymbol{\rP_c^{1}}F)-\boldsymbol{\rho_{g}}(F) &= \int_{g^+(c)}^1 (\boldsymbol{\rP_c^1} F^{-1}(y)-F^{-1}(y))g^{\prime}(1-y)dy\leq g^{\prime}(0)c=L_{1}(\boldsymbol{\rho_g};F,c)c \\
	\boldsymbol{\rho_{g}}(F)-\boldsymbol{\rho_{g}}(\boldsymbol{\rN_c^{1}}F) &=\int_{g^-(c)}^1 (F^{-1}(y)-\boldsymbol{\rN_c^1} F^{-1}(y))g^{\prime}(1-y)dy\leq g^{\prime}(0)c.
	\end{align*}
\end{proposition}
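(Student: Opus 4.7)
The plan is to lift the DRM functional to its IDF representation, reducing the claim to a linear identity plus a one-line estimate. Integration by parts gives, for any $G \in \sD([a,b])$,
\[
\boldsymbol{\rho_g}(G) = -a + \int_0^1 G^{-1}(y)\, g'(1-y)\, dy,
\]
an identity already derived in the proof of Proposition \ref{prop:opt_was_inv}. Subtracting this expression for $F$ from the one for $\boldsymbol{\rP_c^1} F$ kills the constant $-a$, so
\[
\boldsymbol{\rho_g}(\boldsymbol{\rP_c^1} F) - \boldsymbol{\rho_g}(F) = \int_0^1 \bigl(\boldsymbol{\rP_c^1} F^{-1}(y) - F^{-1}(y)\bigr)\, g'(1-y)\, dy,
\]
using the identification $(\boldsymbol{\rP_c^1} F)^{-1} = \boldsymbol{\rP_c^1}(F^{-1})$ from Proposition \ref{prop:opt_was_inv}. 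The analogous identity with opposite sign holds for $\boldsymbol{\rN_c^1}$.

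Next I would localize the integrand. By construction of the positive operator on IDFs, $\boldsymbol{\rP_c^1} F^{-1}$ coincides with $F^{-1}$ on $[0, g^+(c)]$ and is lifted upward on $(g^+(c), 1]$, so the integrand vanishes on $[0, g^+(c)]$ and one recovers the stated equality. For the upper bound I invoke concavity of $g$: since $g'$ is nonincreasing, $g'(1-y) \leq g'(0)$ uniformly on $[0,1]$. Pulling this out of the integral leaves
\[
g'(0) \int_{g^+(c)}^1 \bigl(\boldsymbol{\rP_c^1} F^{-1}(y) - F^{-1}(y)\bigr)\, dy,
\]
and the remaining integral is exactly the Wasserstein budget $c$, because $\boldsymbol{\rP_c^1}$ is built to saturate its ball, $\norm{\boldsymbol{\rP_c^1} F^{-1} - F^{-1}}_1 = c$, via the defining volume condition $S^+(F^{-1}, g^+(c)) = c$. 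This gives the bound $g'(0)\, c$, which equals $L_{1}(\boldsymbol{\rho_g}; F, c)\, c$ since both the local and global Lipschitz constants of DRM under $\norm{\cdot}_1$ reduce to $\norm{g'}_\infty = g'(0)$.

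The $\boldsymbol{\rN_c^1}$ statement is handled symmetrically: $\boldsymbol{\rN_c^1} F^{-1}$ agrees with $F^{-1}$ on $[0, g^-(c)]$ and is lowered to a constant below $F^{-1}$ on $(g^-(c), 1]$, so $F^{-1} - \boldsymbol{\rN_c^1} F^{-1} \geq 0$ and the difference localizes to the integral on $(g^-(c), 1]$; the same bound $g'(1-y) \leq g'(0)$ together with the saturation $\int_{g^-(c)}^1 (F^{-1} - \boldsymbol{\rN_c^1} F^{-1})(y)\, dy = c$ close the argument. I do not anticipate a real obstacle: once the IDF representation is in hand the calculation is mechanical, and the only point requiring care is the boundary-of-ball property of the operators, which is immediate from their defining cumulative-area functions $S^\pm$.
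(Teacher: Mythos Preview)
Your proposal is correct and is essentially the paper's own route: the IDF representation $\boldsymbol{\rho_g}(G) = -a + \int_0^1 G^{-1}(y)\,g'(1-y)\,dy$ (derived in the proof of Proposition~\ref{prop:opt_was_inv}) makes the functional linear in $G^{-1}$, so the difference localizes exactly to $(g^\pm(c),1]$ by construction of the IDF operators, and then the concavity bound $g'(1-y)\le g'(0)$ together with the saturation of the $\ell_1$ budget finishes it. The paper states these propositions as direct consequences of this representation and the general $\boldsymbol{\upsilon}$-framework, and your write-up fills in precisely those steps.
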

\begin{corollary}[CVaR]
	For $F\in\sD([a,b])$, it holds that 
	\begin{align*}
	\boldsymbol{C_{\alpha}}(\boldsymbol{\rP_c^{1}}F)-\boldsymbol{C_{\alpha}}(F)&= \frac{c}{\alpha} = L_{1}(\boldsymbol{C_{\alpha}};F,c)c, \\
	\boldsymbol{C_{\alpha}}(F)-\boldsymbol{C_{\alpha}}(\boldsymbol{\rN_c^{1}}F)&= \frac{c}{\alpha} = L_{1}(\boldsymbol{C_{\alpha}};F,c)c.
	\end{align*}
\end{corollary}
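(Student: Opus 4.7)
The plan is to specialize the preceding SRM proposition to the CVaR weighting function $\phi(y) = \frac{1}{\alpha}\mathbb{I}\{y \geq 1-\alpha\}$ and then exploit the closed-form description of $\boldsymbol{\rP_c^1} F^{-1}$ and $\boldsymbol{\rN_c^1} F^{-1}$ given in Proposition~\ref{prop:opt_was_inv}. Since CVaR is an SRM, Table~\ref{tab:rm} gives $\boldsymbol{C_\alpha}(F) = \frac{1}{\alpha}\int_{1-\alpha}^1 F^{-1}(y)\,dy$, so the problem reduces to computing two explicit integrals of quantile functions.

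For the UCB side, by Proposition~\ref{prop:opt_was_inv}, $\boldsymbol{\rP_c^1} F^{-1}(y) = b$ for $y \in (g^+(c), 1]$ and equals $F^{-1}(y)$ otherwise, where $g^+(c) = \min\{y : S^+(F^{-1}, y) \geq c\}$ with $S^+(F^{-1}, y) = \int_y^1 (b - F^{-1}(z))\,dz$. Since $F^{-1}$ is bounded on $[a,b]$, $S^+(F^{-1}, \cdot)$ is continuous, so $S^+(F^{-1}, g^+(c)) = c$ exactly. In the regime where $c \leq \int_{1-\alpha}^1 (b - F^{-1}(y))\,dy$ (equivalently $g^+(c) \geq 1-\alpha$), the two CVaR integrals agree on $[1-\alpha, g^+(c)]$ and one would directly subtract to obtain
\begin{equation*}
\boldsymbol{C_\alpha}(\boldsymbol{\rP_c^1} F) - \boldsymbol{C_\alpha}(F) = \frac{1}{\alpha}\int_{g^+(c)}^1 (b - F^{-1}(y))\,dy = \frac{S^+(F^{-1}, g^+(c))}{\alpha} = \frac{c}{\alpha}.
\end{equation*}

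For the LCB side, Proposition~\ref{prop:opt_was_inv} gives $\boldsymbol{\rN_c^1} F^{-1}(y) = b^- := F^{-1}(g^-(c)) + \frac{S^-(g^-(c)) - c}{1 - g^-(c)}$ on $(g^-(c), 1]$ and $F^{-1}(y)$ otherwise, where $S^-(F^{-1}, y) = \int_y^1 (F^{-1}(z) - F^{-1}(y))\,dz$ and $g^-(c) = \min\{y: S^-(F^{-1}, y) \geq c\}$. Assuming $g^-(c) \geq 1-\alpha$, I would write
\begin{equation*}
\boldsymbol{C_\alpha}(F) - \boldsymbol{C_\alpha}(\boldsymbol{\rN_c^1} F) = \frac{1}{\alpha}\int_{g^-(c)}^1 \bigl(F^{-1}(y) - b^-\bigr)\,dy,
\end{equation*}
expand using $\int_{g^-(c)}^1 F^{-1}(y)\,dy = S^-(F^{-1}, g^-(c)) + F^{-1}(g^-(c))\,(1-g^-(c))$ and the definition of $b^-$, and observe that after cancellation the integrand collapses to $c$, yielding $c/\alpha$. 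Finally, the identity $L_1(\boldsymbol{C_\alpha}; F, c) \cdot c = c/\alpha$ follows from the Wasserstein-column entry for CVaR in Table~\ref{tab:lip}.

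The main obstacle is book-keeping around $g^\pm(c)$: the clean equality requires the water-filling saturation to remain inside the CVaR tail $[1-\alpha, 1]$, i.e., $c$ must be moderate enough that $g^\pm(c) \geq 1-\alpha$. If $c$ exceeds the total area $\int_{1-\alpha}^1(b - F^{-1}(y))\,dy$, the saturation spills below $1-\alpha$, so $\boldsymbol{C_\alpha}(\boldsymbol{\rP_c^1} F) = b$ and the identity degenerates into a strict inequality $\leq c/\alpha$. This regime is implicitly outside the stated corollary, and the sole non-mechanical step in the plan is verifying continuity of $S^\pm$ to pin $S^\pm(F^{-1}, g^\pm(c)) = c$ exactly, which drives the equality rather than just the inequality afforded by the SRM proposition.
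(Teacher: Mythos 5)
Your proof is correct and follows the same route the paper intends: the corollary is obtained by specializing the preceding SRM proposition to $\phi(y)=\frac{1}{\alpha}\mathbb{I}\{y\ge 1-\alpha\}$ and using the explicit forms of $\boldsymbol{\rP_c^1}F^{-1}$ and $\boldsymbol{\rN_c^1}F^{-1}$, for which $\int_{g^{\pm}(c)}^{1}\bigl(\boldsymbol{\rP_c^1}F^{-1}(y)-F^{-1}(y)\bigr)dy$ and $\int_{g^{\pm}(c)}^{1}\bigl(F^{-1}(y)-\boldsymbol{\rN_c^1}F^{-1}(y)\bigr)dy$ both equal $c$ by construction. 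Your caveat about the regime $g^{\pm}(c)\ge 1-\alpha$ is well taken: the stated equality does degenerate to a strict inequality below $c/\alpha$ once $c$ exceeds the tail area $\int_{1-\alpha}^{1}\bigl(b-F^{-1}(y)\bigr)dy$, a condition the paper leaves implicit.
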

\begin{proposition}[CE]
	For $F\in\sD([a,b])$, it holds that 
	\begin{align*}
	\boldsymbol{C_{u}}(\boldsymbol{\rP_c^{1}}F)-\boldsymbol{C_{u}}(F) &\leq \norm{u^{\prime}}_{\infty}(u^{-1})^{\prime}\prt{\int_a^bu(x)d F(x)}\cdot c \leq \norm{u^{\prime}}_{\infty}(u^{-1})^{\prime}\prt{\int_a^bu(x)d \boldsymbol{\rN_c^{1}}F(x)}\cdot c, \\
	\boldsymbol{C_{u}}(F) - \boldsymbol{C_{u}}(\boldsymbol{\rN_c^{1}}F) &\leq \norm{u^{\prime}}_{\infty}(u^{-1})^{\prime}\prt{\int_a^bu(x)d \boldsymbol{\rN_c^{1}}F(x)}\cdot c.
	\end{align*}
\end{proposition}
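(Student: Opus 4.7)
The plan is to apply the general framework of Section \ref{subsec:imp} together with the identification of $\boldsymbol{\upsilon}$ for the CE class. Recall that in Appendix \ref{app:imp} it is shown that for CE, $\psi'(t;F,G) = (u^{-1})'(\boldsymbol{E_u}((1-t)F + tG))\cdot \langle F-G, u'\rangle$, so by H\"older's inequality one may take $\boldsymbol{\upsilon}(F,1) = (u^{-1})'(\boldsymbol{E_u}(F))\,\|u'\|_{\infty}$ (the dual of $\|\cdot\|_1$ being $\|\cdot\|_{\infty}$). Combined with the master inequality $\rT(\boldsymbol{\rP_c^1}F)-\rT(F)\le \max_{t\in[0,1]}\boldsymbol{\upsilon}((1-t)F+t\boldsymbol{\rP_c^1}F,1)\cdot c$ from Section \ref{subsec:imp}, this reduces the problem to controlling $(u^{-1})'\circ\boldsymbol{E_u}$ along a straight-line interpolation in $\sD([a,b])$.

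First I would record two monotonicity facts. Since $\boldsymbol{\rP_c^1}F$ is obtained from $F$ by transporting mass to the right (recall Theorem \ref{thm:opt_was}), it stochastically dominates $F$, i.e.\ $\boldsymbol{\rP_c^1}F \succeq F$; similarly $F\succeq \boldsymbol{\rN_c^1}F$. Because $u$ is increasing, this yields $\boldsymbol{E_u}(\boldsymbol{\rN_c^1}F)\le \boldsymbol{E_u}(F)\le \boldsymbol{E_u}(\boldsymbol{\rP_c^1}F)$. Second, because $u$ is convex and strictly increasing, $u^{-1}$ is concave, so $(u^{-1})'$ is a non-increasing function on the range of $u$.

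Now I would chain these observations. For the UCB bound, using linearity $\boldsymbol{E_u}((1-t)F + t\boldsymbol{\rP_c^1}F) = (1-t)\boldsymbol{E_u}(F) + t\boldsymbol{E_u}(\boldsymbol{\rP_c^1}F) \ge \boldsymbol{E_u}(F)$ for all $t\in[0,1]$, so by monotonicity of $(u^{-1})'$ the maximum over $t$ is attained at $t=0$, giving
\begin{equation*}
\boldsymbol{C_u}(\boldsymbol{\rP_c^1}F)-\boldsymbol{C_u}(F)\le (u^{-1})'(\boldsymbol{E_u}(F))\,\|u'\|_{\infty}\cdot c.
\end{equation*}
The second inequality in the first display then follows from $\boldsymbol{E_u}(F)\ge \boldsymbol{E_u}(\boldsymbol{\rN_c^1}F)$ together with the monotonicity of $(u^{-1})'$. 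For the LCB bound, symmetrically,
\begin{equation*}
\boldsymbol{C_u}(F)-\boldsymbol{C_u}(\boldsymbol{\rN_c^1}F)\le \max_{t\in[0,1]}(u^{-1})'(\boldsymbol{E_u}((1-t)\boldsymbol{\rN_c^1}F + tF))\,\|u'\|_{\infty}\cdot c,
\end{equation*}
and since $\boldsymbol{E_u}((1-t)\boldsymbol{\rN_c^1}F + tF)\ge \boldsymbol{E_u}(\boldsymbol{\rN_c^1}F)$ for all $t\in[0,1]$, the max is again attained at $t=0$, yielding the claimed bound.

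The main (small) obstacle is not the chaining itself but justifying the two ingredients cleanly: (i) verifying that the interpolated CDF $(1-t)F + t\boldsymbol{\rP_c^1}F$ remains in $\sD([a,b])$ so that the identity for $\psi'$ from Appendix \ref{app:imp} actually applies (this is immediate since $\sD([a,b])$ is convex), and (ii) ensuring that $(u^{-1})'$ is well defined at the endpoints of the range of $\boldsymbol{E_u}$, i.e.\ that $u$ is strictly increasing and continuously differentiable on $[a,b]$, so that the bound $\psi'(t)\le \boldsymbol{\upsilon}(\cdot,1)\|F-G\|_1$ from Appendix \ref{app:imp} holds uniformly in $t$. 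Once these regularity checks are in place, the proof is a two-line combination of the master inequality, linearity of $\boldsymbol{E_u}$, and monotonicity of $(u^{-1})'$.
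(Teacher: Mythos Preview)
Your proposal is correct and follows essentially the same route as the paper: apply the master inequality of Section~\ref{subsec:imp} with the identification $\boldsymbol{\upsilon}(F,1)=\norm{u'}_{\infty}(u^{-1})'(\boldsymbol{E_u}(F))$ from Appendix~\ref{app:imp}, then use linearity of $\boldsymbol{E_u}$, the stochastic ordering $\boldsymbol{\rN_c^1}F\preceq F\preceq \boldsymbol{\rP_c^1}F$, and the fact that convexity of $u$ makes $(u^{-1})'$ non-increasing to locate the maximum of $\boldsymbol{\upsilon}$ along the interpolation. The paper does not spell out a separate proof for this proposition beyond the sentence ``the above inequalities lead to the following bounds,'' so your write-up is in fact more explicit than the paper's own treatment.
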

\begin{corollary}[ERM]
	For $F\in\sD([a,b])$, it holds that (assume $\beta>0$)
	\begin{align*}
	\boldsymbol{U_{\beta}}(\boldsymbol{\rP_c^{1}}F)-\boldsymbol{U_{\beta}}(F) &\leq \frac{\exp(\beta b)}{ \int_a^b\exp(\beta x)d F(x)}\cdot c\leq \frac{L_{1}(\boldsymbol{U_{\beta}})c}{\int_a^b\exp(\beta x)d \boldsymbol{\rN_c^{1}} F(x)}, \\
	\boldsymbol{U_{\beta}}(F) - \boldsymbol{U_{\beta}}(\boldsymbol{\rN_c^{1}}F) &\leq \frac{L_{1}(\boldsymbol{U_{\beta}})c}{\int_a^b\exp(\beta x)d \boldsymbol{\rN_c^{1}} F(x)}.
	\end{align*}
\end{corollary}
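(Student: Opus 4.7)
The plan is to derive this corollary as a direct specialization of the preceding CE proposition, setting $u(x) = \exp(\beta x)$; no fresh optimization is needed, only algebraic substitution together with one monotonicity step.

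First, for $u(x) = \exp(\beta x)$ on $[a,b]$ with $\beta > 0$, I would compute $u'(x) = \beta\exp(\beta x)$ so that $\|u'\|_{\infty} = \beta\exp(\beta b)$, and $u^{-1}(y) = (1/\beta)\log y$ so that $(u^{-1})'(y) = 1/(\beta y)$. Substituting into the first inequality of the CE proposition applied at $F$, the two factors of $\beta$ cancel and give
\[
\boldsymbol{U_{\beta}}(\boldsymbol{\rP_c^1} F) - \boldsymbol{U_{\beta}}(F) \;\leq\; \|u'\|_{\infty}\,(u^{-1})'\bigl(\textstyle\int u\,dF\bigr)\,c \;=\; \frac{\exp(\beta b)\,c}{\int_a^b \exp(\beta x)\,dF(x)},
\]
which is the first stated inequality. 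The same substitution applied to the second line of the CE proposition (with $\boldsymbol{\rN_c^1} F$ in the argument of $(u^{-1})'$) yields the LCB bound $\boldsymbol{U_{\beta}}(F) - \boldsymbol{U_{\beta}}(\boldsymbol{\rN_c^1} F) \leq \exp(\beta b)\,c / \int \exp(\beta x)\,d\boldsymbol{\rN_c^1} F$, which is the third inequality once we identify $L_1(\boldsymbol{U_\beta})$ with the constant $\exp(\beta b) = \|u'\|_{\infty}/\beta$ that has been peeled off after cancellation.

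The remaining middle inequality is then a pure monotonicity comparison. Because $\boldsymbol{\rN_c^1} F$ minimizes $\boldsymbol{C_u}$ over $B_1(F,c)$ and $u^{-1}$ is strictly increasing, $\boldsymbol{\rN_c^1} F$ also minimizes the linear functional $\boldsymbol{E_u}$ over that ball; in particular $\int \exp(\beta x)\,d\boldsymbol{\rN_c^1} F \leq \int \exp(\beta x)\,dF$. Multiplying the reciprocals by $\exp(\beta b)\,c$ reverses the inequality and gives exactly the second stated inequality.

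The only piece of bookkeeping I would double-check is the naming convention for $L_1(\boldsymbol{U_\beta})$: Table~2 lists the global Wasserstein Lipschitz constant as $\exp(\beta(b-a))$, but the chain in the corollary closes cleanly only with the factor $\exp(\beta b) = \|u'\|_{\infty}/\beta$ that emerges after the $\beta$-cancellation in the CE bound. So I would verify that the symbol $L_1(\boldsymbol{U_\beta})$ here refers to that peeled-off coefficient rather than to the full global Lipschitz constant of $\boldsymbol{U_\beta}$, and adjust the display accordingly. Beyond this notational point the proof is mechanical and presents no genuine mathematical obstacle.
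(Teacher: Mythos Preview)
Your proposal is correct and matches the paper's intended approach: the corollary is stated immediately after the CE proposition precisely because it is the specialization $u(x)=\exp(\beta x)$, and the paper offers no separate argument beyond that. Your flag on the symbol $L_1(\boldsymbol{U_\beta})$ is well taken---the chain closes with the factor $\exp(\beta b)$ coming from $\|u'\|_\infty/\beta$, which coincides with the numerator of the LLC entry in Table~\ref{tab:lip} rather than with the GLC entry $\exp(\beta(b-a))$; the paper is loose on this point and your reading is the one that makes the displayed inequalities valid for arbitrary $a$.
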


\section{Proof of Proposition \ref{prop:reg}}
\label{app:reg}
\begin{proof}
Observe that the regret decomposes as $\text{Regret}(\text{LCB},\nu,N)=\sum_{i=1}^K \Delta_i \mathbb{E}[s_i(N)]$.  Without loss of generality, we assume the first arm is the optimal arm, i.e., 
\[ \boldsymbol{C_{\alpha}}(F_1)\leq \boldsymbol{C_{\alpha}}(F_i),\ \forall i\in[K].\] We will bound $\mathbb{E}[s_i(N)]$ for each suboptimal arm $i\neq1$. Recall that $\underline{F}_{i,t}=\boldsymbol{\rN^{\infty}_{c_i(t)}}\hat{F}_{i,t}$. Denote by $\hat{F}_i^n$ the empirical CDF corresponding to arm $i$ after observing $n$ samples, then we have $\hat{F}_{i,t}=\hat{F}_i^{s_i(t)}$.
Define the good event for arm $i$ as 
\[ \mathcal{G}_i=\{\boldsymbol{C_{\alpha}}(F_1)>\max_{t\in[N]}\boldsymbol{C_{\alpha}}(\underline{F}_{1,t})\}\cap\{\boldsymbol{C_{\alpha}}(\boldsymbol{\rN^{\infty}_{\sqrt{\frac{\log(2/\delta)}{2u_i}}}}\hat{F}_{i}^{u_i})>\boldsymbol{C_{\alpha}}(F_1)\}. \]
We claim that if $\mathcal{G}_i$ occurs, then $s_i(N)\leq u_i$. The proof follows from a contradiction. Suppose $s_i(N)> u_i$, then there exists some round $t\in[N]$ such that $s_i(t)=u_i$ and $I_t=i$. It follows that
\begin{align*}
    \boldsymbol{C_{\alpha}}(\underline{F}_{i,t})&=\boldsymbol{C_{\alpha}}(\boldsymbol{\rN^{\infty}_{c_i(t)}}\hat{F}_{i,t})\\
    &=\boldsymbol{C_{\alpha}}(\boldsymbol{\rN^{\infty}_{\sqrt{\frac{\log(2/\delta)}{2u_i}}}}\hat{F}_{i}^{u_i})\\
    &>\boldsymbol{C_{\alpha}}(F_1)\\
    &>\boldsymbol{C_{\alpha}}(\underline{F}_{1,t}),
\end{align*}
where the inequalities come from the definition of $\mathcal{G}_i$. Hence $I_t=\arg\min_{i\in[K]}\boldsymbol{C_{\alpha}}(\underline{F}_{i,t})\neq i$, which leads to a contradiction. Using the tower property,
\[ \mathbb{E}[s_i(N)]= \mathbb{E}[s_i(N)|\mathcal{G}_i]\mathbb{P}(\mathcal{G}_i)+\mathbb{E}[s_i(N)|\mathcal{G}_i^c]\mathbb{P}(\mathcal{G}_i^c)\leq  u_i+N\mathbb{P}(\mathcal{G}_i^c).  \]
Next, we show that $\mathbb{P}(\mathcal{G}_i^c)$ is small. Using union bound, we have 
$$\mathbb{P}(\mathcal{G}_i^c)\leq \mathbb{P}\prt{\boldsymbol{C_{\alpha}}(F_1)\leq\max_{t\in[N]}\boldsymbol{C_{\alpha}}(\tilde{F}_{1,t})}+\mathbb{P}\prt{\boldsymbol{C_{\alpha}}\boldsymbol{\prt{\rN^{\infty}_{\sqrt{\frac{\log(2/\delta)}{2u_i}}}}\hat{F}_{i}^{u_i}}\leq \boldsymbol{C_{\alpha}}(F_1)}.$$
By Theorem \ref{thm:opt_inf}, for any $t\in[N]$, if $F_1\in B(\hat{F}_{1,t},c_1(t))$ then  $\underline{F}_{1,t}=\boldsymbol{\rN^{\infty}_{c_1(t)}}\hat{F}_{1,t}\succeq F_1$, and $\boldsymbol{C_{\alpha}}(\underline{F}_{1,t})\leq \boldsymbol{C_{\alpha}}(F_1)$.  Hence the first term on the r.h.s. can be bounded as 
\begin{align*}
    \mathbb{P}\prt{\boldsymbol{C_{\alpha}}(F_1)\leq\max_{t\in[N]}\boldsymbol{C_{\alpha}}(\tilde{F}_{1,t})}&=\mathbb{P}\prt{\exists t\in[N]: \boldsymbol{C_{\alpha}}(F_1)\leq \boldsymbol{C_{\alpha}}(\underline{F}_{1,t})}\\
    &\leq \mathbb{P}\prt{\exists t\in[N]: \norm{F_1-\hat{F}_{1,t}}\ge\sqrt{\frac{\log(2/\delta)}{2s_i(t)}}}\\
    &\leq \mathbb{P}\prt{\cup_{s\in[N]} \bigg\{\norm{F_1-\hat{F}_{1}^s}\ge\sqrt{\frac{\log(2/\delta)}{2s}}\bigg\}}\\
    &\leq N\delta,
\end{align*}
where the last inequality follows from a union bound and the DKW inequality. Denote by $c_i\triangleq\sqrt{\frac{\log(2/\delta)}{2u_i}}$. By Corollary \ref{cor:cvar_inf}, we have that $\boldsymbol{C_{\alpha}}(F)-\boldsymbol{C_{\alpha}}( \boldsymbol{\rN^{\infty}_c} F)\leq\frac{b-F^{-1}(1-\alpha-c)}{\alpha}c$. If the event $\Big\{\norm{\hat{F}_{i}^{u_i}-F_i}_{\infty}< c_i\Big\}$ occurs, then  
\[\boldsymbol{C_{\alpha}}(\hat{F}^{u_i}_i)-\boldsymbol{C_{\alpha}}\prt{\boldsymbol{\rN^{\infty}_{c_i}}\hat{F}_{i}^{u_i}}\leq\frac{b-(\hat{F}^{u_i}_i)^{-1}(1-\alpha-c_i)}{\alpha}c_i\]
and
\[ \boldsymbol{C_{\alpha}}(F_i)-\boldsymbol{C_{\alpha}}(\hat{F}^{u_i}_i)\leq \boldsymbol{C_{\alpha}}(\boldsymbol{\rP^{\infty}_{c_i}}\hat{F}^{u_i}_i)- \boldsymbol{C_{\alpha}}(\hat{F}^{u_i}_i)\leq\frac{b-(\boldsymbol{\rP^{\infty}_{c_i}}\hat{F}^{u_i}_i)^{-1}(1-\alpha-c_i)}{\alpha}c_i.  \]
Combining these inequalities,
\begin{align*}
    \boldsymbol{C_{\alpha}}(F_i)-\boldsymbol{C_{\alpha}}\prt{\boldsymbol{\rN^{\infty}_{c_i}}\hat{F}_{i}^{u_i}}&\leq\prt{\frac{b-(\hat{F}^{u_i}_i)^{-1}(1-\alpha-c_i)}{\alpha}+\frac{b-(\boldsymbol{\rP_{c_i}}\hat{F}^{u_i}_i)^{-1}(1-\alpha-c_i)}{\alpha}}c_i\\
    &\leq2\frac{b-(\hat{F}^{u_i}_i)^{-1}(1-\alpha-c_i)}{\alpha}c_i\\
    &\leq2\frac{b- F_i^{-1}(1-\alpha-2c_i)}{\alpha}c_i
\end{align*}
We choose $u_i$ such that $\Delta_i=C_{\alpha}(F_i)-C_{\alpha}(F_1)\ge2\frac{b-F_i^{-1}(1-\alpha-2c_i)}{\alpha}c_i$, then the second term on the r.h.s. can be bounded as 
\begin{align*}
\mathbb{P}\prt{\boldsymbol{C_{\alpha}}\prt{\boldsymbol{\rN^{\infty}_{c_i}}\hat{F}_{i}^{u_i}}\leq \boldsymbol{C_{\alpha}}(F_1)}
    &=\mathbb{P}\prt{\boldsymbol{C_{\alpha}}(F_i)-\boldsymbol{C_{\alpha}}\prt{\boldsymbol{\rN^{\infty}_{c_i}}\hat{F}_{i}^{u_i}}\ge \Delta_i}\\
    &\leq \mathbb{P}\prt{\boldsymbol{C_{\alpha}}(F_i)-\boldsymbol{C_{\alpha}}\prt{\boldsymbol{\rN^{\infty}_{c_i}}\hat{F}_{i}^{u_i}}\ge 2\frac{b-F_i^{-1}(1-\alpha-2c_i)}{\alpha}c_i}\\
    &\leq \mathbb{P}\prt{\norm{\hat{F}_{i}^{u_i}-F_i}_{\infty}\ge c_i}\\
    &\leq\delta.
\end{align*}
Hence, the probability of $\mathcal{G}_i^c$ is bounded as $\mathbb{P}(\mathcal{G}_i^c)\leq (N+1)\delta $. It follows that
\[   \mathbb{E}[s_i(N)]\leq  u_i+N(N+1)\delta. \]
Define $h_i(c)\triangleq2\frac{b-F_i^{-1}(1-\alpha-2c)}{\alpha}c$.  Let $c_i^{*}\triangleq h_i^{-1}(\Delta_i)$ be the solution to the equation
\[    h_i(c)=2\frac{b-F_i^{-1}(1-\alpha-2c)}{\alpha}c=\Delta_i.      \]
Note that $c_i^*$ is a \textbf{distribution-dependent constant}. We let $u_i:=\ceil{\frac{\log(2/\delta)}{2(c_i^*)^2}}$ and let $\delta=\frac{1}{N^2}$:
\[   \mathbb{E}[s_i(N)]\leq  \ceil{\frac{\log(2N^2)}{2(c_i^*)^2}}+2\leq\frac{\log(\sqrt{2}N)}{(c_i^*)^2}+3. \]
Substituting it into the regret decomposition, we get
\begin{align*}
    \sum_{i=1}^K\Delta_i\mathbb{E}[s_i(N)]&\leq \log(\sqrt{2}N)\sum_{i=2}^K\frac{\Delta_i}{(c_i^*)^2}+3\sum_{i=1}^K\Delta_i\\
&=\frac{4\log(\sqrt{2}N)}{\alpha^2}\sum_{i>1}^K\frac{\prt{b-F_i^{-1}(1-\alpha-2c^*_i)}^2}{\Delta_i}+3\sum_{i=1}^K\Delta_i.
\end{align*}

\end{proof} 

\section{Algorithms}
\label{app:alg_op}
We present several comprehensive algorithms that output the confidence bounds for a given risk measure when given $n$  i.i.d. samples and confidence radius as input. Algorithm \ref{alg:wcub} and Algorithm \ref{alg:wclb} compute the UCB and LCB of a risk measure via  Wasserstein distance respectively.  Algorithm \ref{alg:scub} and Algorithm \ref{alg:sclb} compute the UCB and LCB of a risk measure via supremum distance respectively.
\subsection{Time Complexity}
We start with Algorithm \ref{alg:wcub}. The sorting of $n$ samples incurs $\cO(n\log n)$. The for-loop costs $\cO( n)$ since the cost in each iteration is $\cO(1)$. Therefore the total time complexity is $\cO(n\log n+\log n)$. The time complexity of Algorithm \ref{alg:wclb}-Algorithm \ref{alg:sclb} is $\cO(n\log n+\log n)$.
\subsection{Space Complexity}
Consider Algorithm \ref{alg:wcub}. The space complexity of storing the samples is  $\cO( n)$. In addition, storing $S_i$, $p_{n^{\prime}}$, and $p_b$ costs $\cO( n)$. The total space complexity is $\cO( n)$. It is easy to check that the time complexity of Algorithm \ref{alg:wclb}-Algorithm \ref{alg:sclb} is $\cO(n)$.

\begin{algorithm}[tb]
    \caption{\texttt{Wasserstein upper confidence bound}}
    \label{alg:wcub}
    \begin{algorithmic}[1]
        \STATE{Input: $b$, samples $\boldsymbol{X}=X_1, X_2, \cdots, X_n$, risk measure $\rT$, $c>0$}
        \STATE{Sort the $n$ samples in ascent order $X_{(1)}\leq X_{(2)} \cdots \leq X_{(n)}$}
        \STATE{Initialize $S_1=\frac{b-X_{(n)}}{n}$}
        \FOR{$i = 1:n$}
            \IF{$S_i \leq c$}
                \STATE{$S_{i+1} = S_{i}+\frac{1}{n}(b-X_{(n-i)})$}
            \ELSE
                \STATE{$n^{\prime}=n+1-i$}
                \STATE{\textbf{break}}
            \ENDIF
        \ENDFOR
        \STATE{$p_{n^{\prime}}=\frac{S_i-c}{b-X_{(n^{\prime})}}$, $p_b=\frac{i}{n}-p_{n^{\prime}}$}
        \STATE{$\overline{F^1_n}=\frac{1}{n}\sum_{i}^{n^{\prime}-1}\mathbb{I}\{X_{(i)}\leq\cdot\}+p_{n^{\prime}}\mathbb{I}\{X_{(n^{\prime})}\leq\cdot\}+p_b\mathbb{I}\{b\leq \cdot\}$}
        \STATE{Output: $\rT\prt{\overline{F^1_n}}$}
  \end{algorithmic}
\end{algorithm}

\begin{algorithm}[tb]
    \caption{\texttt{Wasserstein lower confidence bound}}
    \label{alg:wclb}
    \begin{algorithmic}[1]
        \STATE{Input: $a$, samples $\boldsymbol{X}=X_1, X_2, \cdots, X_n$, risk measure $\rT$, $c>0$}
        \STATE{Sort the $n$ samples in ascent order $X_{(1)}\leq X_{(2)} \cdots \leq X_{(n)}$}
        \STATE{Initialize $S_1=\frac{X_{(n)}-X_{(n-1)}}{n}$}
        \FOR{$i = 1:n-1$}
            \IF{$S_i \leq c$}
                \STATE{$S_{i+1} = S_{i}+\frac{i+1}{n}(X_{(n-i)}-X_{(n-1-i)})$}
            \ELSE
                \STATE{$n^{\prime}=n+1-i$}
                \STATE{\textbf{break}}
            \ENDIF
        \ENDFOR
        \STATE{$b^-=X_{(n^{\prime}-1)}+\frac{n(S_i-c)}{i}$}
        \STATE{$\underline{F^1_n}=\frac{1}{n}\sum_{i}^{n^{\prime}-1}\mathbb{I}\{X_{i}\leq\cdot\}+\frac{i}{n}\mathbb{I}\{b^-\leq\cdot\}$}
        \STATE{Output: $\rT\prt{\underline{F^1_n}}$}
  \end{algorithmic}
\end{algorithm}

\begin{algorithm}[tb]
    \caption{\texttt{Supremum upper confidence bound}}
    \label{alg:scub}
    \begin{algorithmic}[1]
        \STATE{Input: $b$, samples $\boldsymbol{X}=X_1, X_2, \cdots, X_n$, risk measure $\rT$, $c>0$}
        \STATE{Sort the $n$ samples in ascent order $X_{(1)}\leq X_{(2)} \cdots \leq X_{(n)}$}
        \STATE{Initialize $i=1$}
        \WHILE{$\frac{i}{n}\leq c$}
        \STATE{$i=i+1$}
        \STATE{$l=i$}
        \ENDWHILE{}
        \STATE{$\overline{F^{\infty}_n}=(\frac{l}{n}-c)\mathbb{I}\{X_{(l)}\leq\cdot\}+\frac{1}{n}\sum_{i=l+1}^{n}\mathbb{I}\{X_{(i)}\leq\cdot\}+c\mathbb{I}\{b\leq \cdot\}$}
        \STATE{Output: $\rT\prt{\overline{F^{\infty}_n}}$}
  \end{algorithmic}
\end{algorithm}

\begin{algorithm}[tb]
    \caption{\texttt{Supremum lower confidence bound}}
    \label{alg:sclb}
    \begin{algorithmic}[1]
        \STATE{Input: $a$, samples $\boldsymbol{X}=X_1, X_2, \cdots, X_n$, risk measure $\rT$, $c>0$}
        \STATE{Sort the $n$ samples in ascent order $X_{(1)}\leq X_{(2)} \cdots \leq X_{(n)}$}
        \STATE{Initialize $i=n$}
        \WHILE{$\frac{i}{n}+c\ge1$}
        \STATE{$i=i-1$}
        \STATE{$l=i$}
        \ENDWHILE{}
        \STATE{$\underline{F^{\infty}_n}=c\mathbb{I}\{a\leq \cdot\}+\frac{1}{n}\sum_{i=1}^{l}\mathbb{I}\{X_{(i)}\leq\cdot\}+(1-\frac{l}{n}-c)\mathbb{I}\{X_{(l+1)}\leq\cdot\}$}
        \STATE{Output: $\rT\prt{\underline{F^{\infty}_n}}$}
  \end{algorithmic}
\end{algorithm}

\section{Experiments}
\label{app:num}  
\subsection{Confidence Bounds}
We consider five beta distributions with different parameters. The specific parameters $(A,B)$ is shown above in each figure. Unless otherwise specified, we always use $N=10^5$ samples, $\alpha=0.05$, $\beta=1$, and $\delta=0.05$. For convenience, we use $c^{1}_n =(b-a)c^{\infty}_n$. We plot the CIs for ERM and CVaR for varying sample size and varying risk parameter in Figure 6-13.
\paragraph{Varying sample size.}
Figure 6-7 and Figure 10-11 show how the UCB and LCB of CVaR and ERM change as the number of samples increase. Our bounds remains tighter than the LC-based bounds.
\paragraph{Varying risk parameter.}
Figure 8-9 show how the bounds of CVaR change as $\alpha$ varies, and Figure 12-13 show how the bounds of ERM change as $\beta$ varies. Smaller $\alpha$ and larger $\beta$ represents higher risk sensitivity and induces higher non-linearity and larger LC. These results demonstrate that our bounds achieve more improvement for more risk-sensitive parameter.
\begin{figure*}[ht]
     \centering
     \begin{subfigure}[b]{0.19\textwidth}
         \centering
         \includegraphics[width=\textwidth]{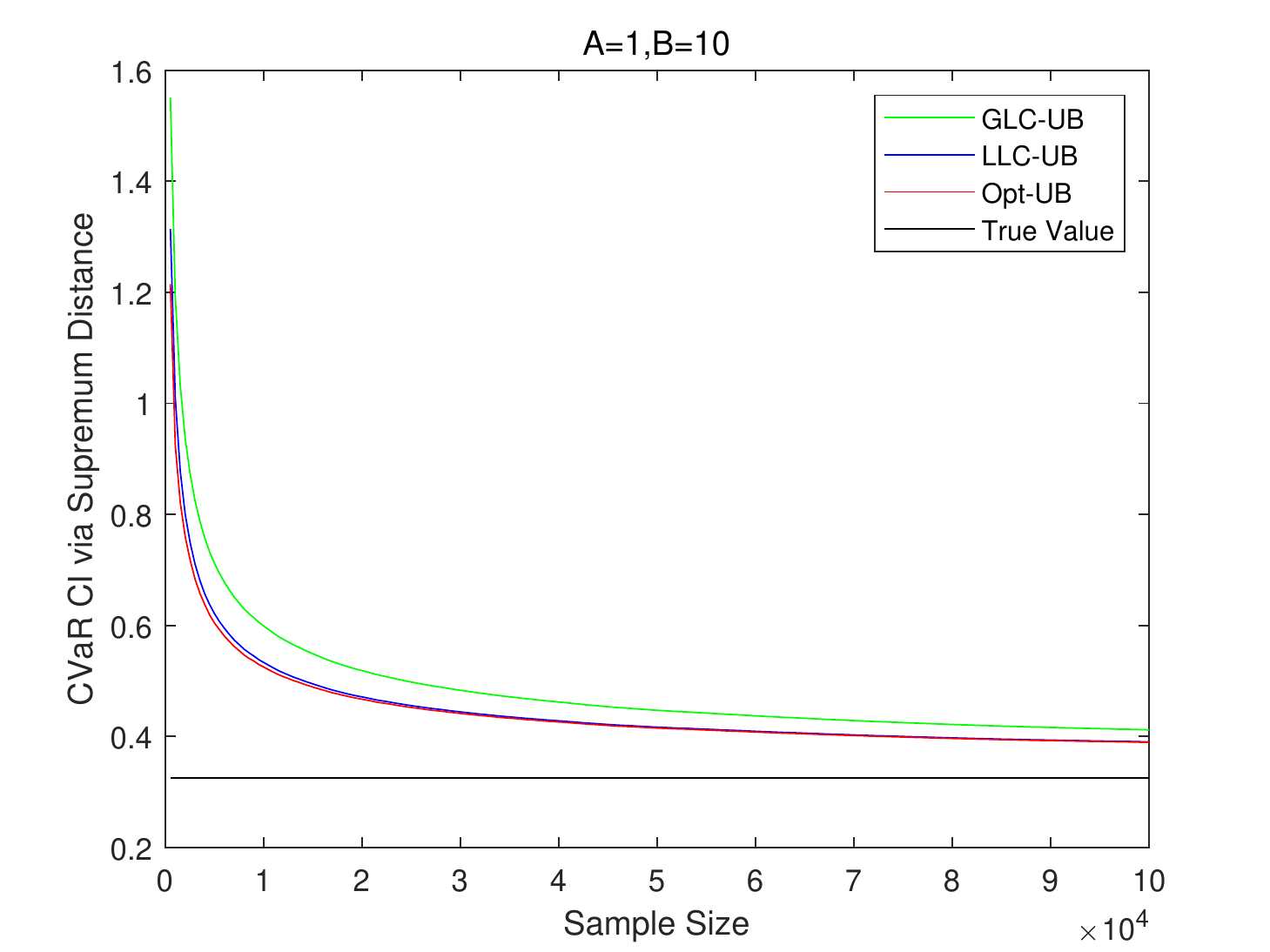}
     \end{subfigure}
     \hfill
     \begin{subfigure}[b]{0.19\textwidth}
         \centering
         \includegraphics[width=\textwidth]{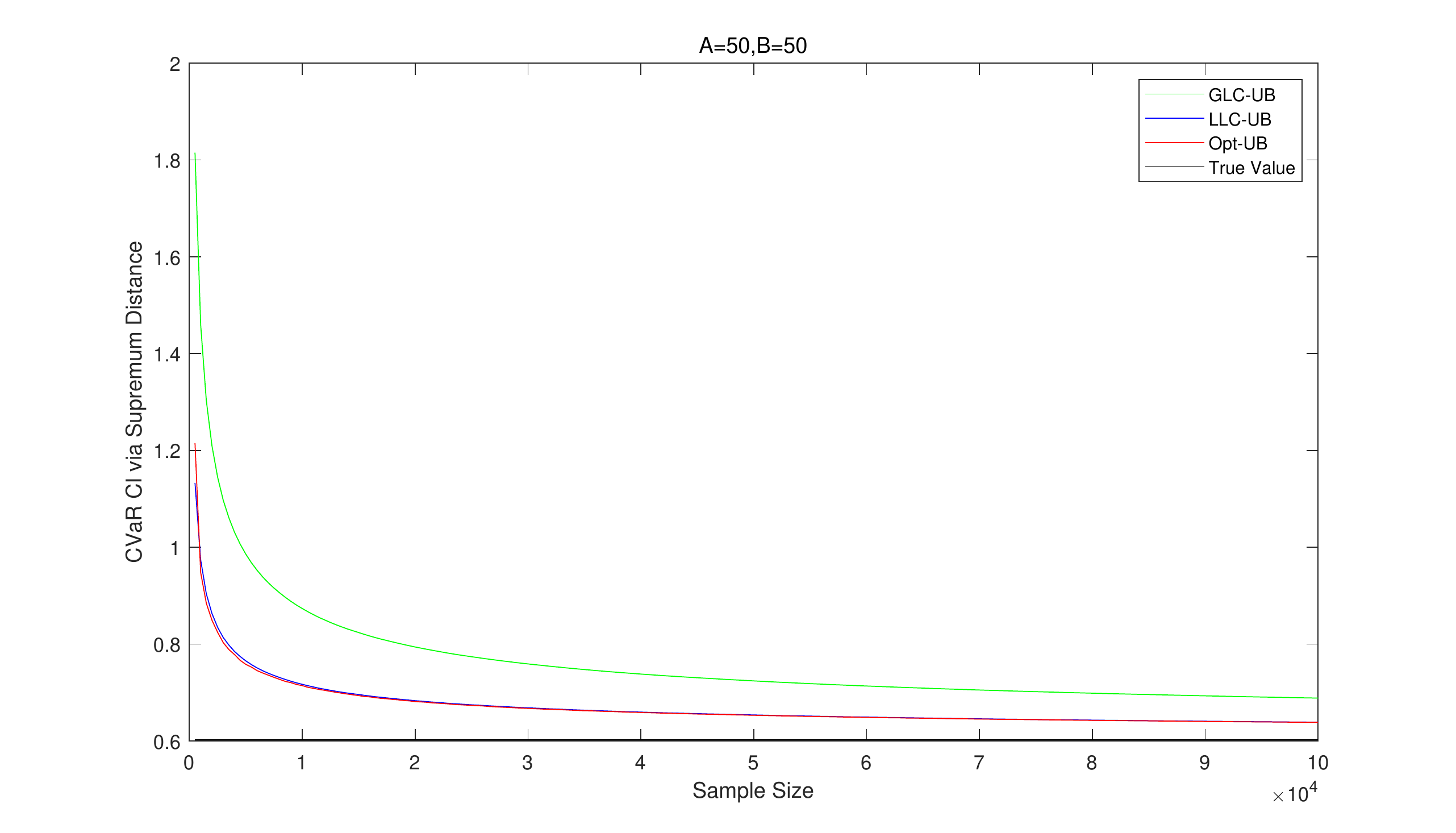}
     \end{subfigure}
     \begin{subfigure}[b]{0.19\textwidth}
         \centering
         \includegraphics[width=\textwidth]{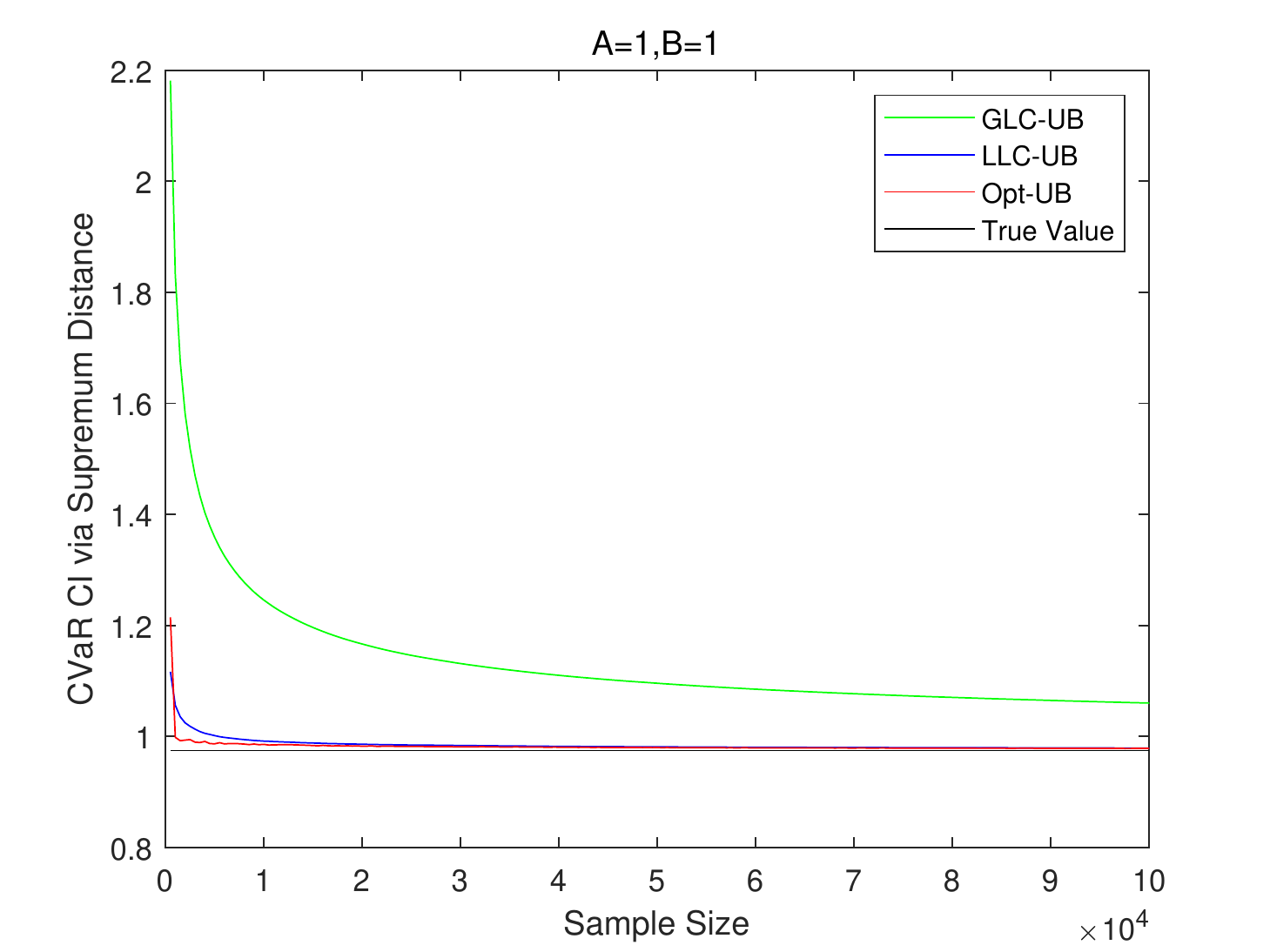}
     \end{subfigure}
     \hfill
     \begin{subfigure}[b]{0.19\textwidth}
         \centering
         \includegraphics[width=\textwidth]{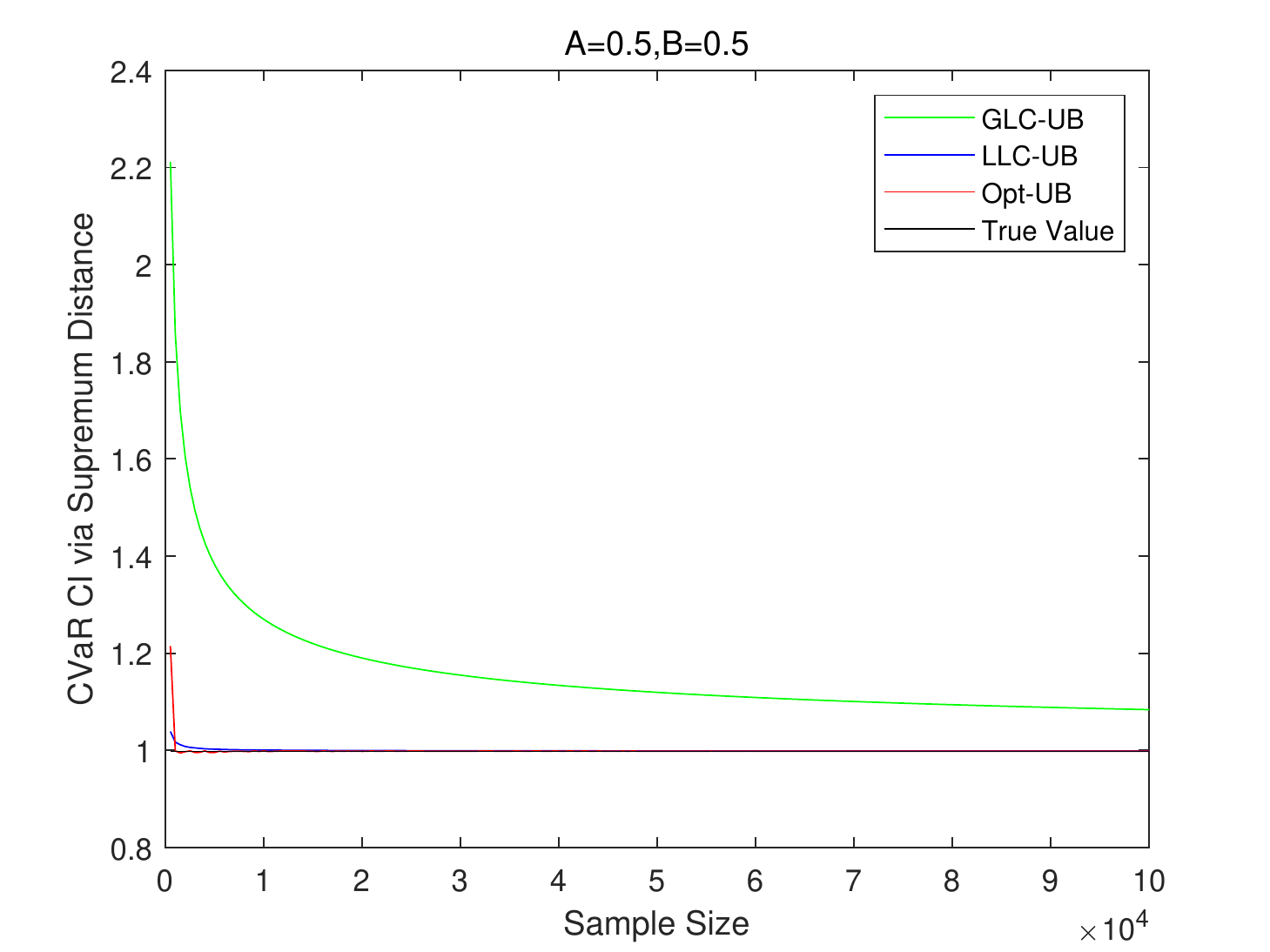}
     \end{subfigure}
     \begin{subfigure}[b]{0.19\textwidth}
         \centering
         \includegraphics[width=\textwidth]{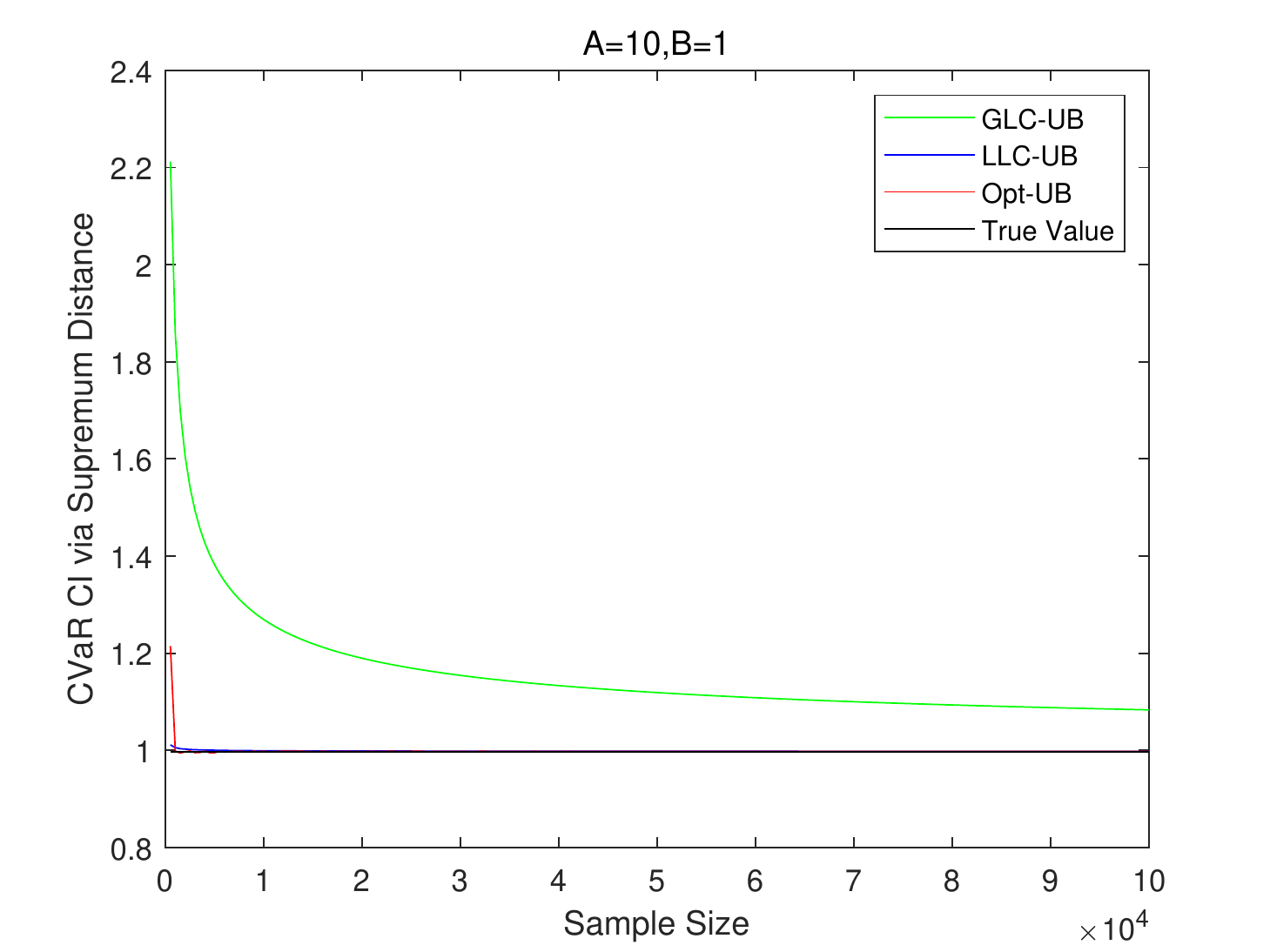}
     \end{subfigure}
        \caption{CVaR UCB with varying sample size}
        \label{fig:cvar_ub_n}
        \vspace{-0ex}
\end{figure*}

\begin{figure*}[ht]
     \centering
     \begin{subfigure}[b]{0.19\textwidth}
         \centering
         \includegraphics[width=\textwidth]{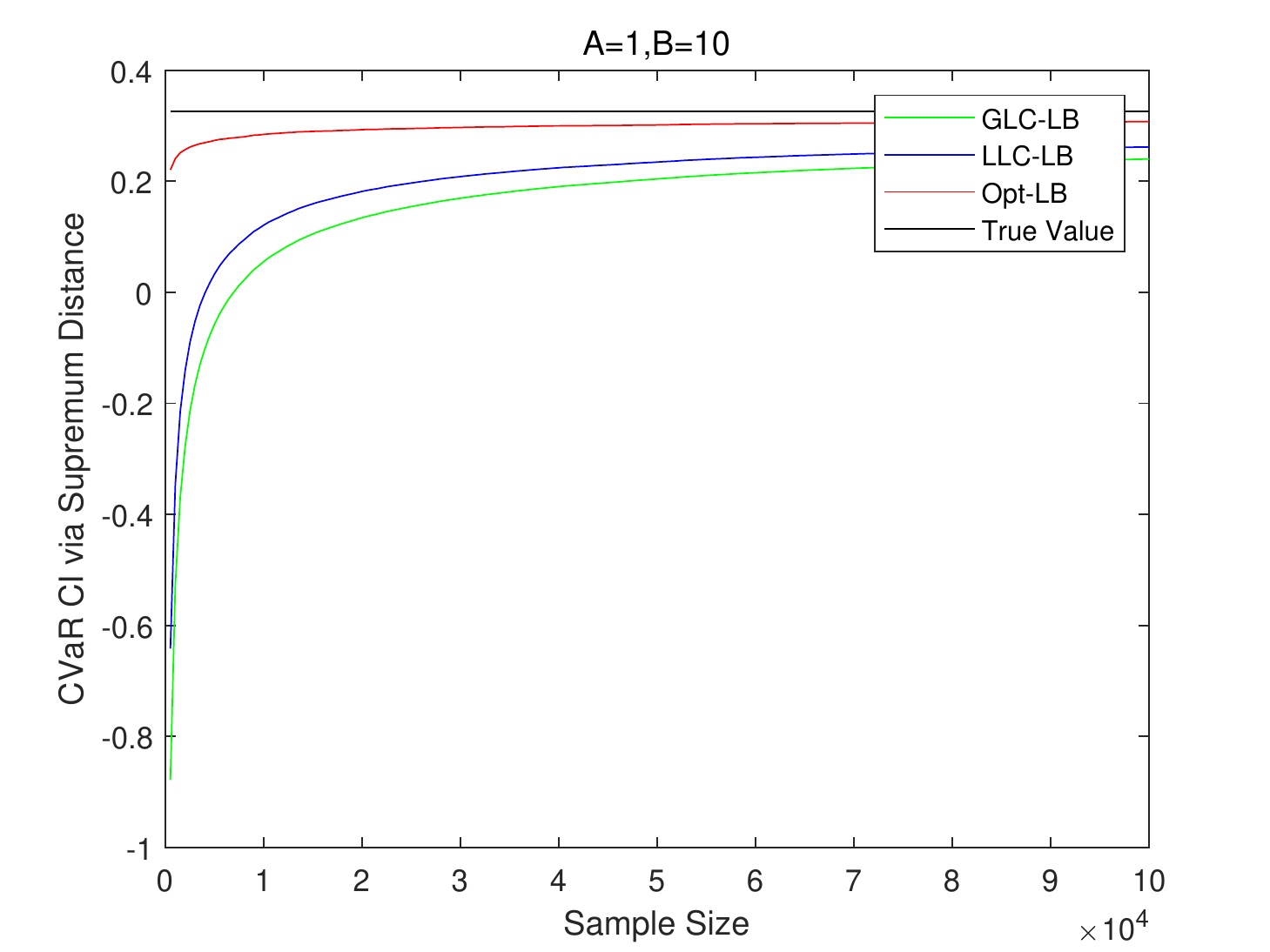}
     \end{subfigure}
     \hfill
     \begin{subfigure}[b]{0.19\textwidth}
         \centering
         \includegraphics[width=\textwidth]{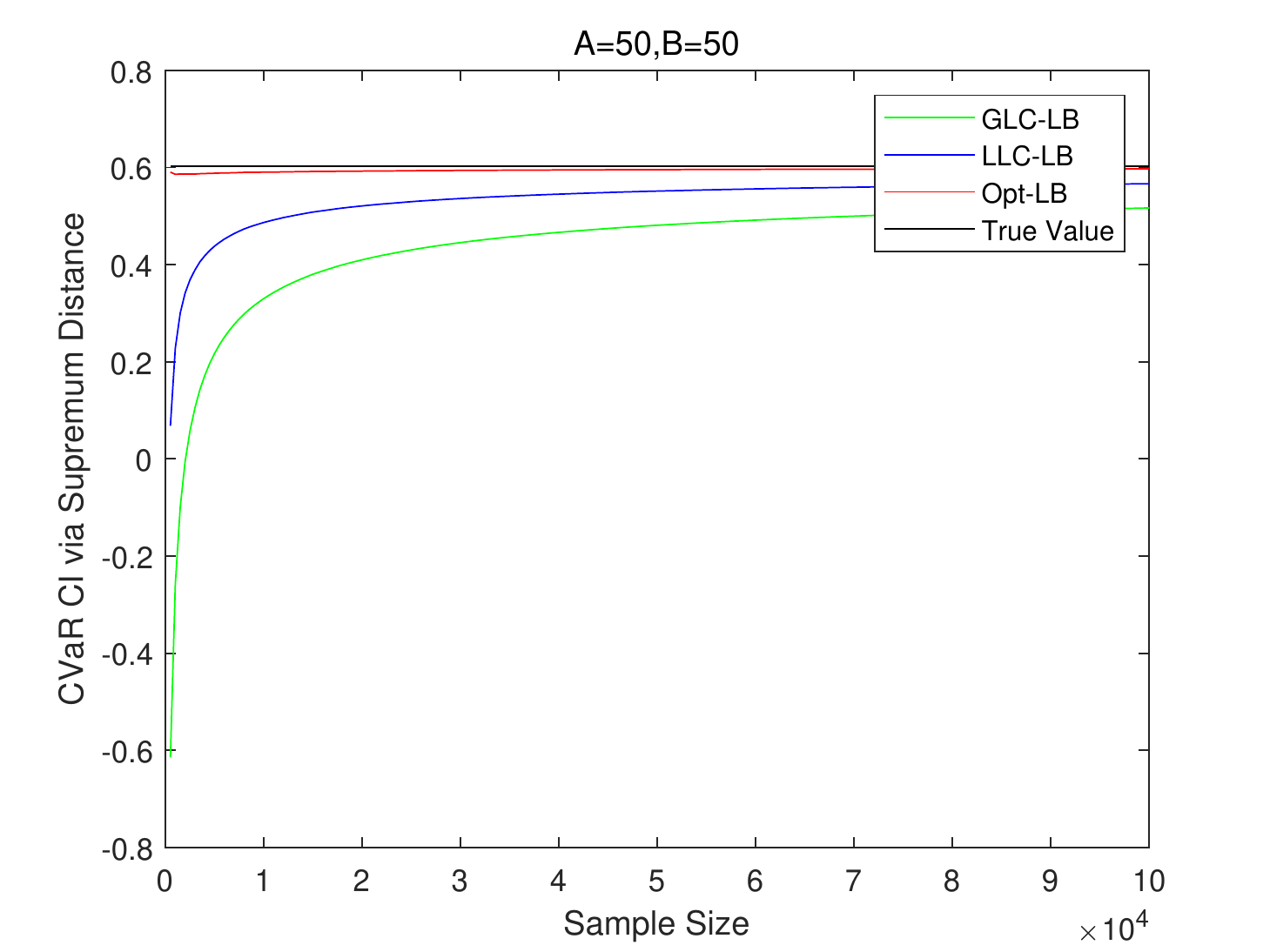}
     \end{subfigure}
     \begin{subfigure}[b]{0.19\textwidth}
         \centering
         \includegraphics[width=\textwidth]{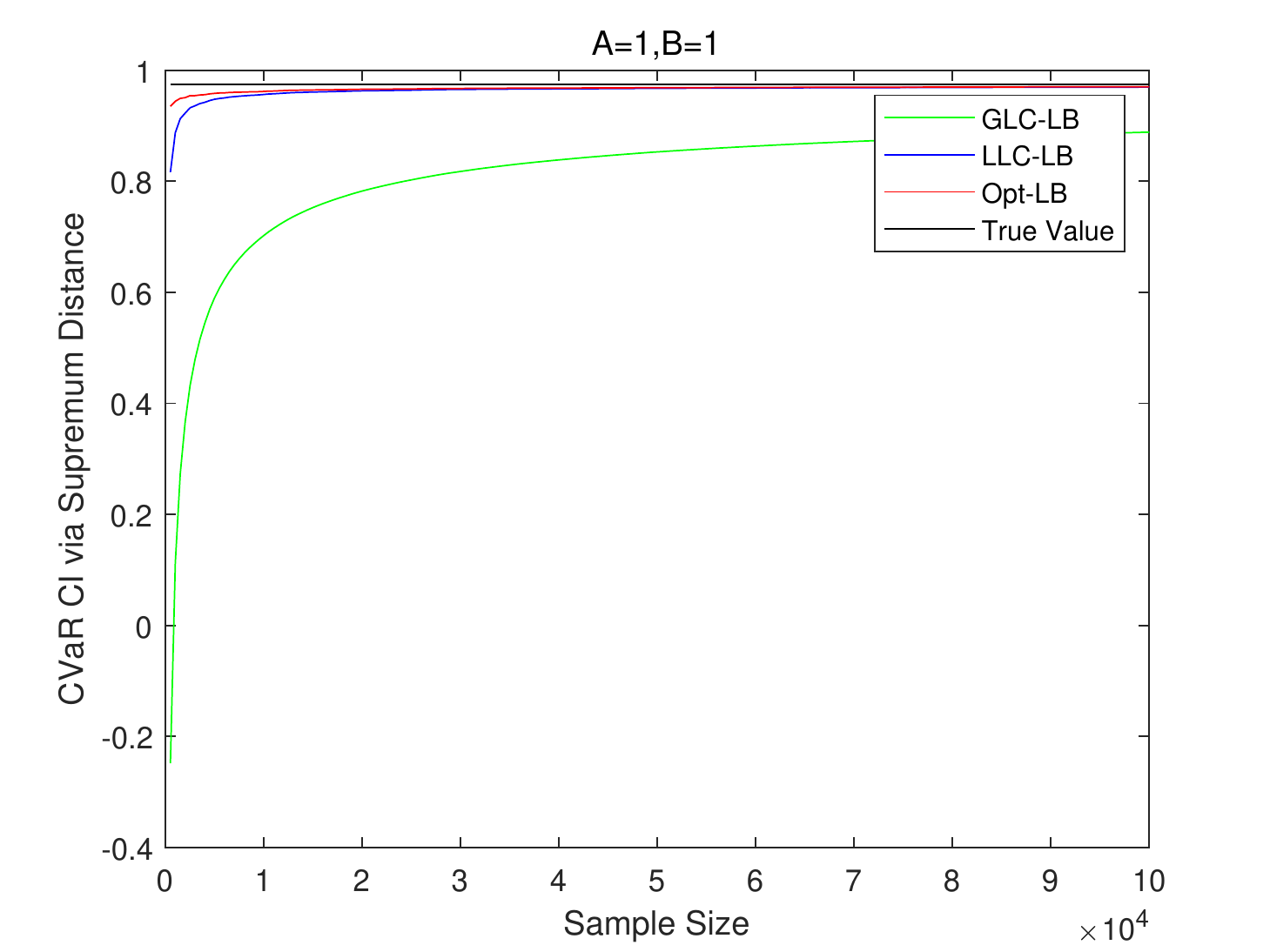}
     \end{subfigure}
     \hfill
     \begin{subfigure}[b]{0.19\textwidth}
         \centering
         \includegraphics[width=\textwidth]{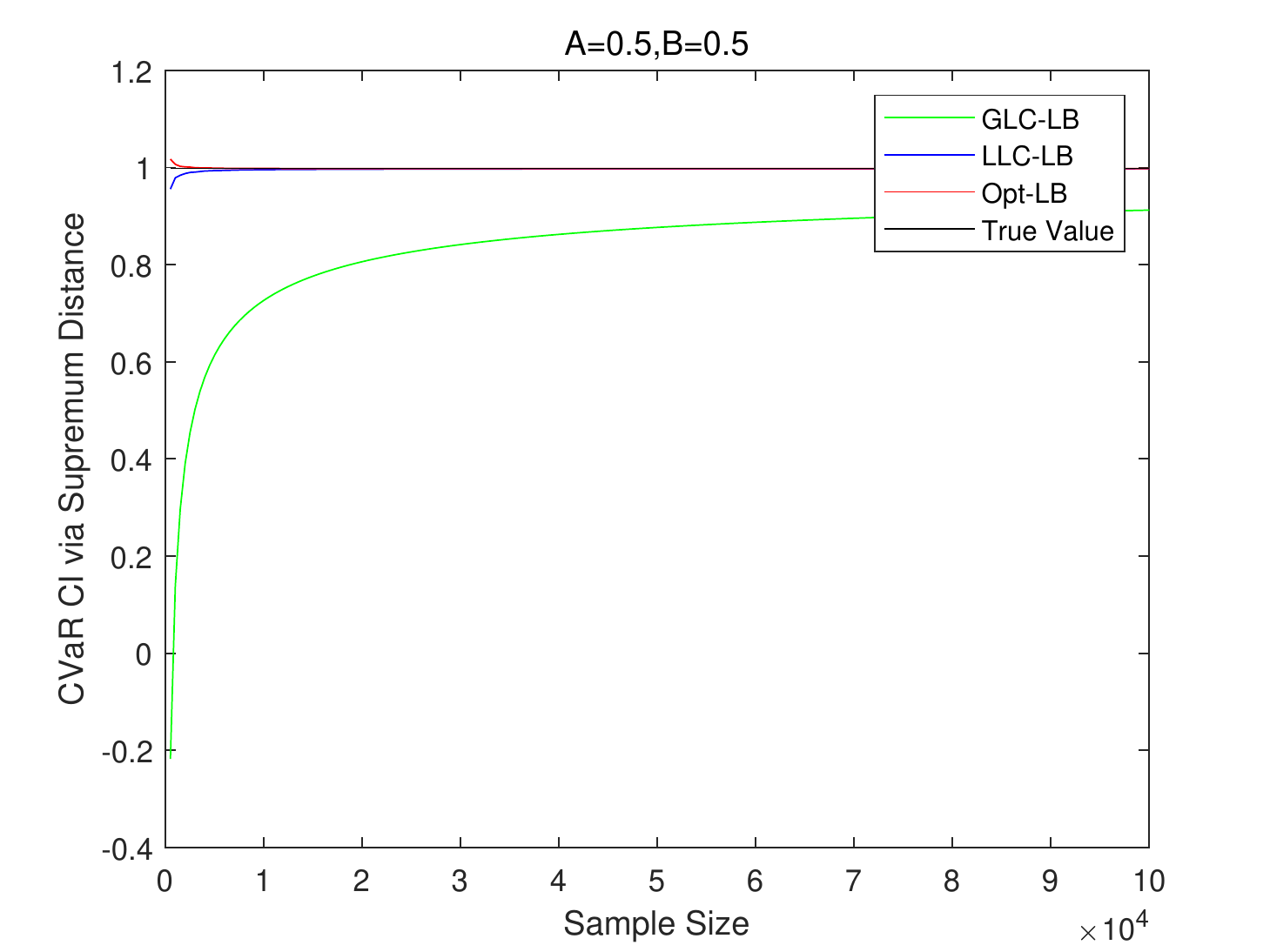}
     \end{subfigure}
     \begin{subfigure}[b]{0.19\textwidth}
         \centering
         \includegraphics[width=\textwidth]{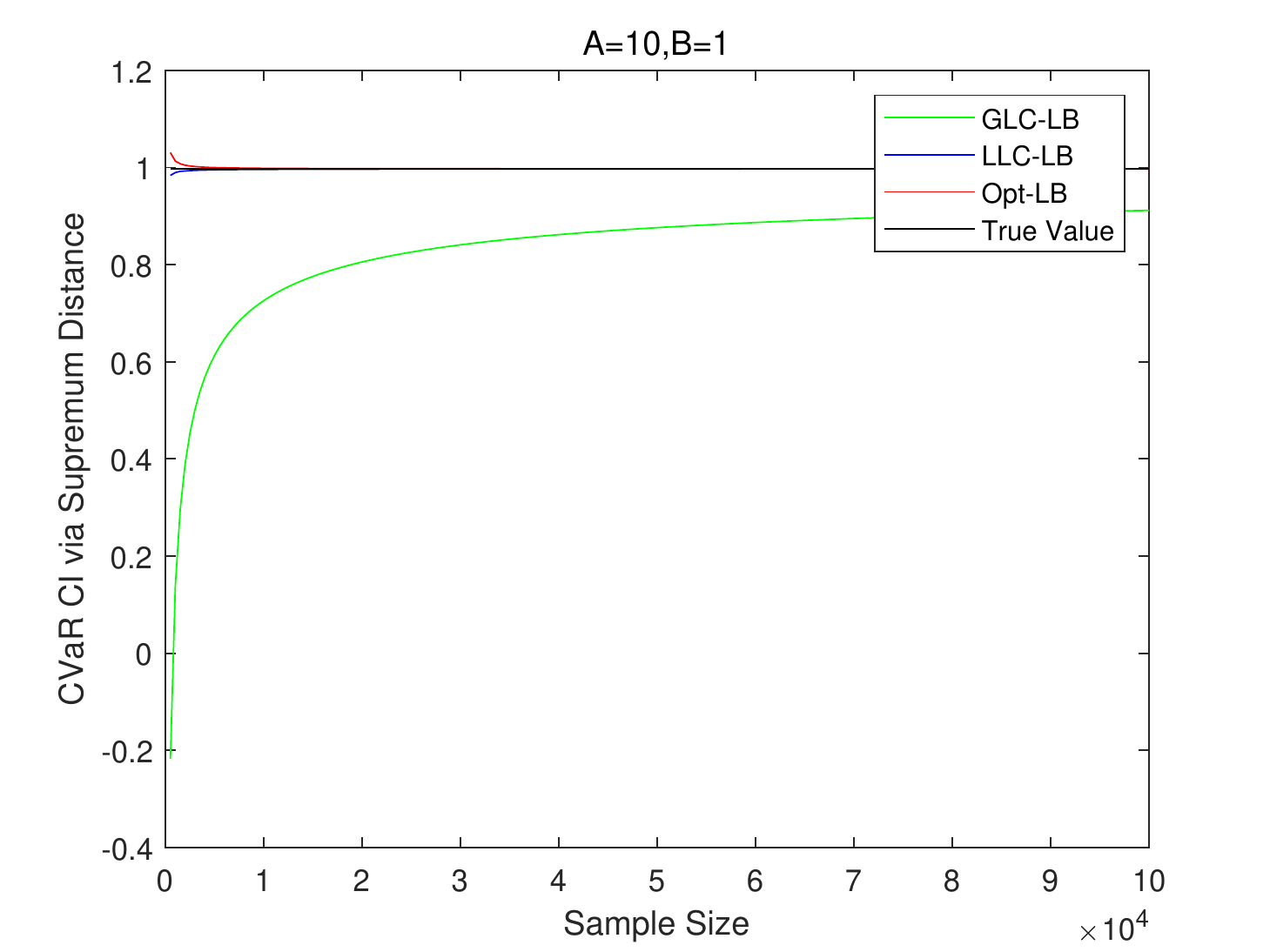}
     \end{subfigure}
        \caption{CVaR LCB with varying sample size}
        \label{fig:cvar_lb_n}
\end{figure*}

\begin{figure*}[ht]
     \centering
     \begin{subfigure}[b]{0.19\textwidth}
         \centering
         \includegraphics[width=\textwidth]{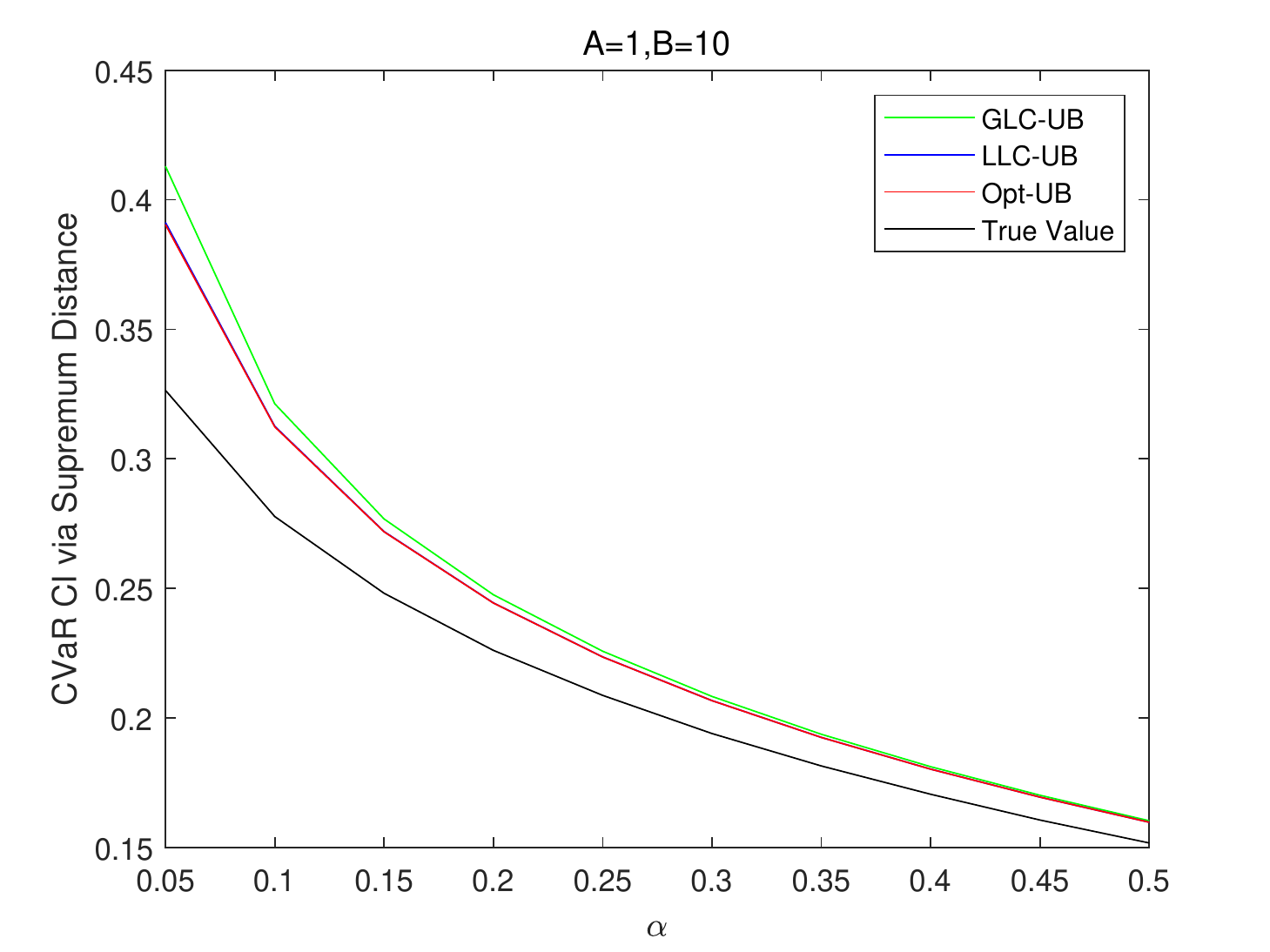}
     \end{subfigure}
     \hfill
     \begin{subfigure}[b]{0.19\textwidth}
         \centering
         \includegraphics[width=\textwidth]{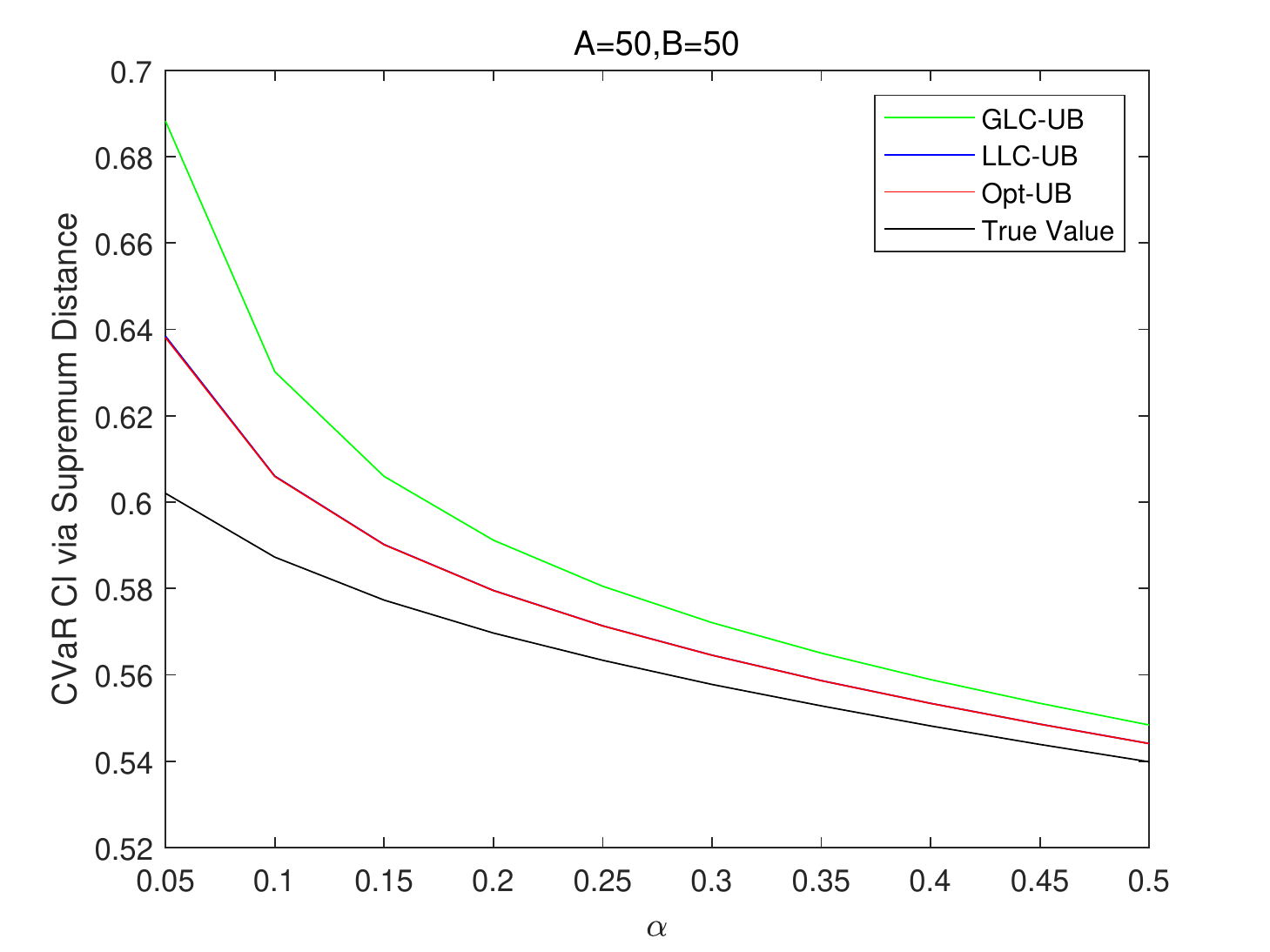}
     \end{subfigure}
     \begin{subfigure}[b]{0.19\textwidth}
         \centering
         \includegraphics[width=\textwidth]{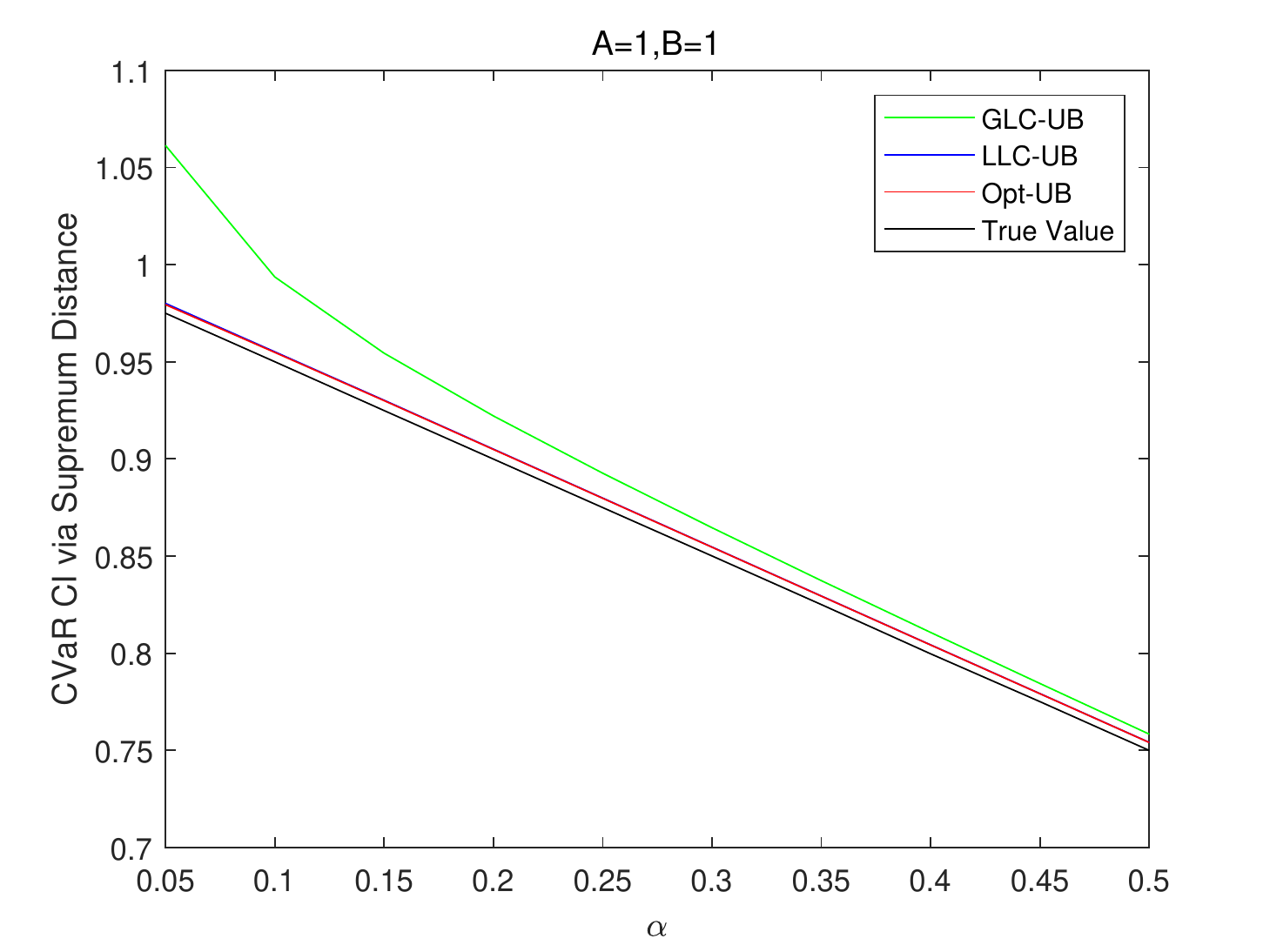}
     \end{subfigure}
     \hfill
     \begin{subfigure}[b]{0.19\textwidth}
         \centering
         \includegraphics[width=\textwidth]{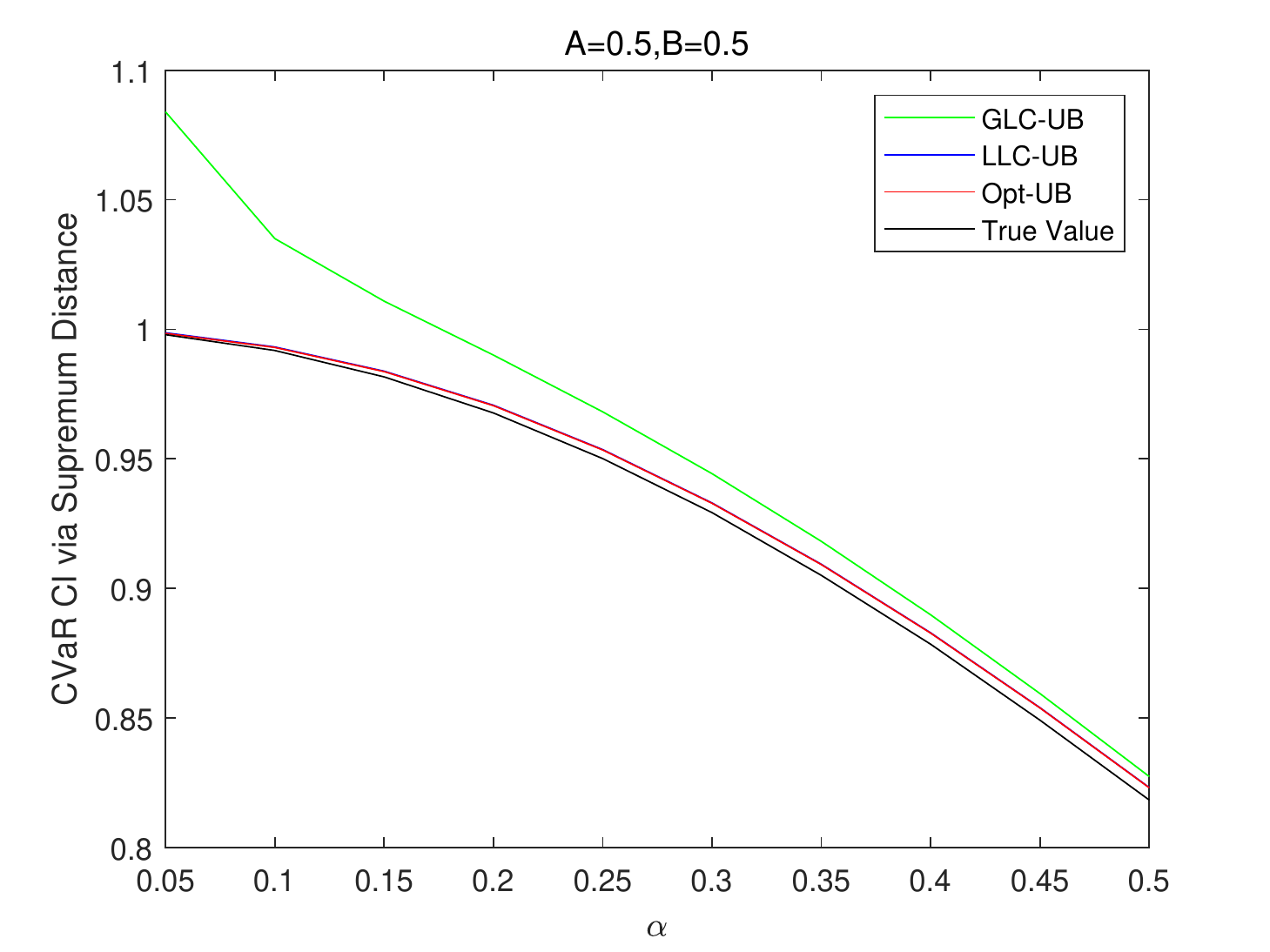}
     \end{subfigure}
     \begin{subfigure}[b]{0.19\textwidth}
         \centering
         \includegraphics[width=\textwidth]{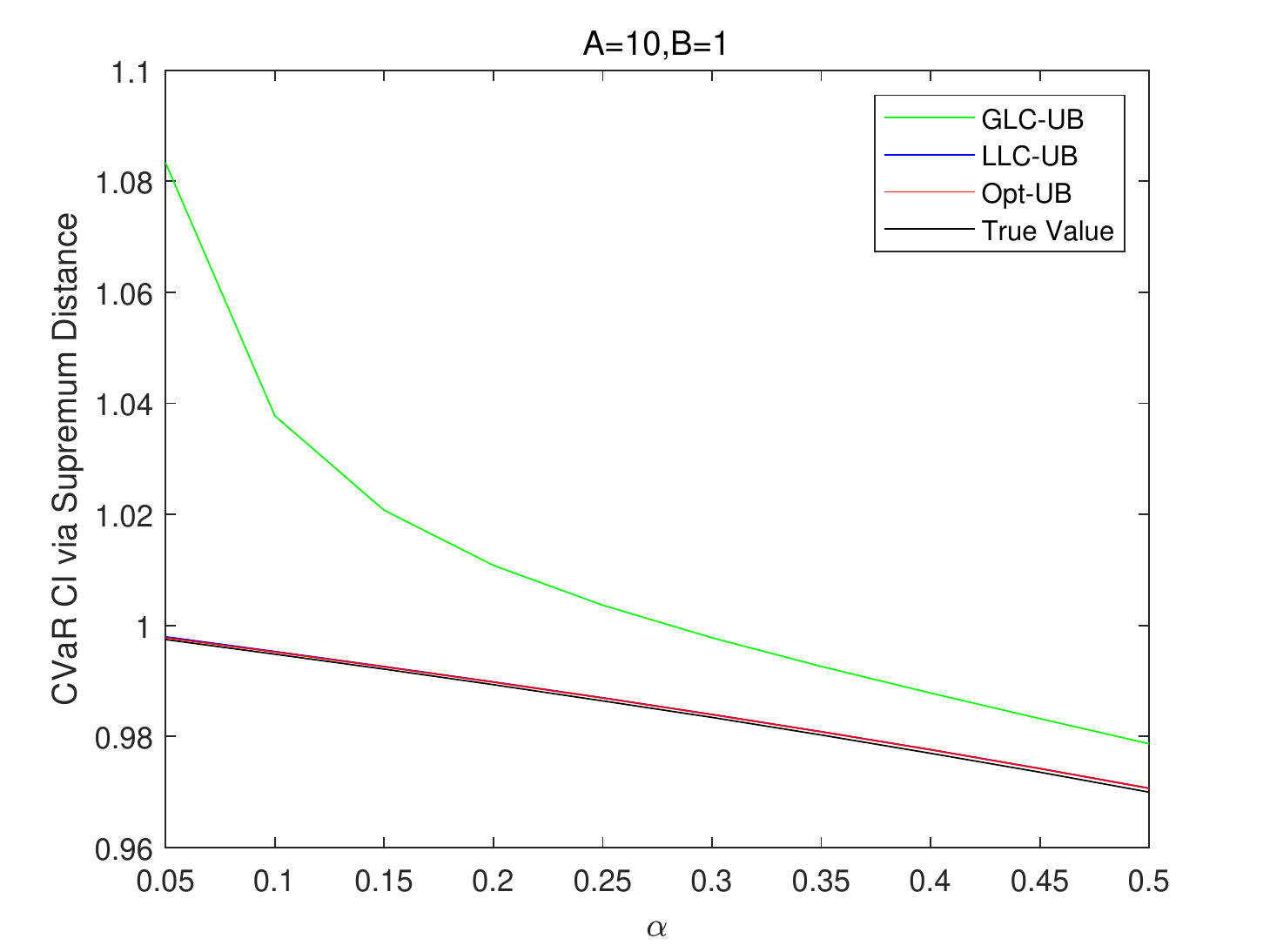}
     \end{subfigure}
        \caption{CVaR UCB with varying $\alpha$}
        \label{fig:cvar_ub_al}
        \vspace{-0ex}
\end{figure*}

\begin{figure*}[ht]
     \centering
     \begin{subfigure}[b]{0.19\textwidth}
         \centering
         \includegraphics[width=\textwidth]{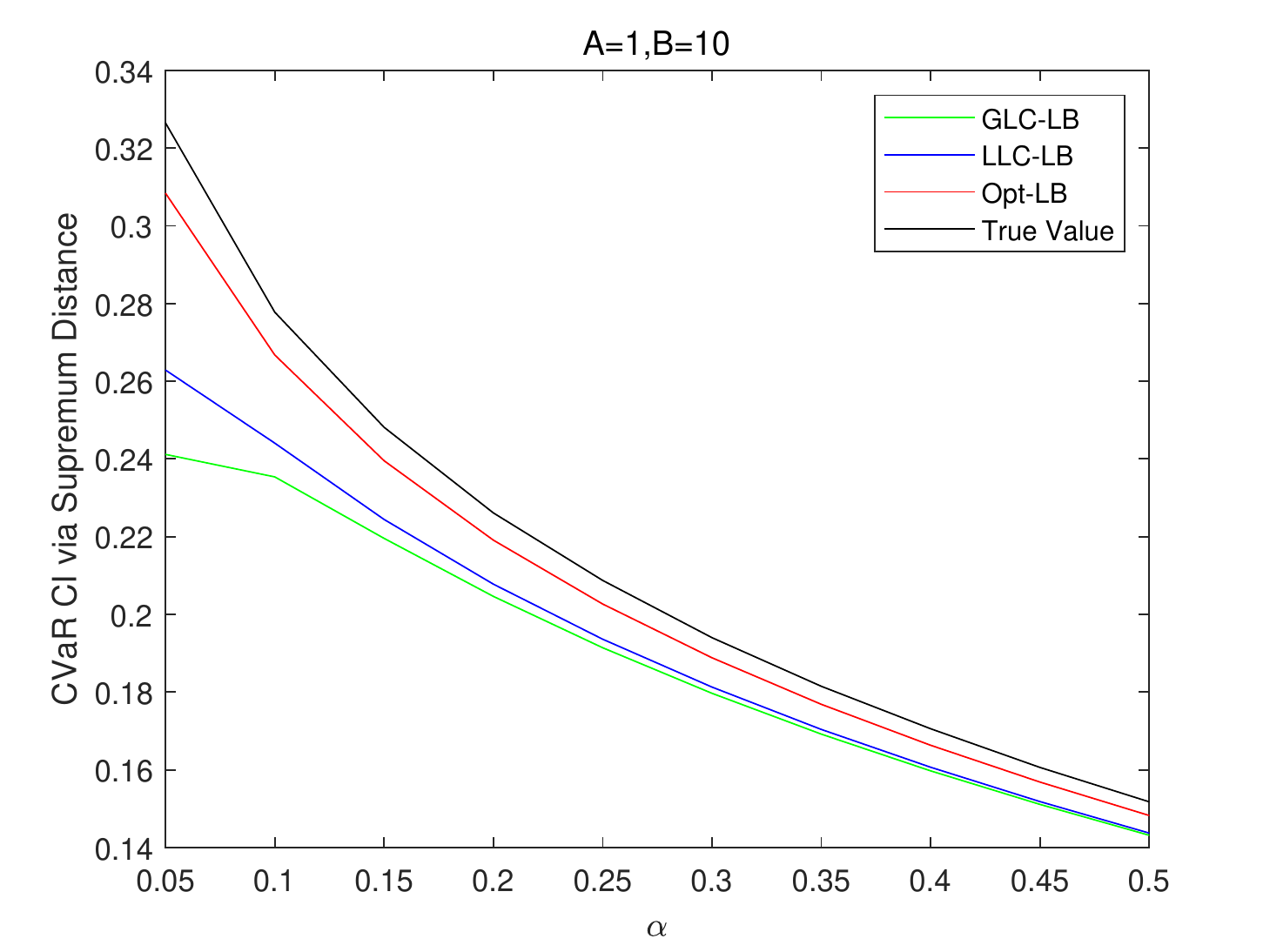}
     \end{subfigure}
     \hfill
     \begin{subfigure}[b]{0.19\textwidth}
         \centering
         \includegraphics[width=\textwidth]{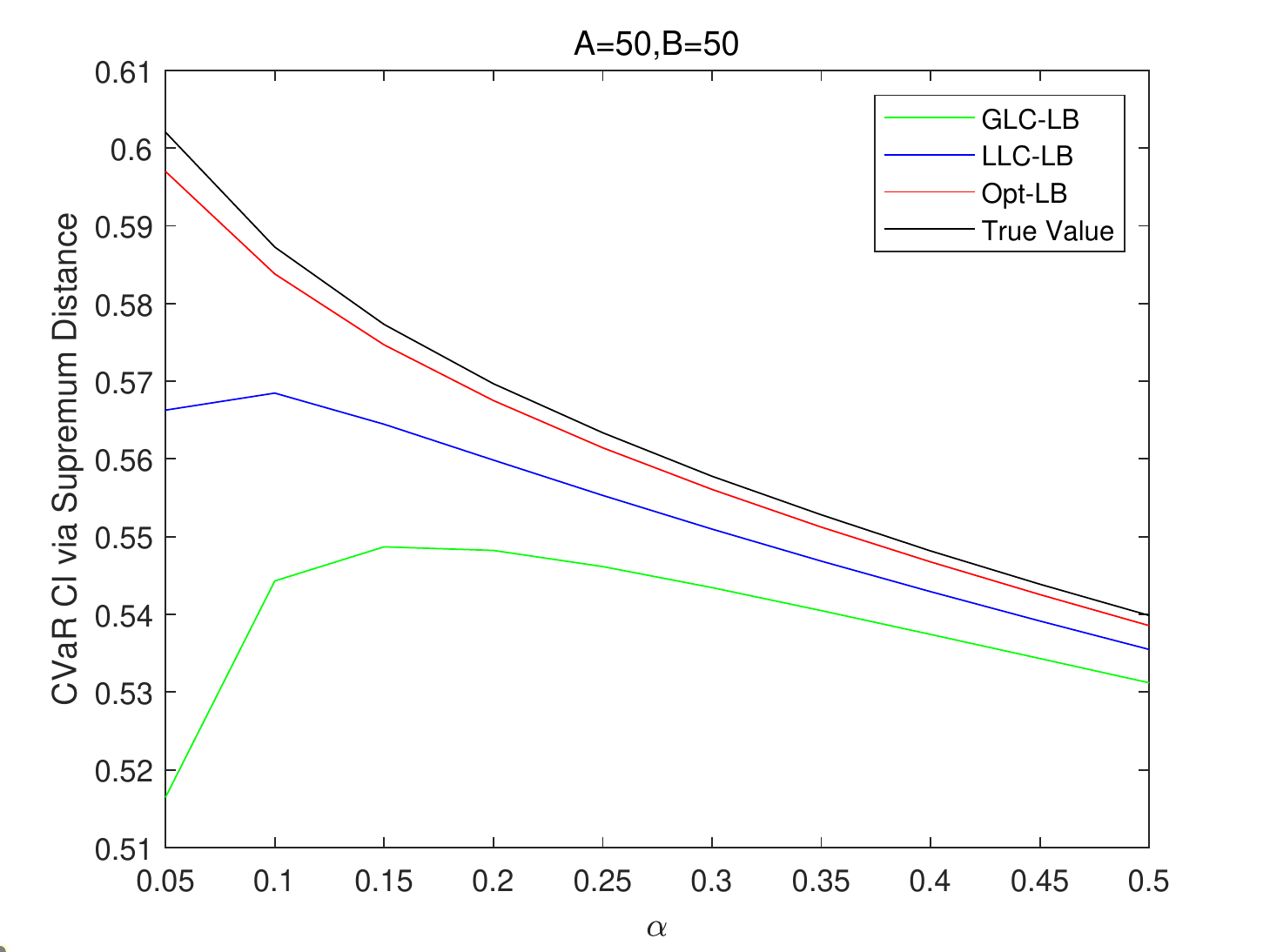}
     \end{subfigure}
     \begin{subfigure}[b]{0.19\textwidth}
         \centering
         \includegraphics[width=\textwidth]{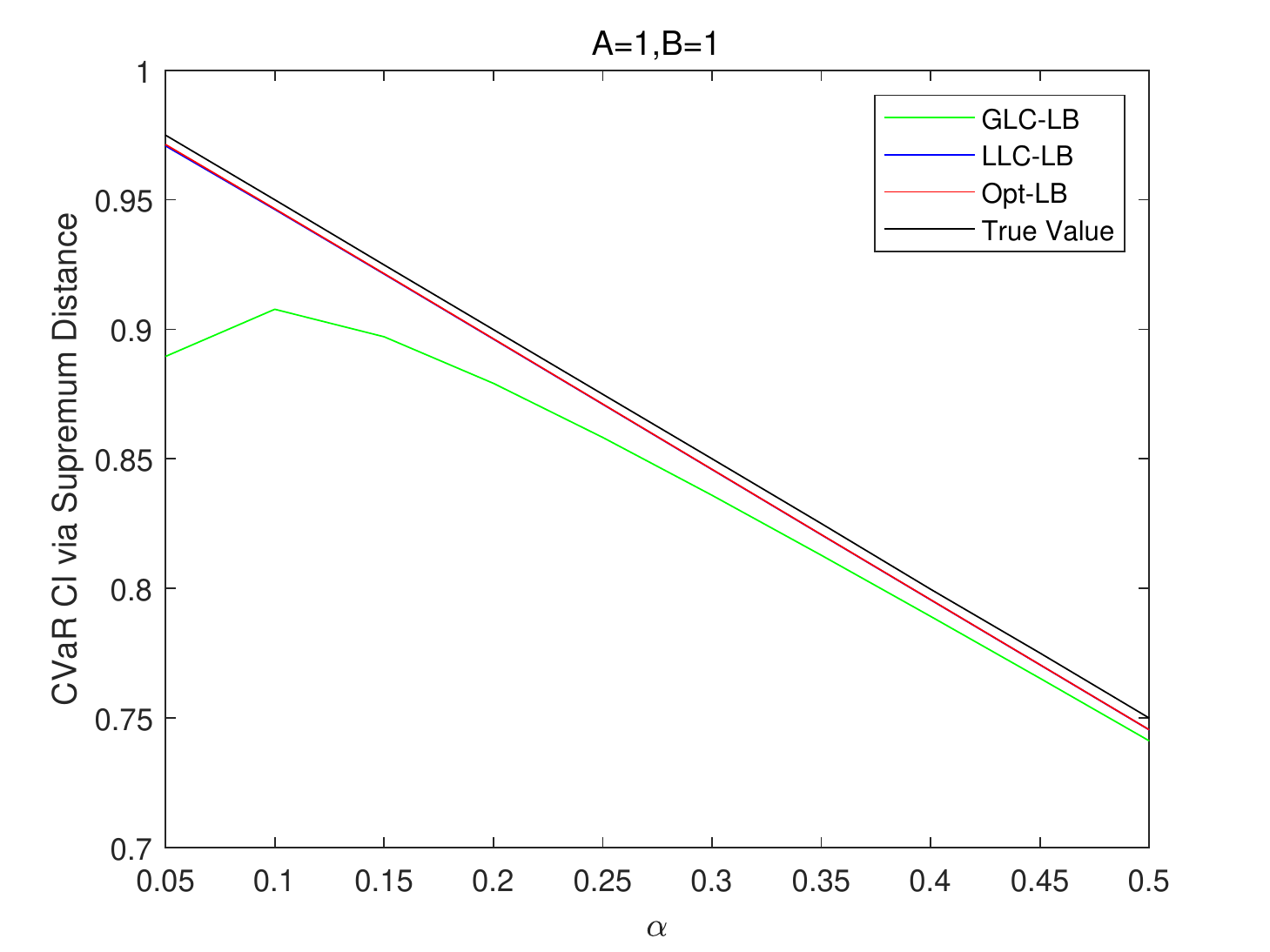}
     \end{subfigure}
     \hfill
     \begin{subfigure}[b]{0.19\textwidth}
         \centering
         \includegraphics[width=\textwidth]{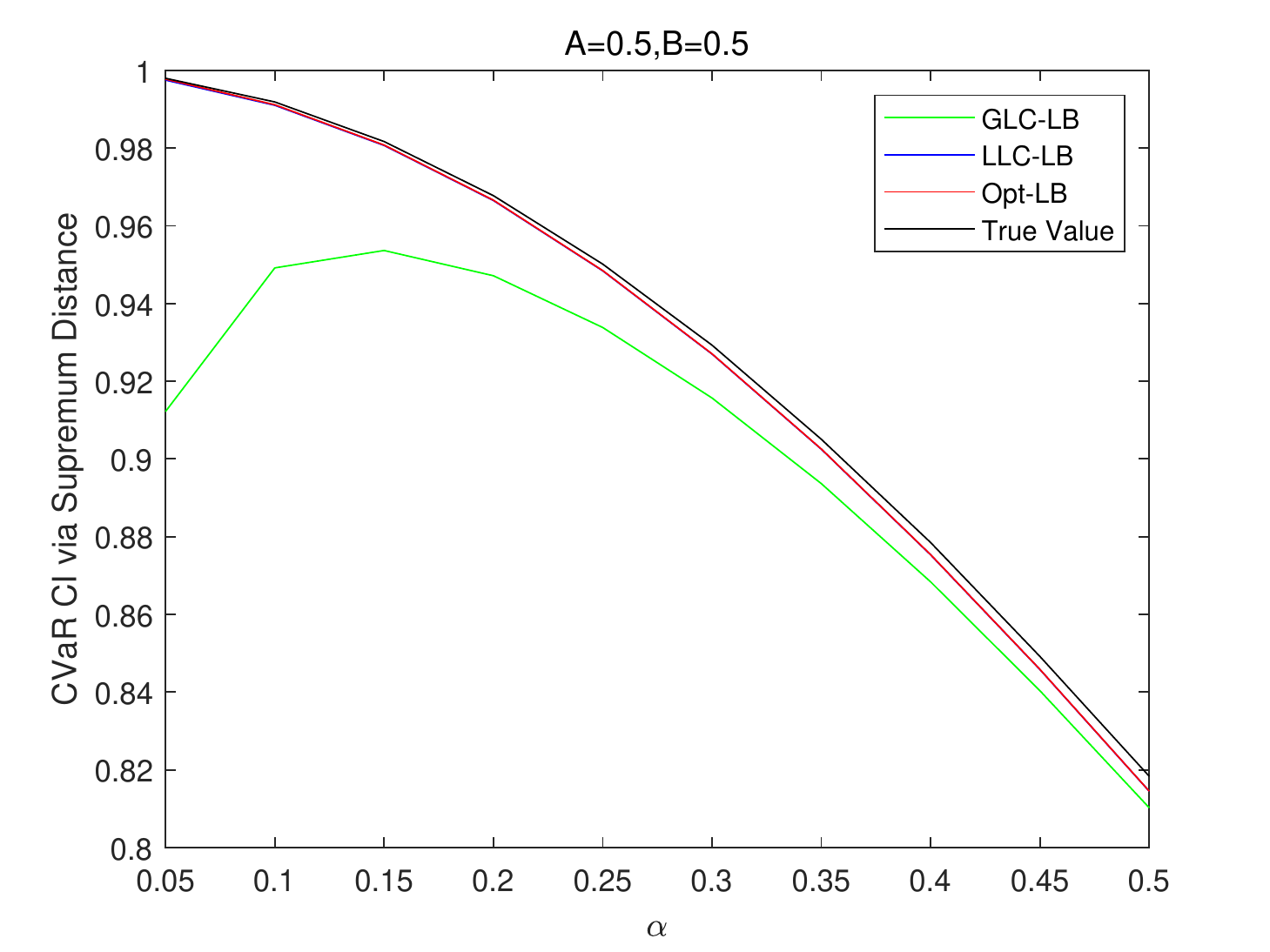}
     \end{subfigure}
     \begin{subfigure}[b]{0.19\textwidth}
         \centering
         \includegraphics[width=\textwidth]{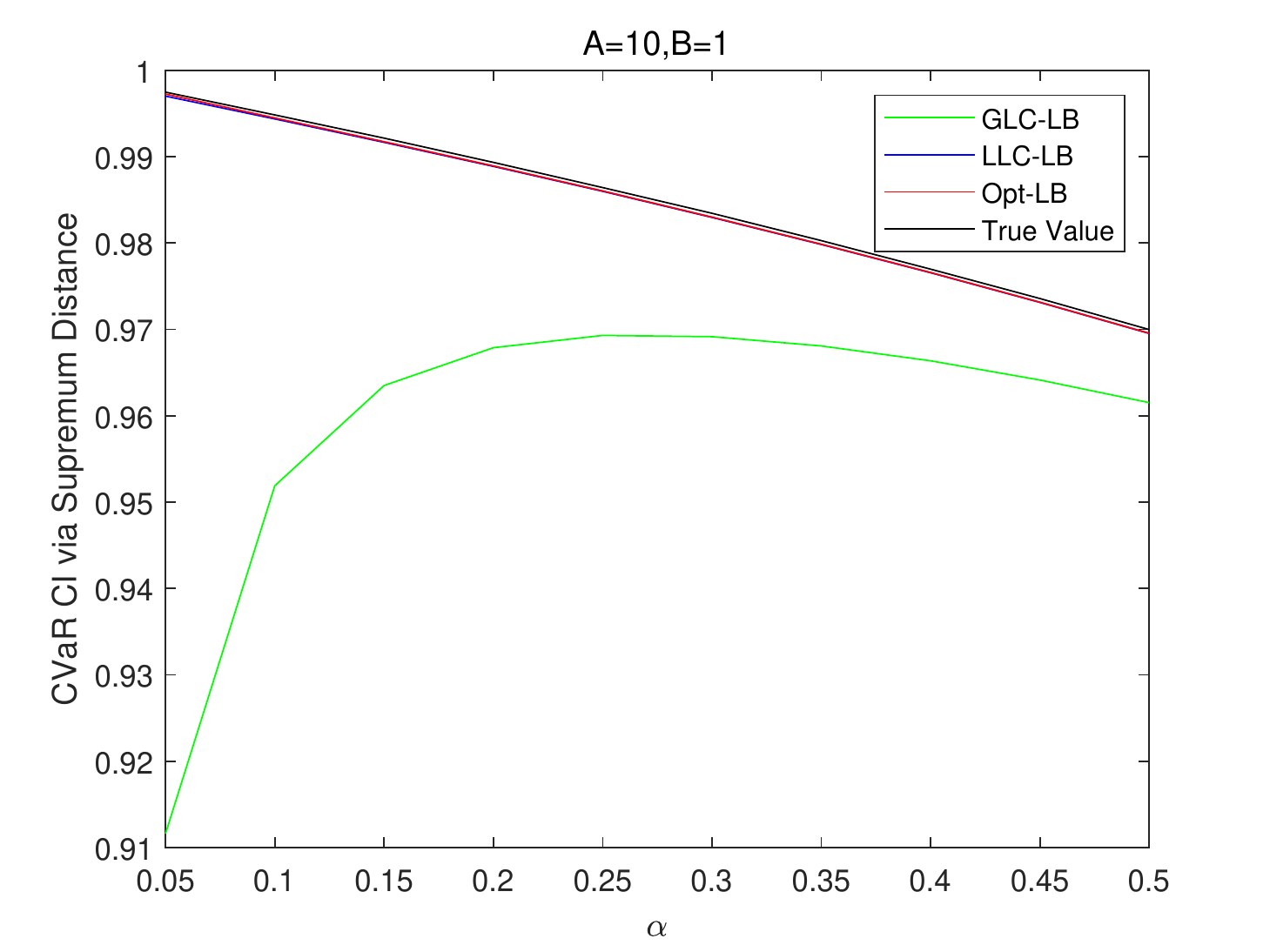}
     \end{subfigure}
        \caption{CVaR LCB with varying $\alpha$}
        \label{fig:cvar_lb_al}
\end{figure*}

\begin{figure*}[ht]
     \centering
     \begin{subfigure}[b]{0.33\textwidth}
         \centering
         \includegraphics[width=\textwidth]{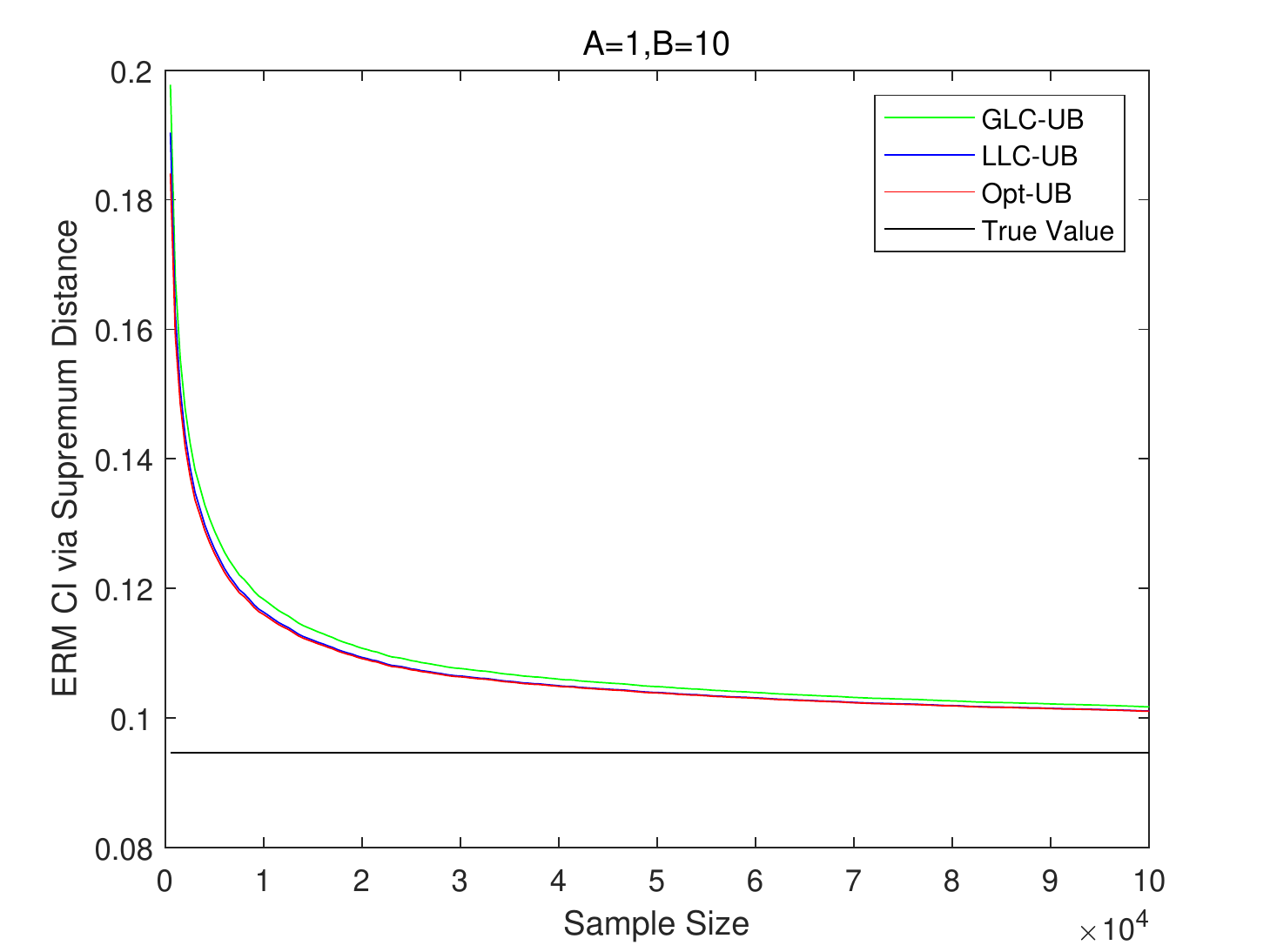}
     \end{subfigure}
     \hfill
     \begin{subfigure}[b]{0.33\textwidth}
         \centering
         \includegraphics[width=\textwidth]{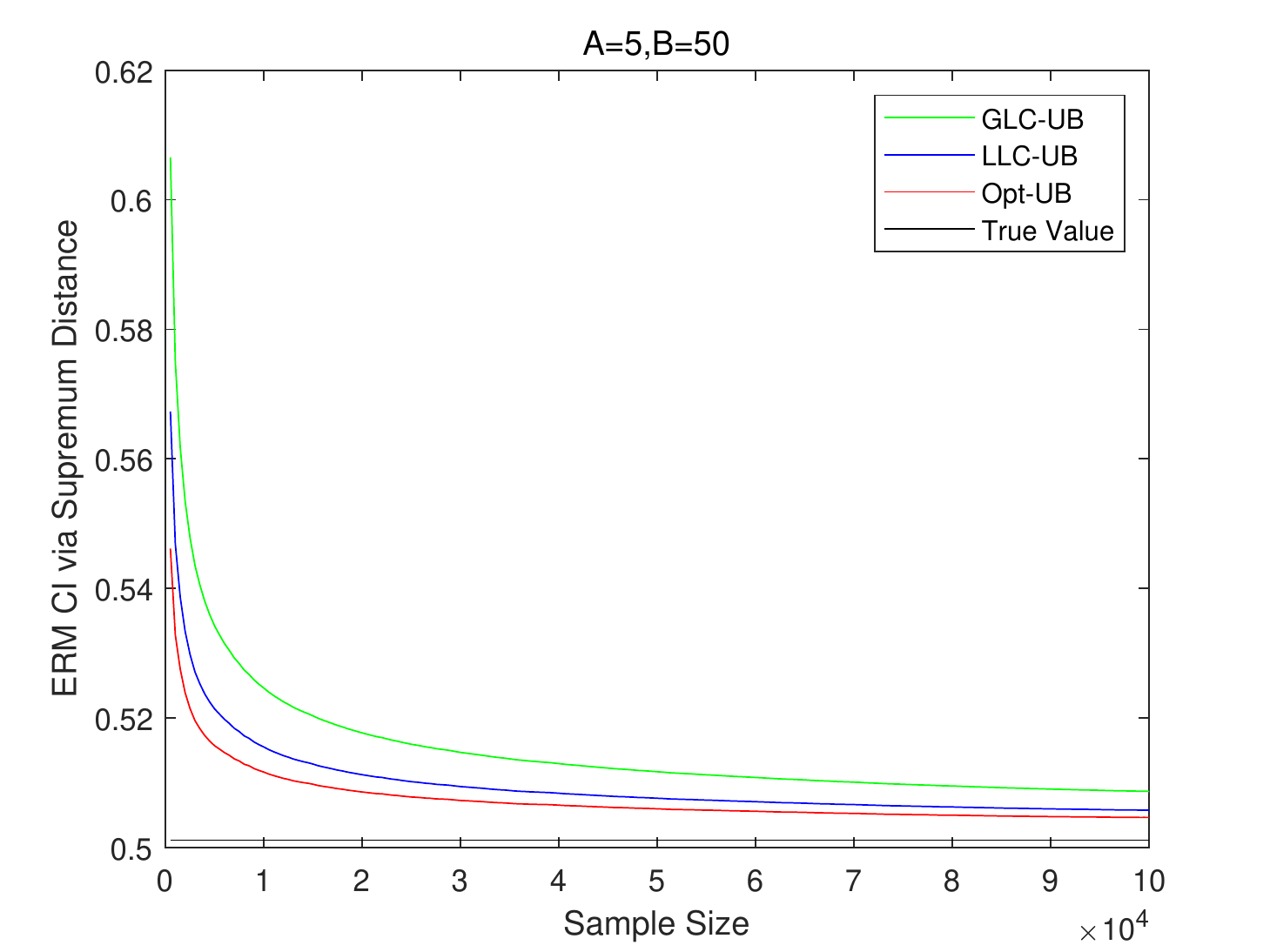}
     \end{subfigure}
     \begin{subfigure}[b]{0.33\textwidth}
         \centering
         \includegraphics[width=\textwidth]{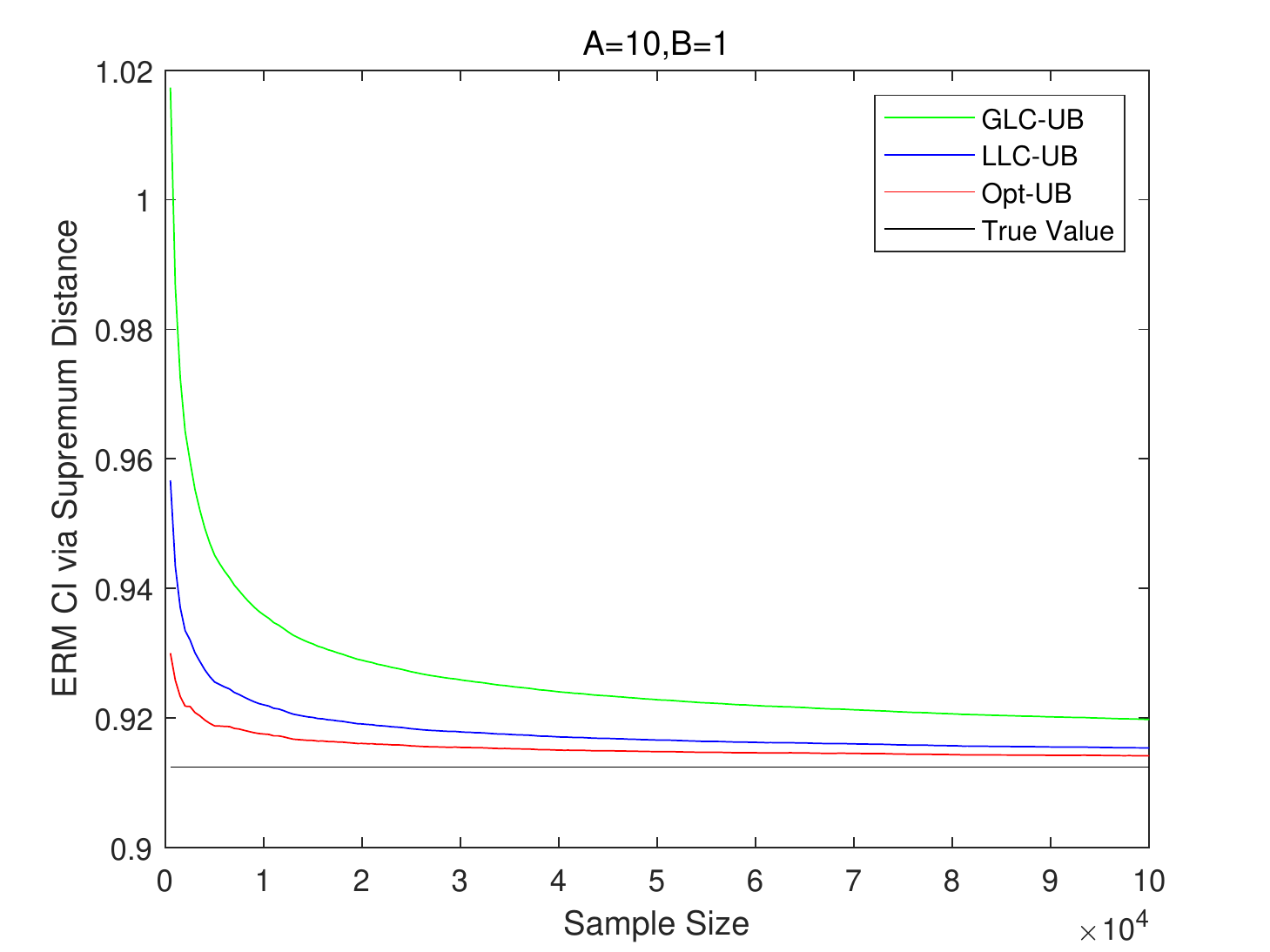}
     \end{subfigure}
     \begin{subfigure}[b]{0.33\textwidth}
         \centering
         \includegraphics[width=\textwidth]{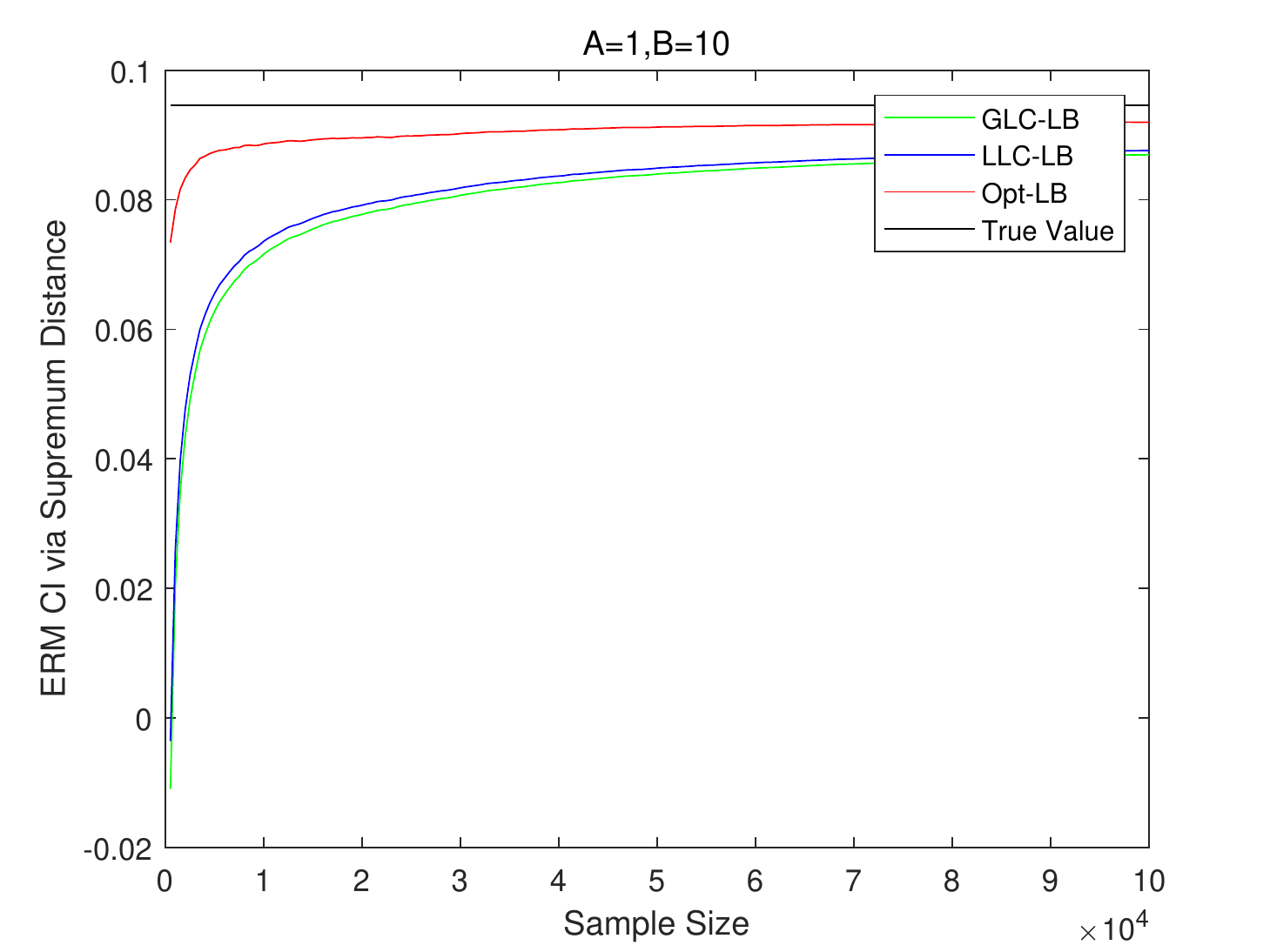}
     \end{subfigure}
     \hfill
     \begin{subfigure}[b]{0.33\textwidth}
         \centering
         \includegraphics[width=\textwidth]{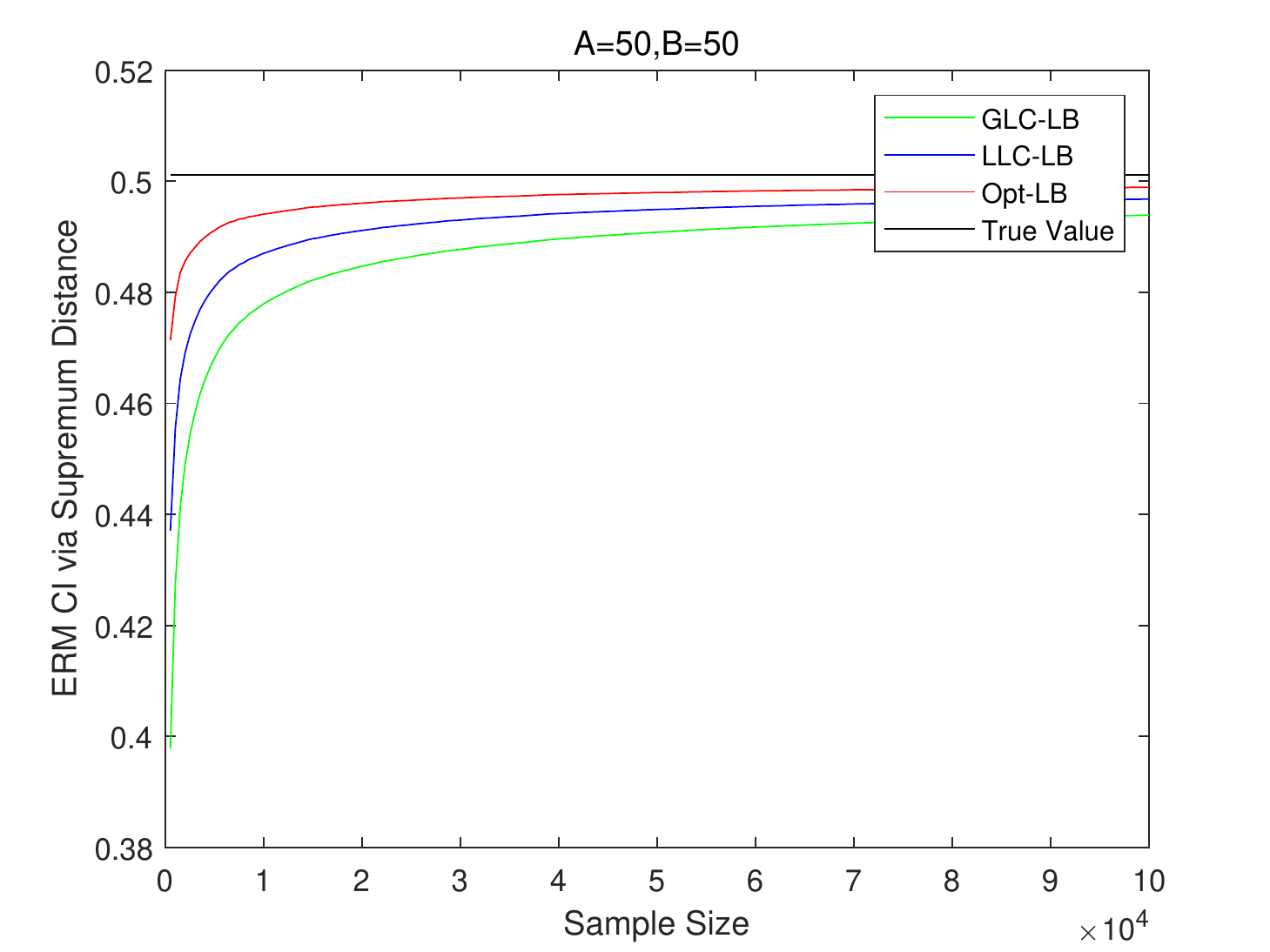}
     \end{subfigure}
     \begin{subfigure}[b]{0.33\textwidth}
         \centering
         \includegraphics[width=\textwidth]{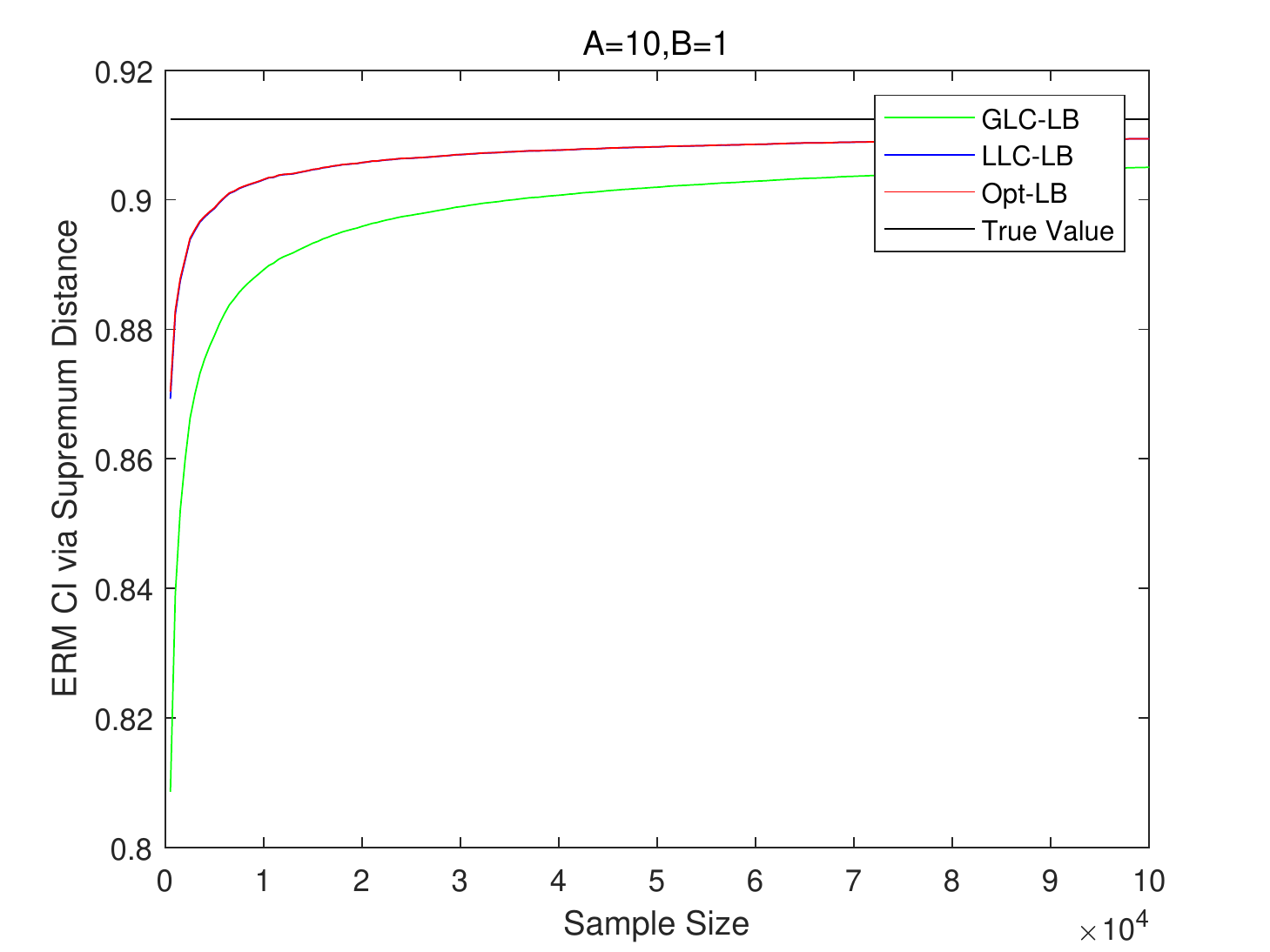}
     \end{subfigure}
        \caption{ERM CI with varying sample size}
        \label{fig:erm_n_sup}
        \vspace{-0ex}
\end{figure*}

\begin{figure*}[ht]
     \centering
     \begin{subfigure}[b]{0.33\textwidth}
         \centering
         \includegraphics[width=\textwidth]{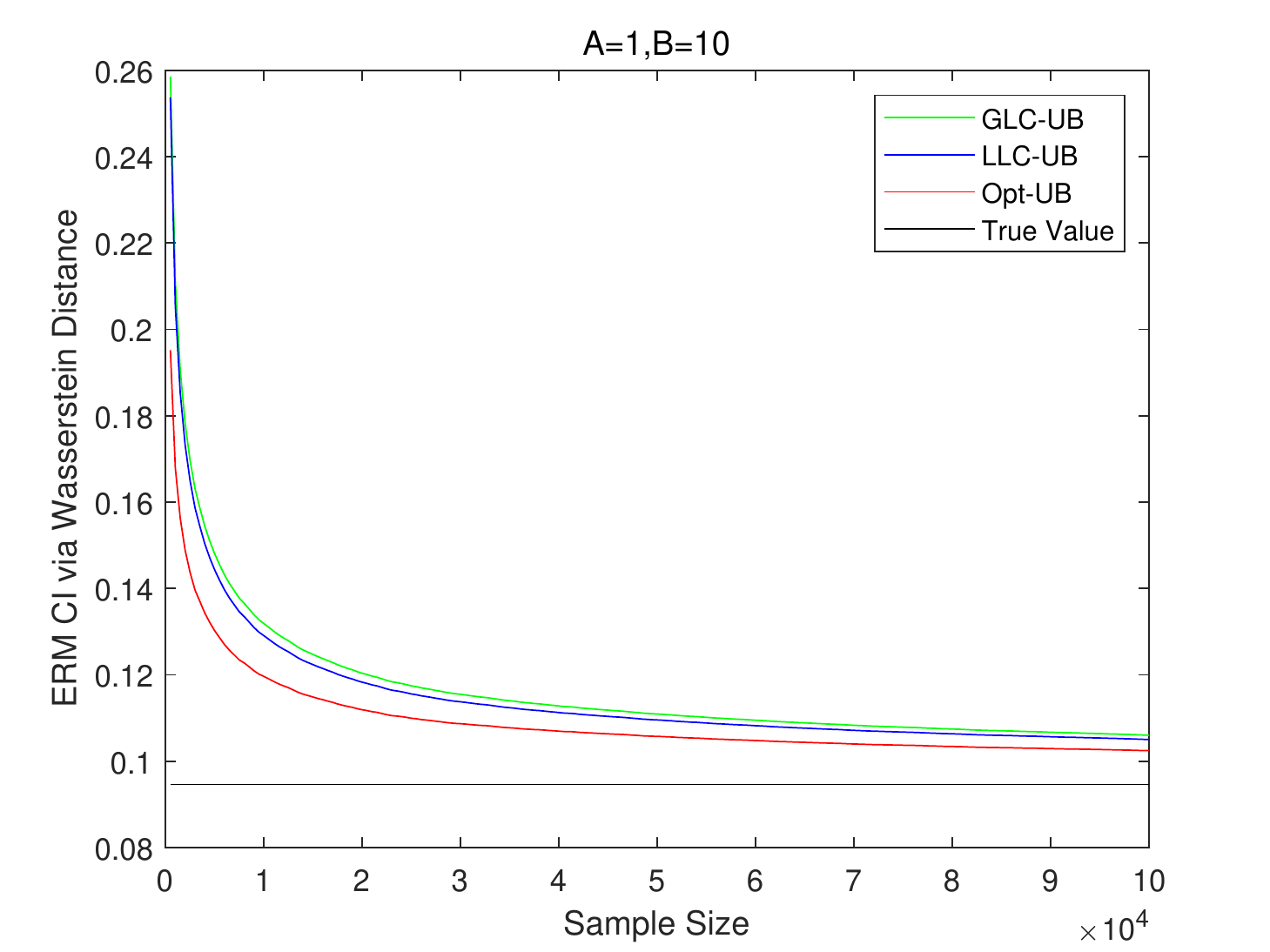}
     \end{subfigure}
     \hfill
     \begin{subfigure}[b]{0.33\textwidth}
         \centering
         \includegraphics[width=\textwidth]{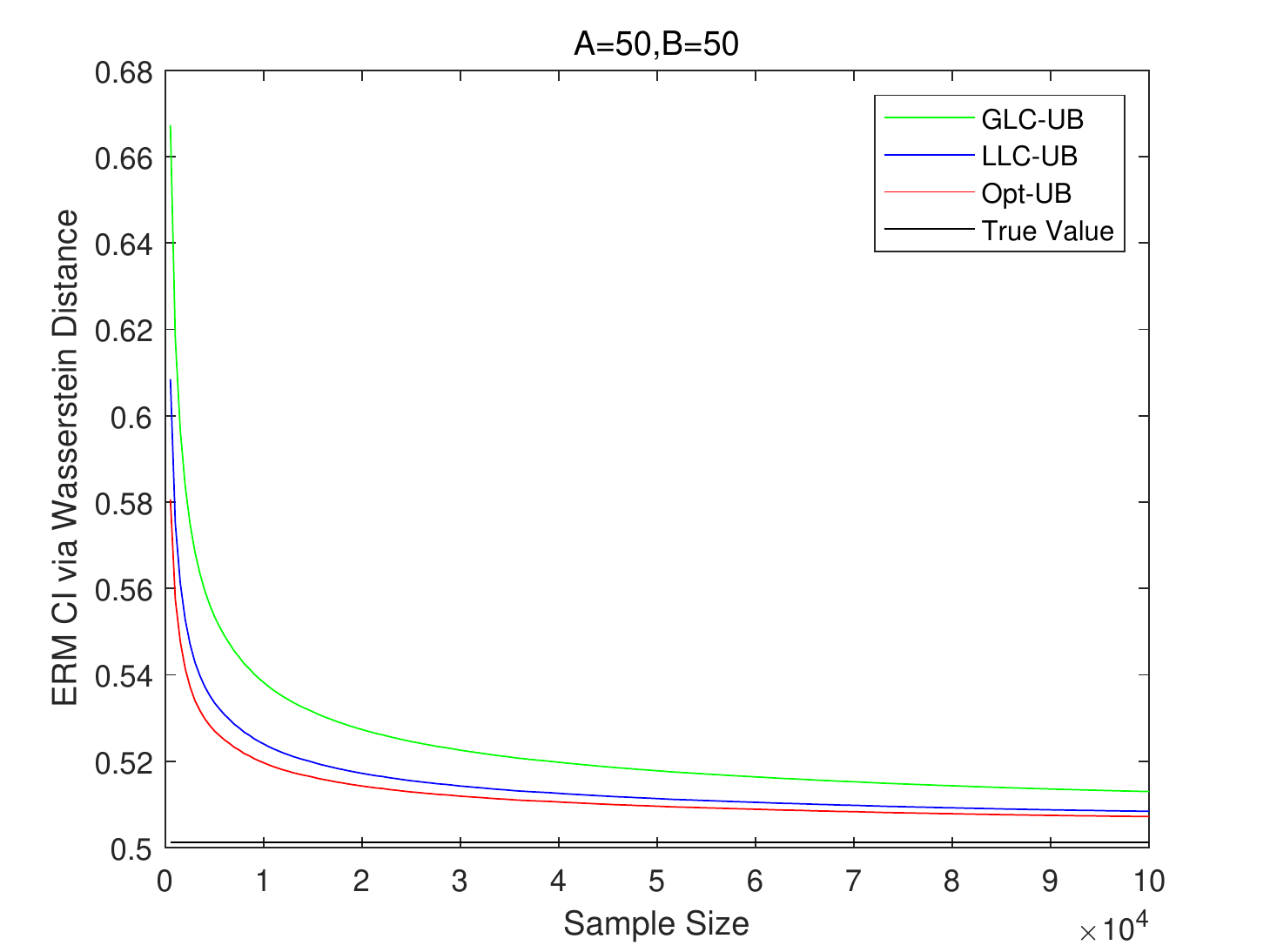}
     \end{subfigure}
     \begin{subfigure}[b]{0.33\textwidth}
         \centering
         \includegraphics[width=\textwidth]{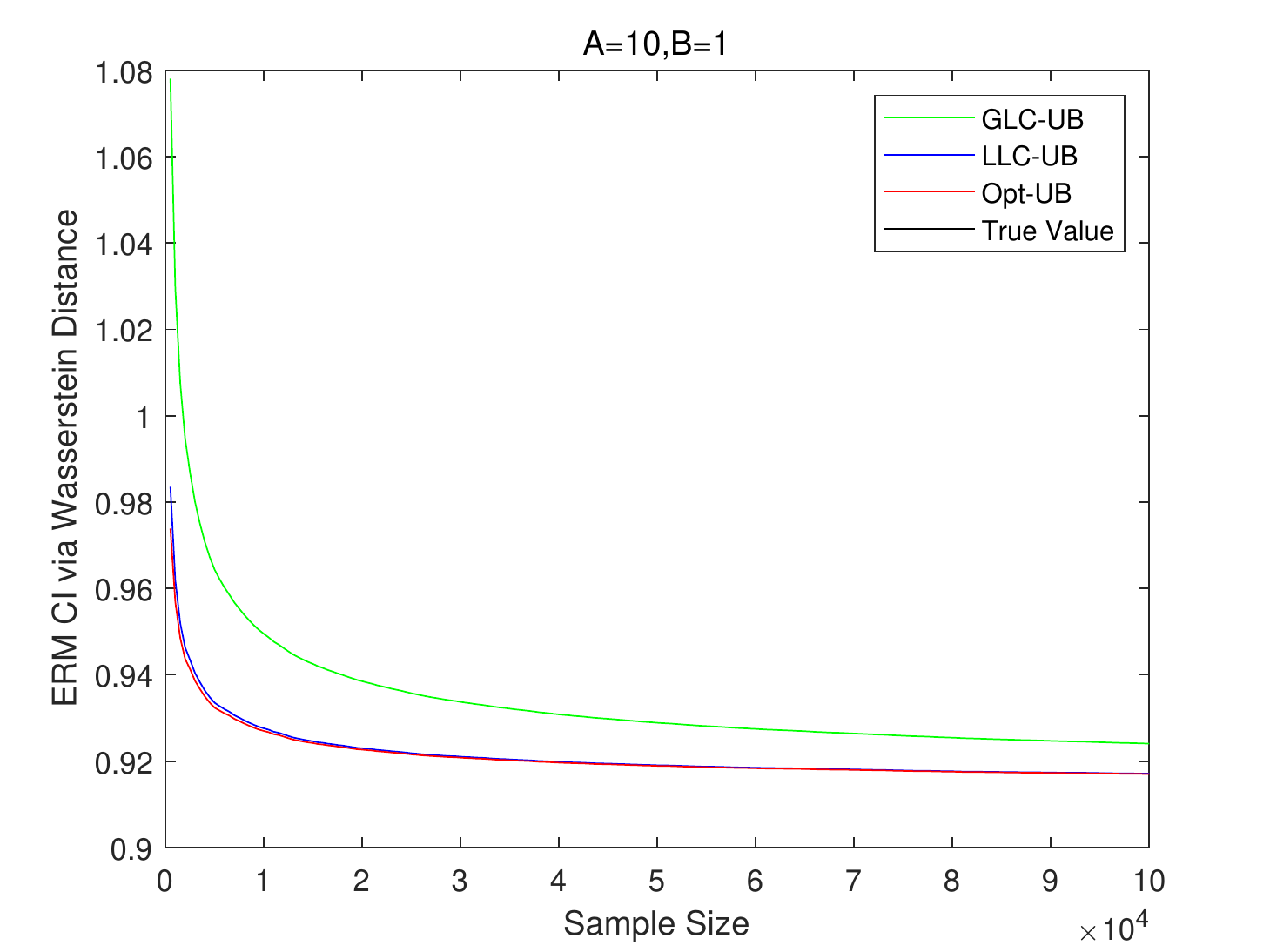}
     \end{subfigure}
     \begin{subfigure}[b]{0.33\textwidth}
         \centering
         \includegraphics[width=\textwidth]{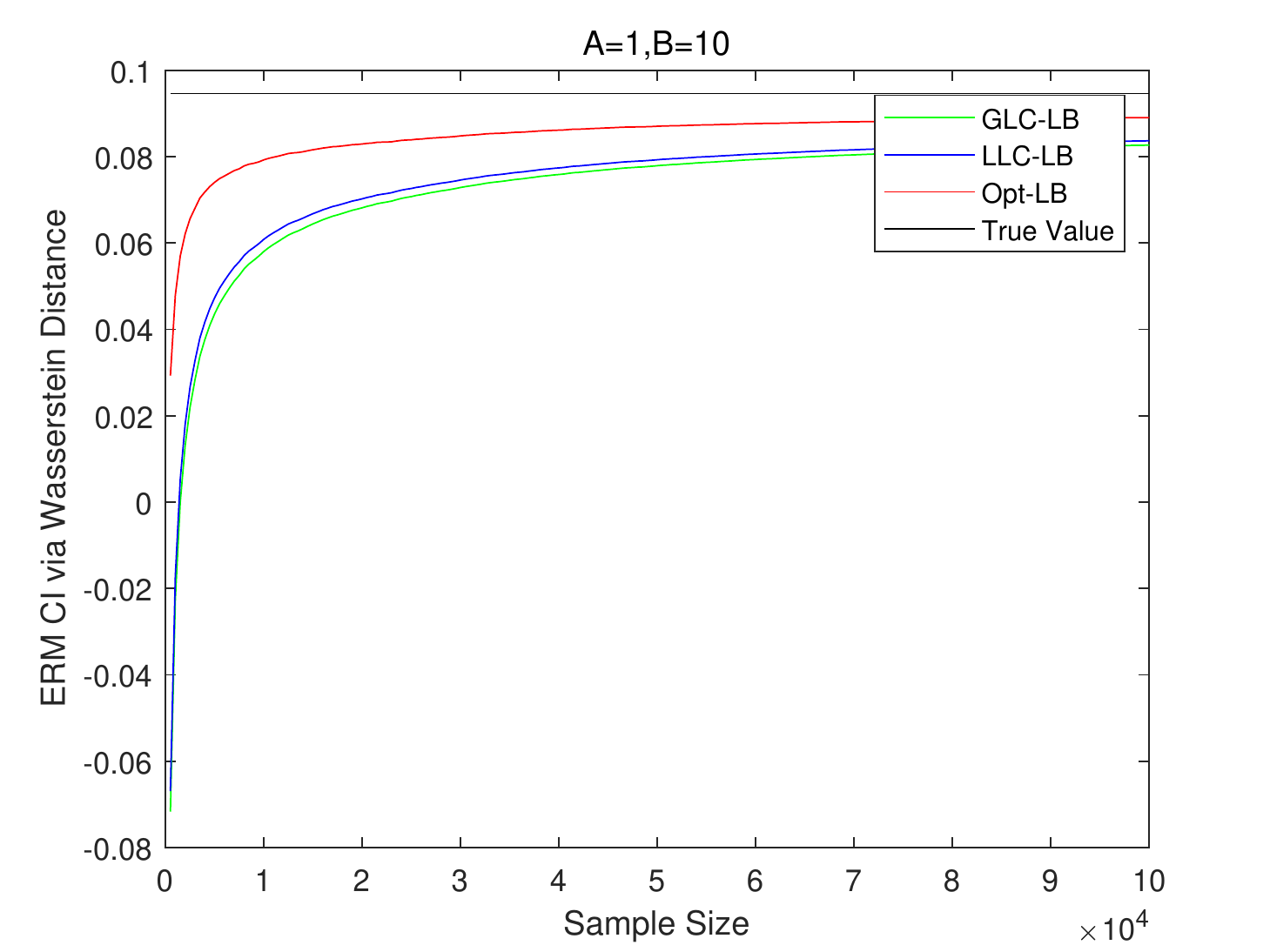}
     \end{subfigure}
     \hfill
     \begin{subfigure}[b]{0.33\textwidth}
         \centering
         \includegraphics[width=\textwidth]{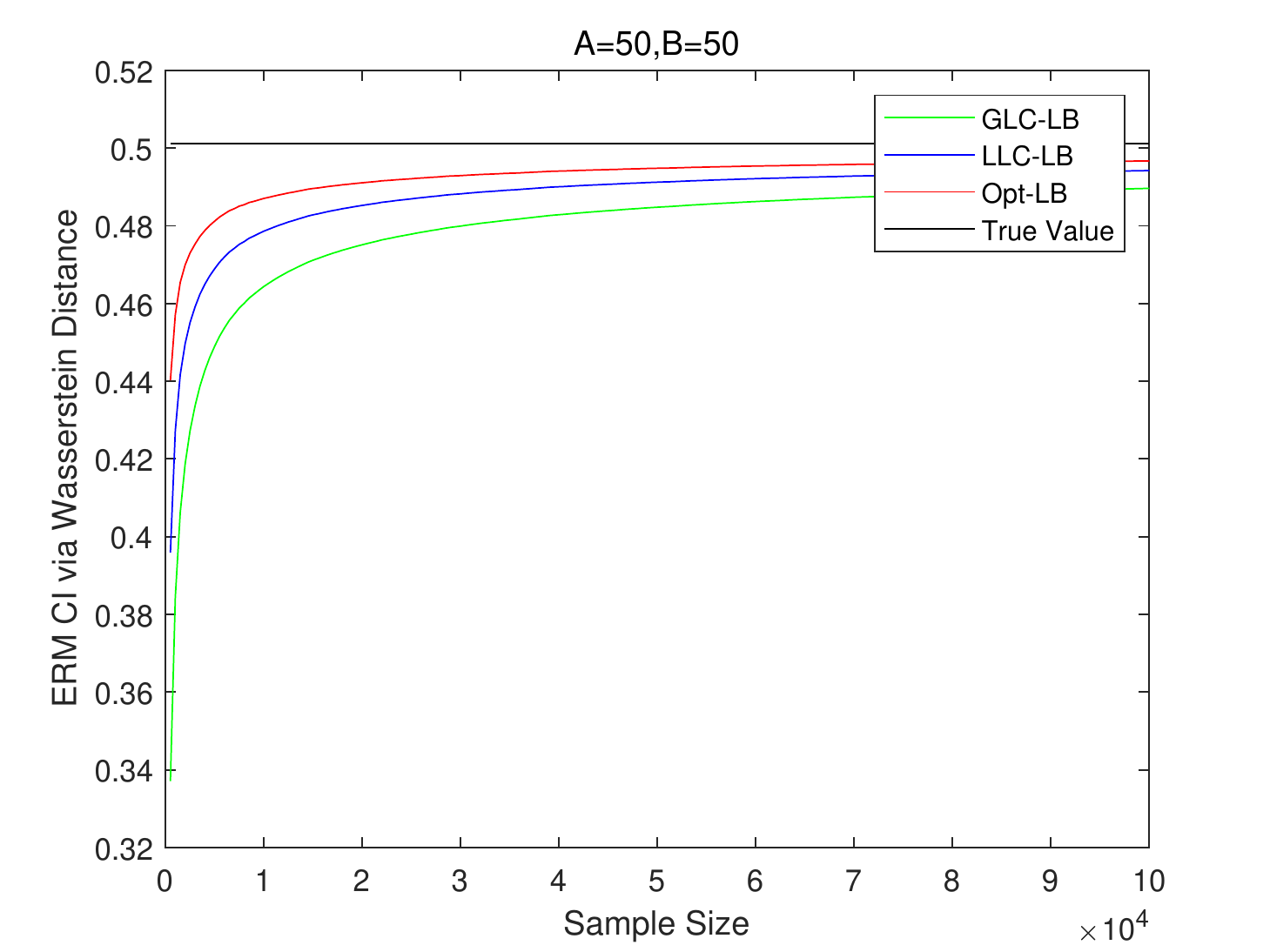}
     \end{subfigure}
     \begin{subfigure}[b]{0.33\textwidth}
         \centering
         \includegraphics[width=\textwidth]{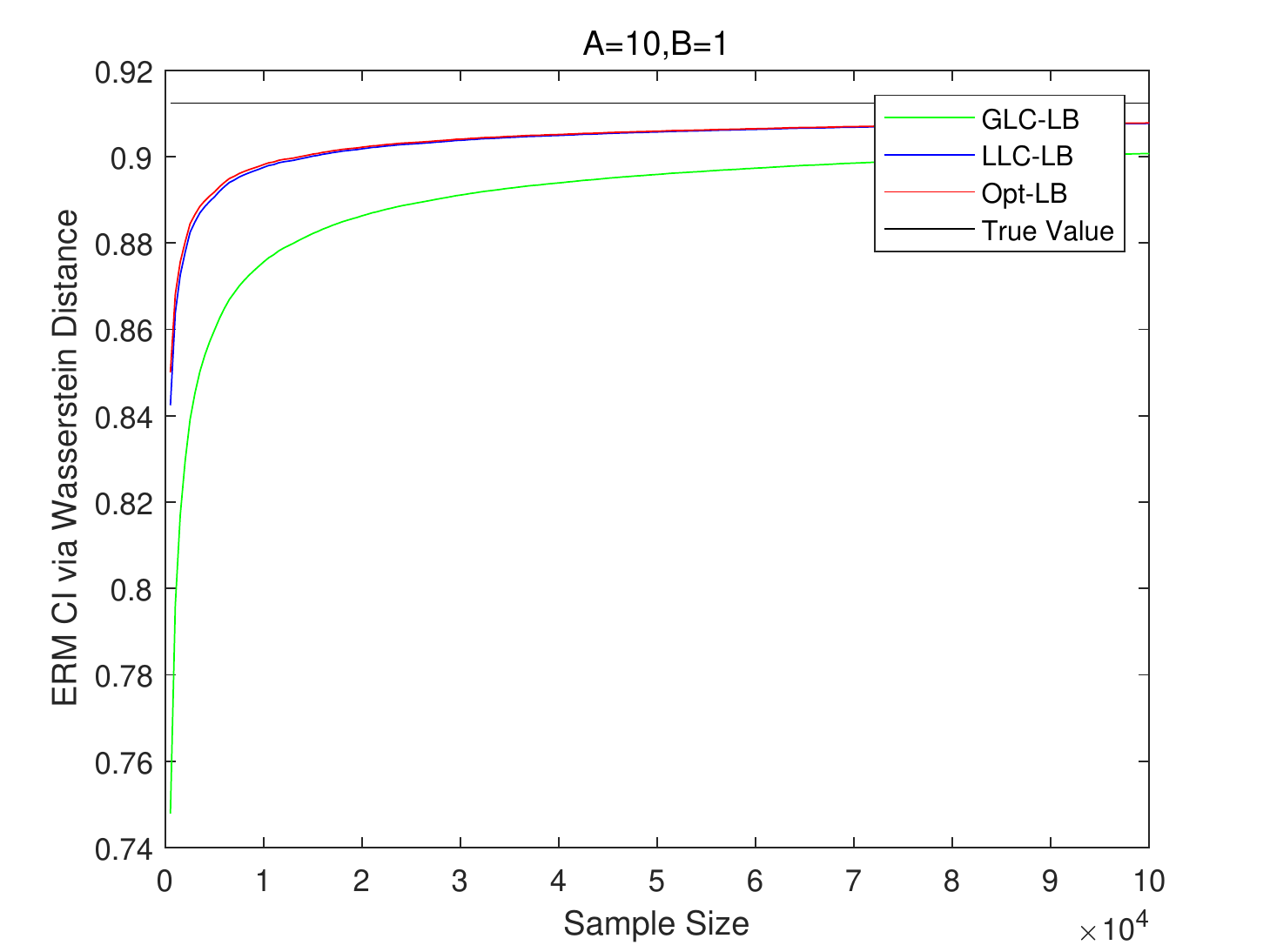}
     \end{subfigure}
        \caption{ERM CI with varying sample size}
        \label{fig:erm_n_was}
        \vspace{-0ex}
\end{figure*}

\begin{figure*}[ht]
     \centering
     \begin{subfigure}[b]{0.33\textwidth}
         \centering
         \includegraphics[width=\textwidth]{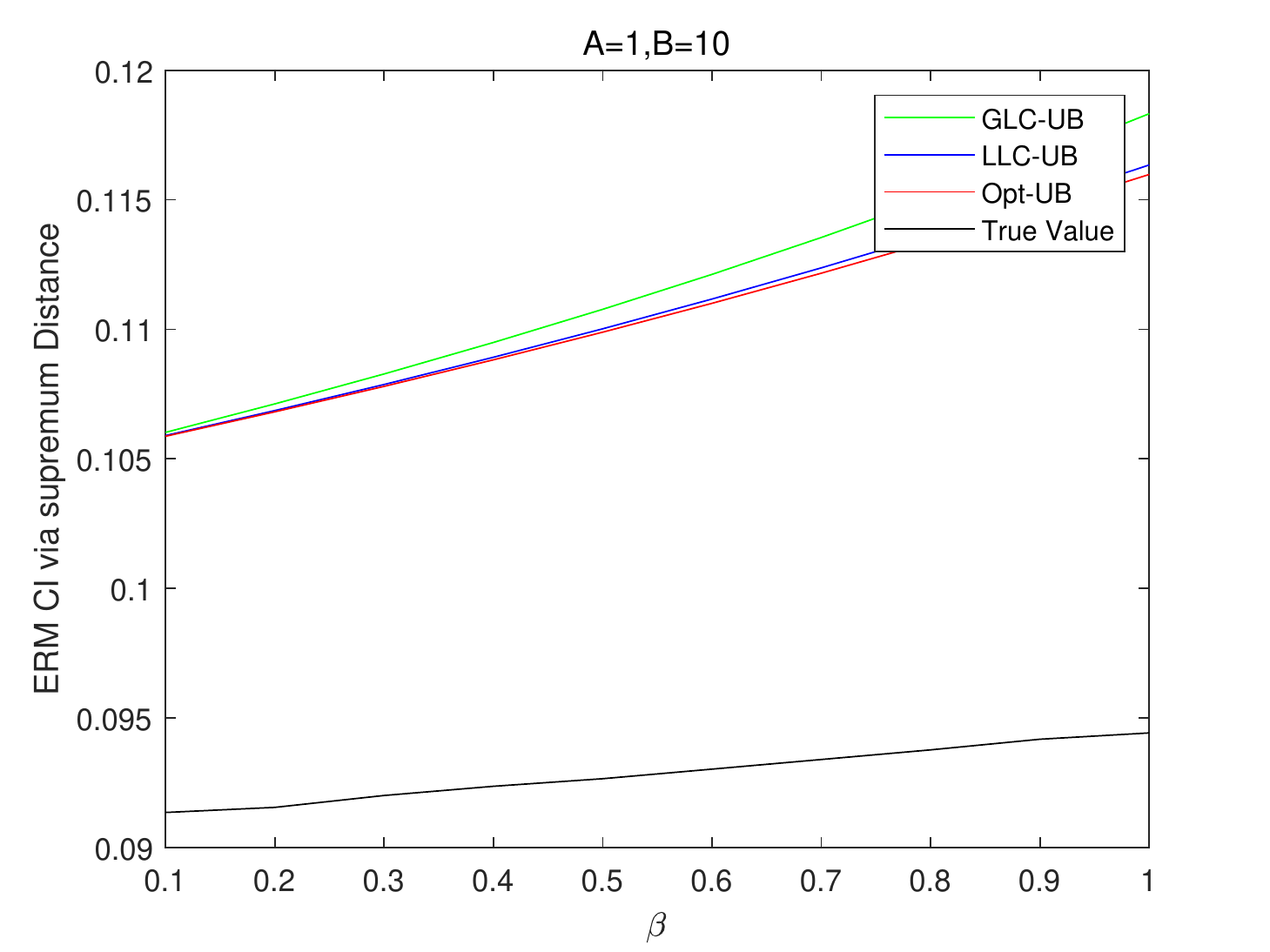}
     \end{subfigure}
     \hfill
     \begin{subfigure}[b]{0.33\textwidth}
         \centering
         \includegraphics[width=\textwidth]{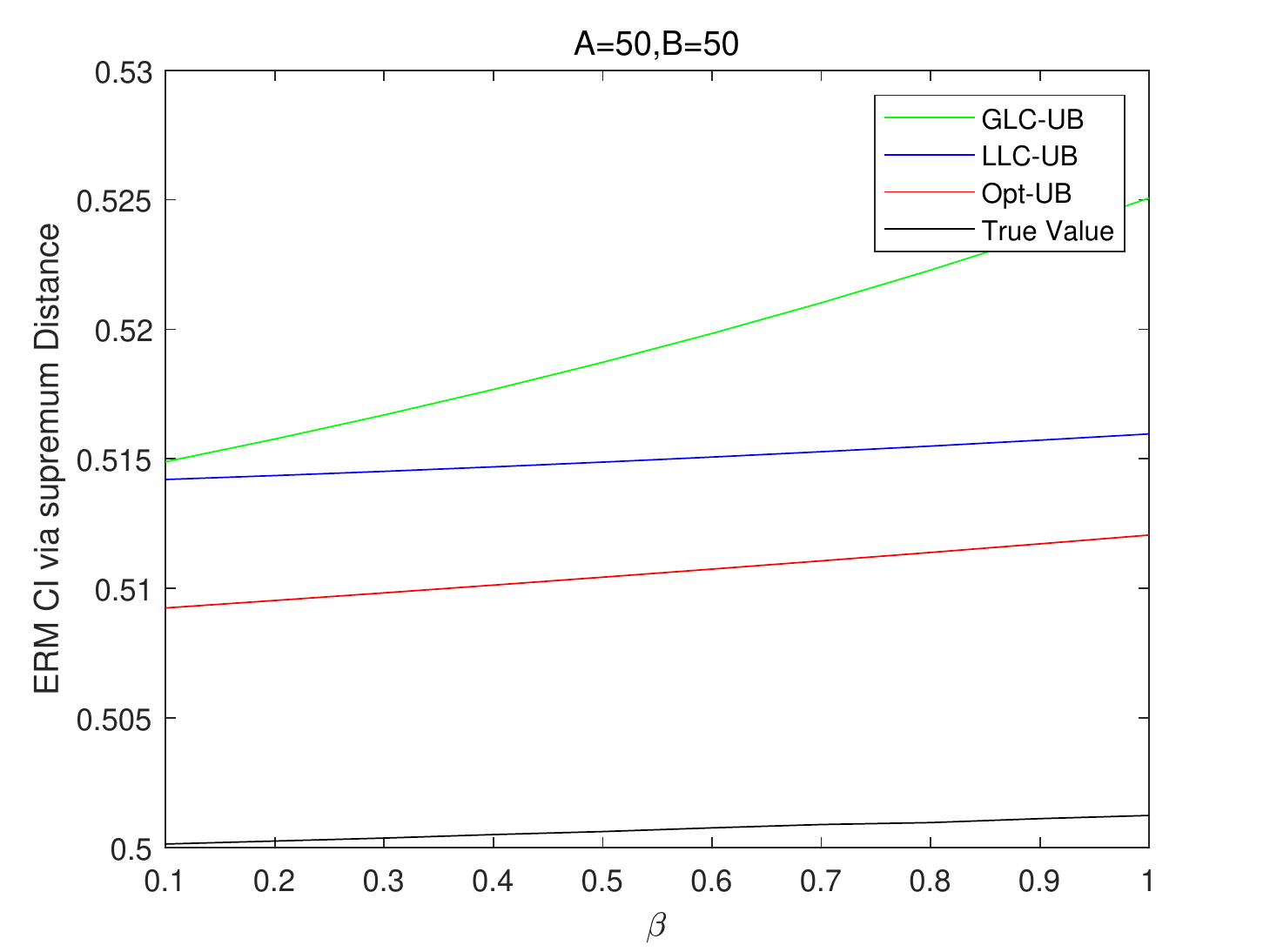}
     \end{subfigure}
     \begin{subfigure}[b]{0.33\textwidth}
         \centering
         \includegraphics[width=\textwidth]{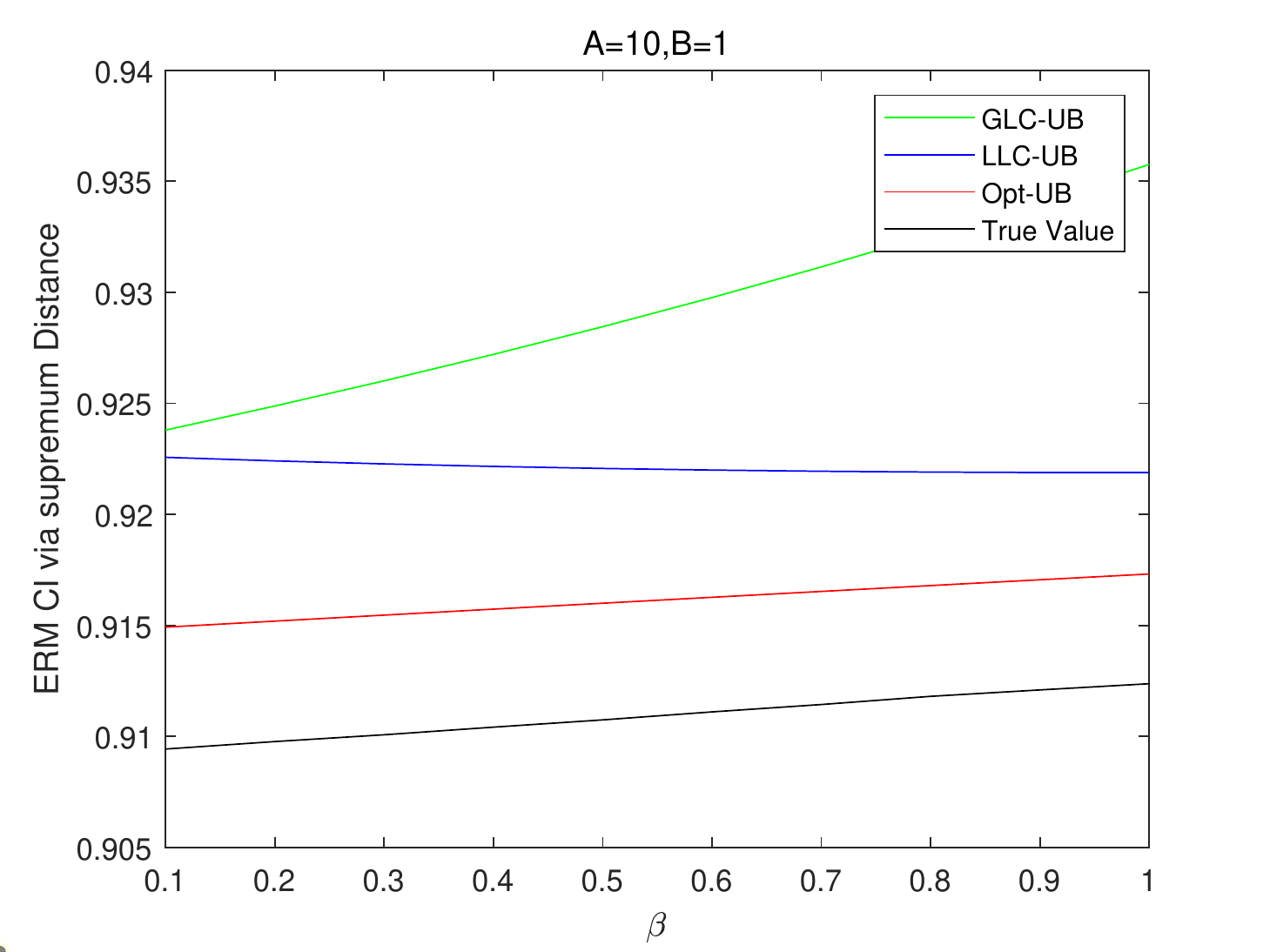}
     \end{subfigure}
     \begin{subfigure}[b]{0.33\textwidth}
         \centering
         \includegraphics[width=\textwidth]{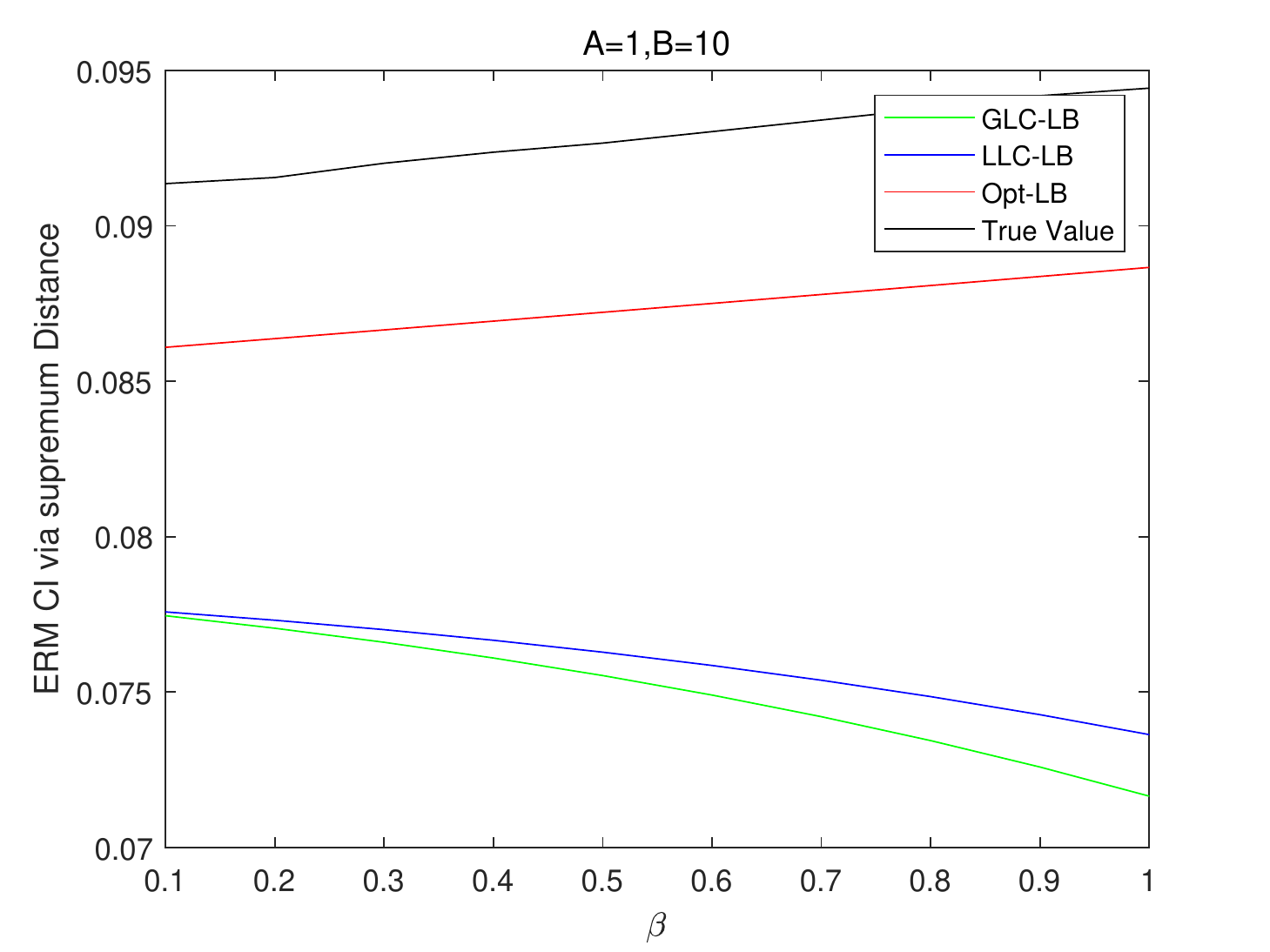}
     \end{subfigure}
     \hfill
     \begin{subfigure}[b]{0.33\textwidth}
         \centering
         \includegraphics[width=\textwidth]{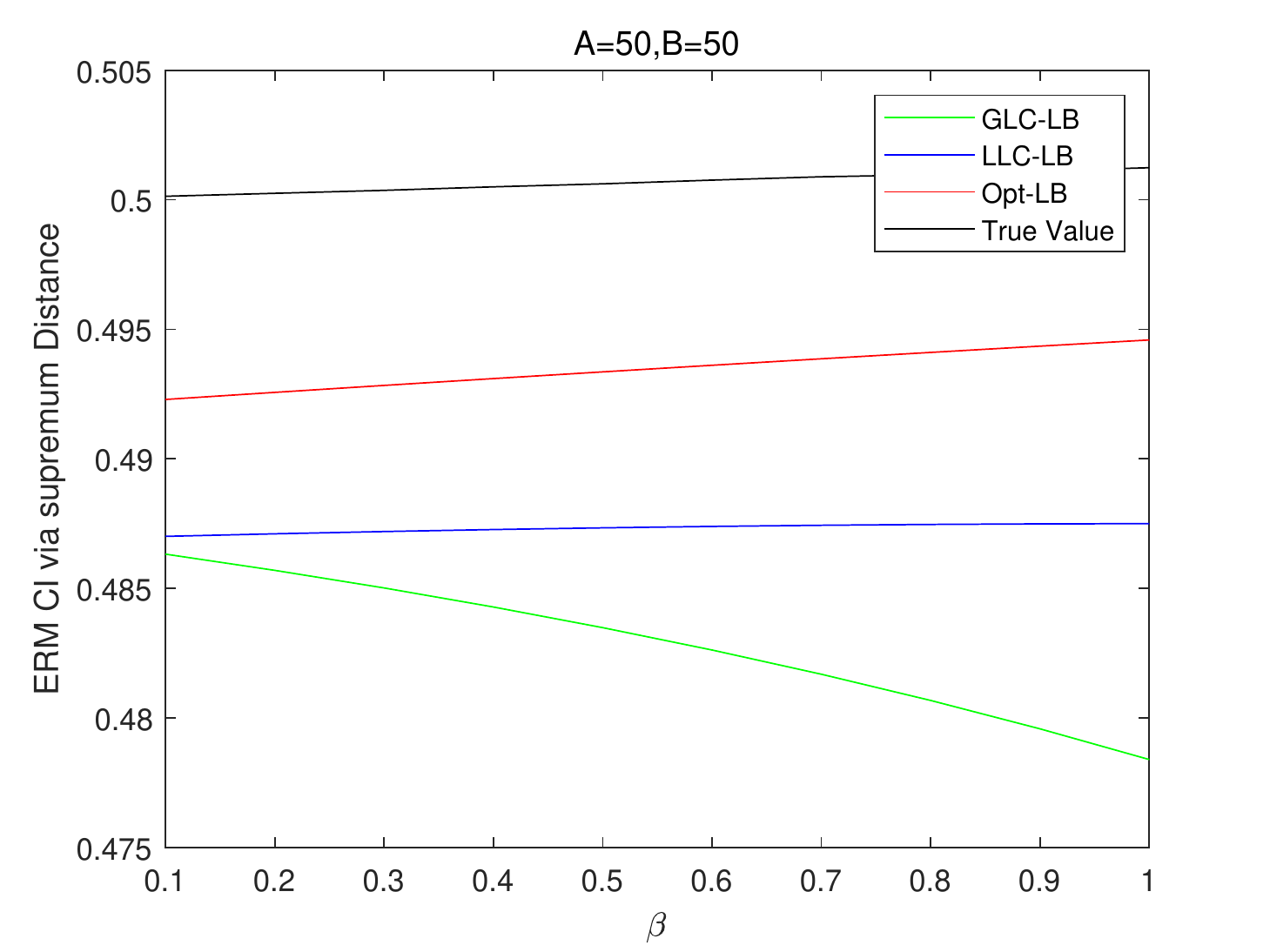}
     \end{subfigure}
     \begin{subfigure}[b]{0.33\textwidth}
         \centering
         \includegraphics[width=\textwidth]{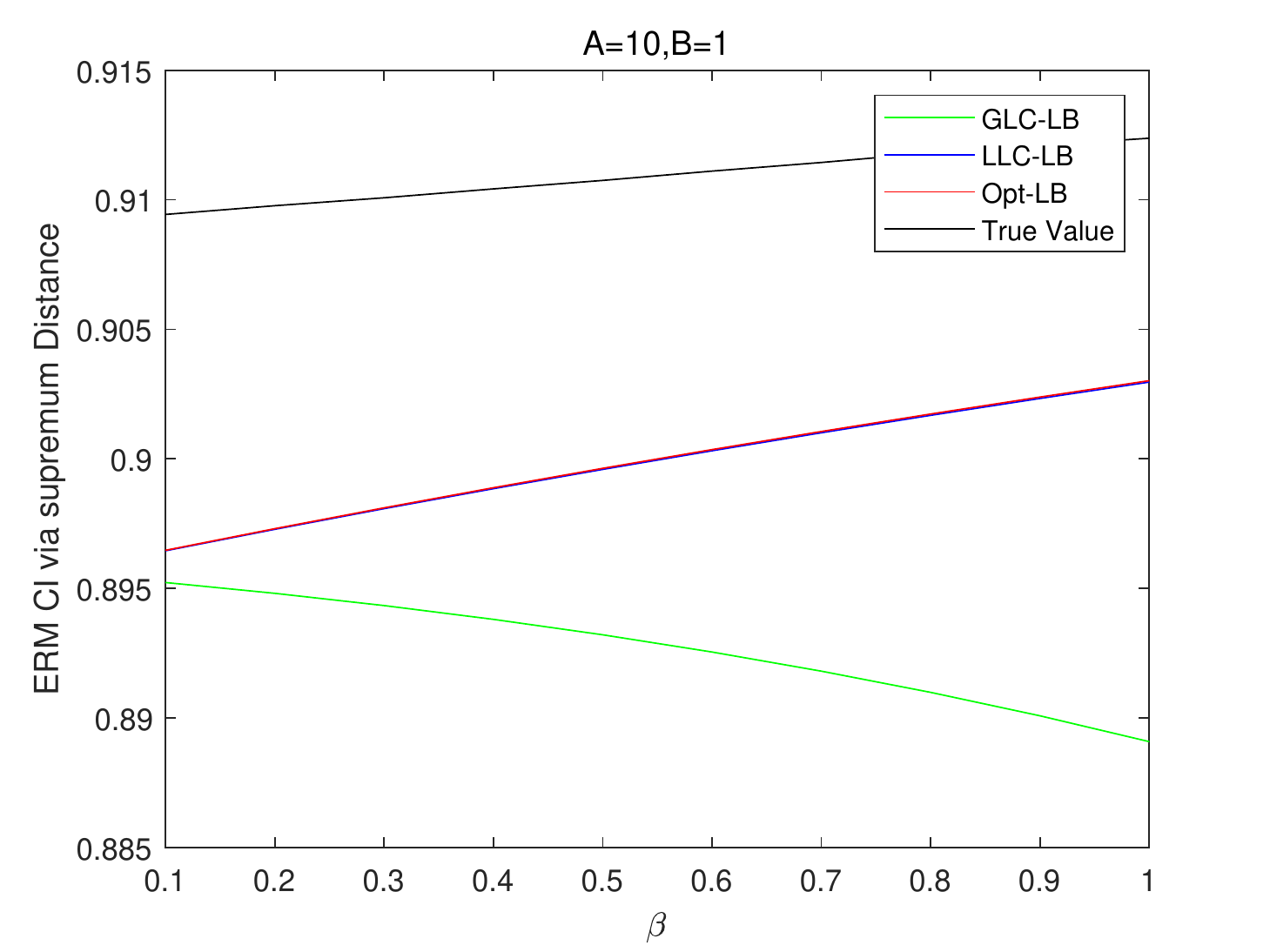}
     \end{subfigure}
        \caption{ERM CI with varying $\beta$}
        \label{fig:erm_be_sup}
        \vspace{-0ex}
\end{figure*}

\begin{figure*}[ht]
     \centering
     \begin{subfigure}[b]{0.33\textwidth}
         \centering
         \includegraphics[width=\textwidth]{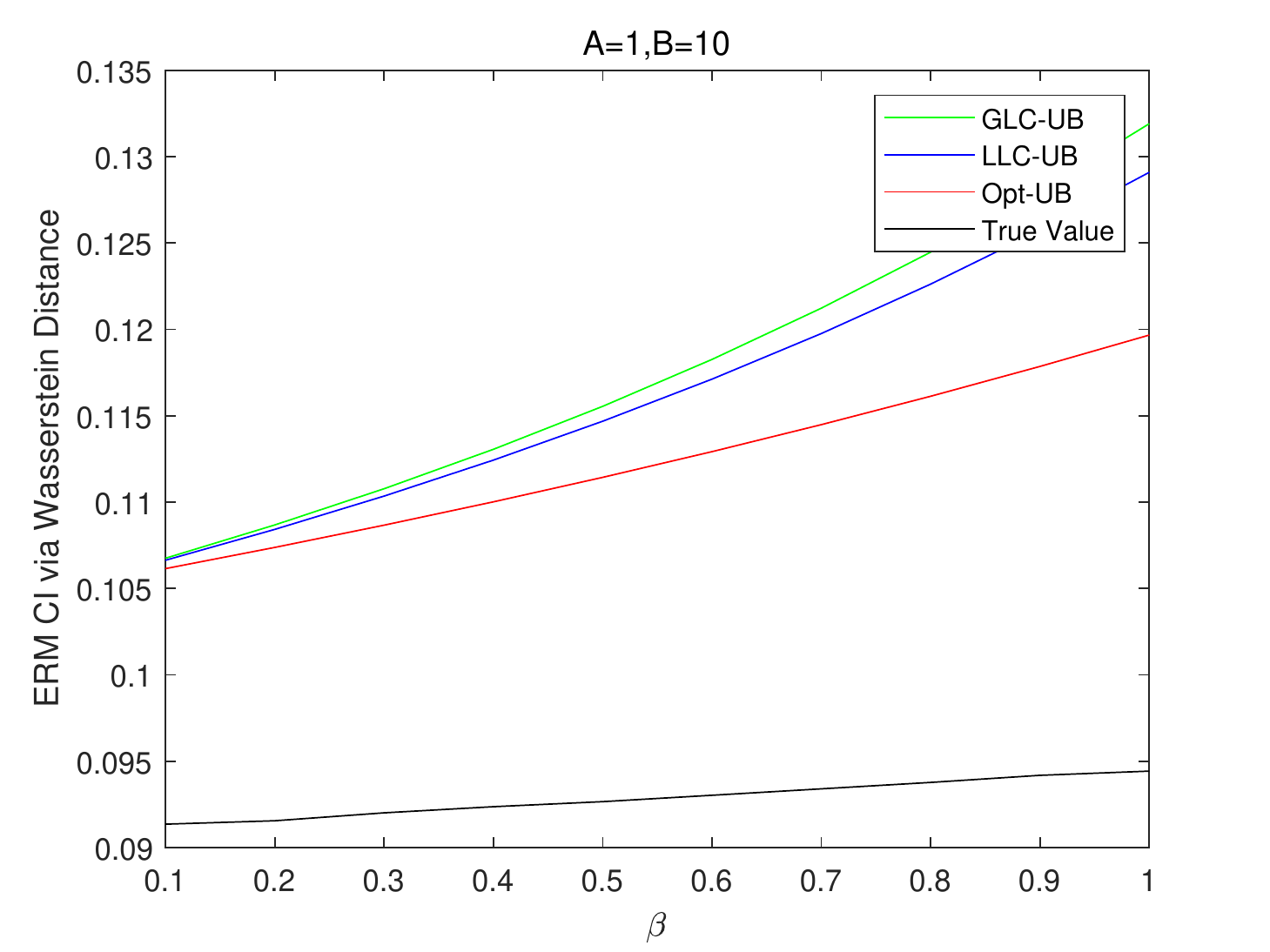}
     \end{subfigure}
     \hfill
     \begin{subfigure}[b]{0.33\textwidth}
         \centering
         \includegraphics[width=\textwidth]{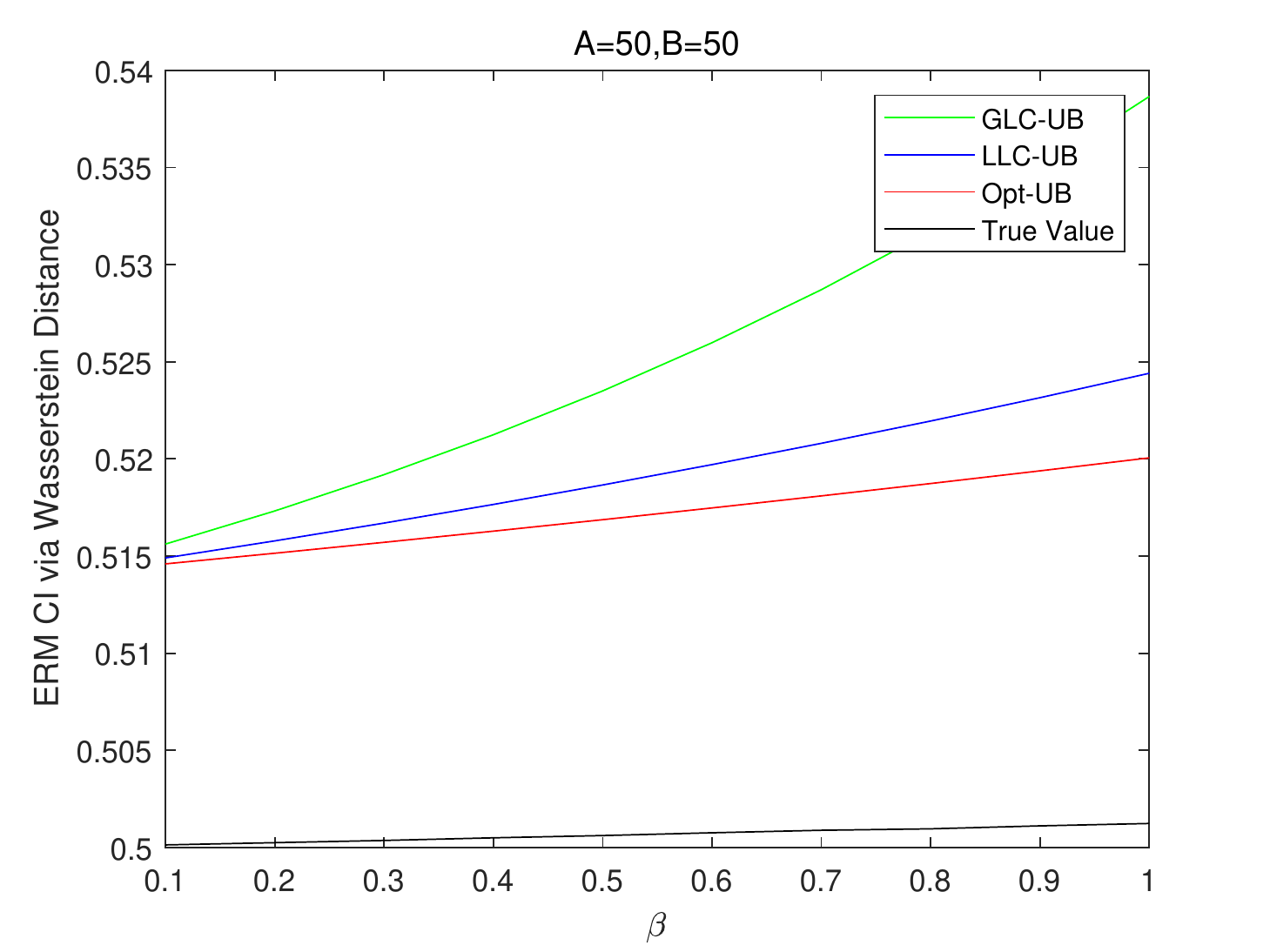}
     \end{subfigure}
     \begin{subfigure}[b]{0.33\textwidth}
         \centering
         \includegraphics[width=\textwidth]{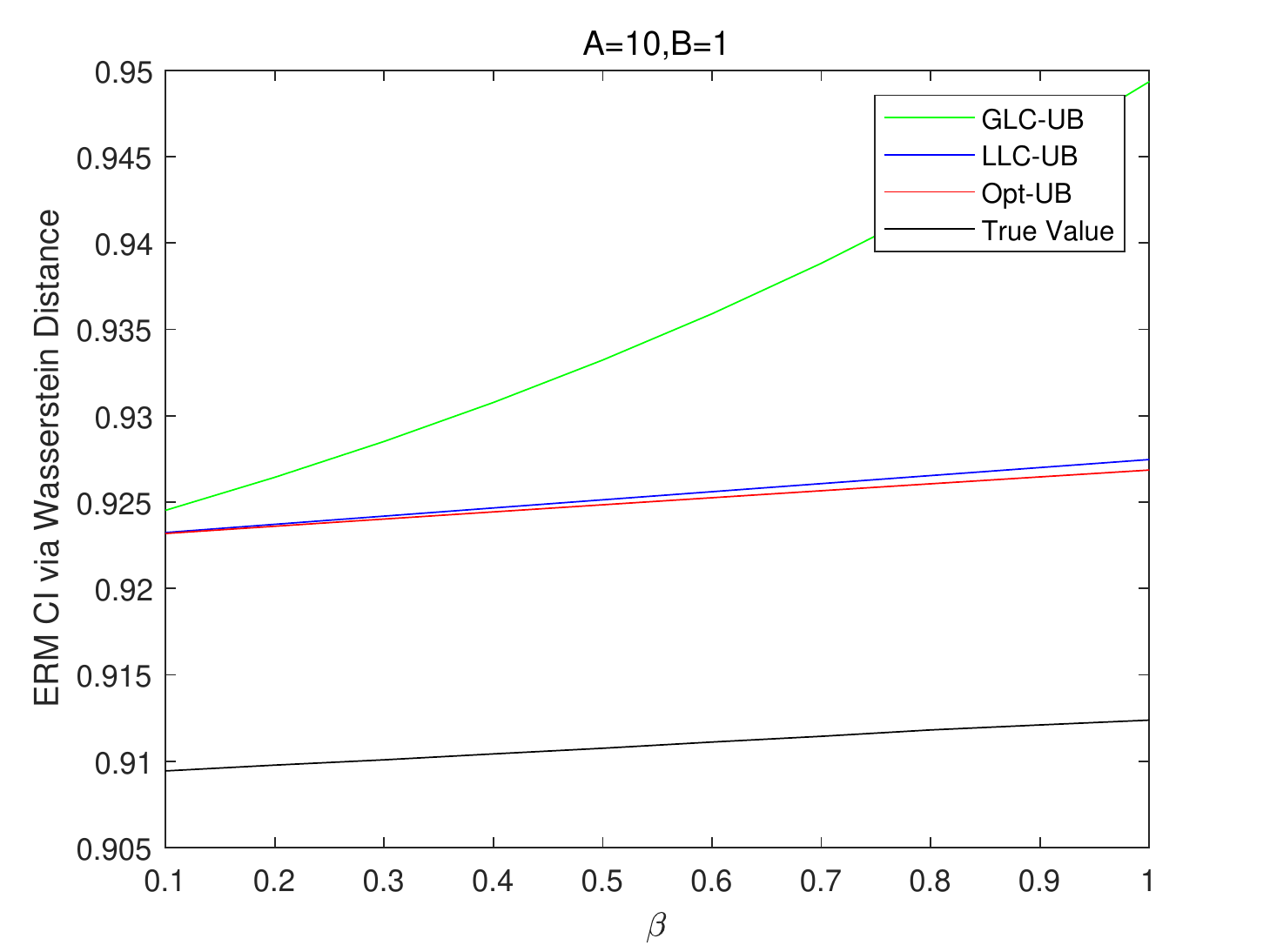}
     \end{subfigure}
     \begin{subfigure}[b]{0.33\textwidth}
         \centering
         \includegraphics[width=\textwidth]{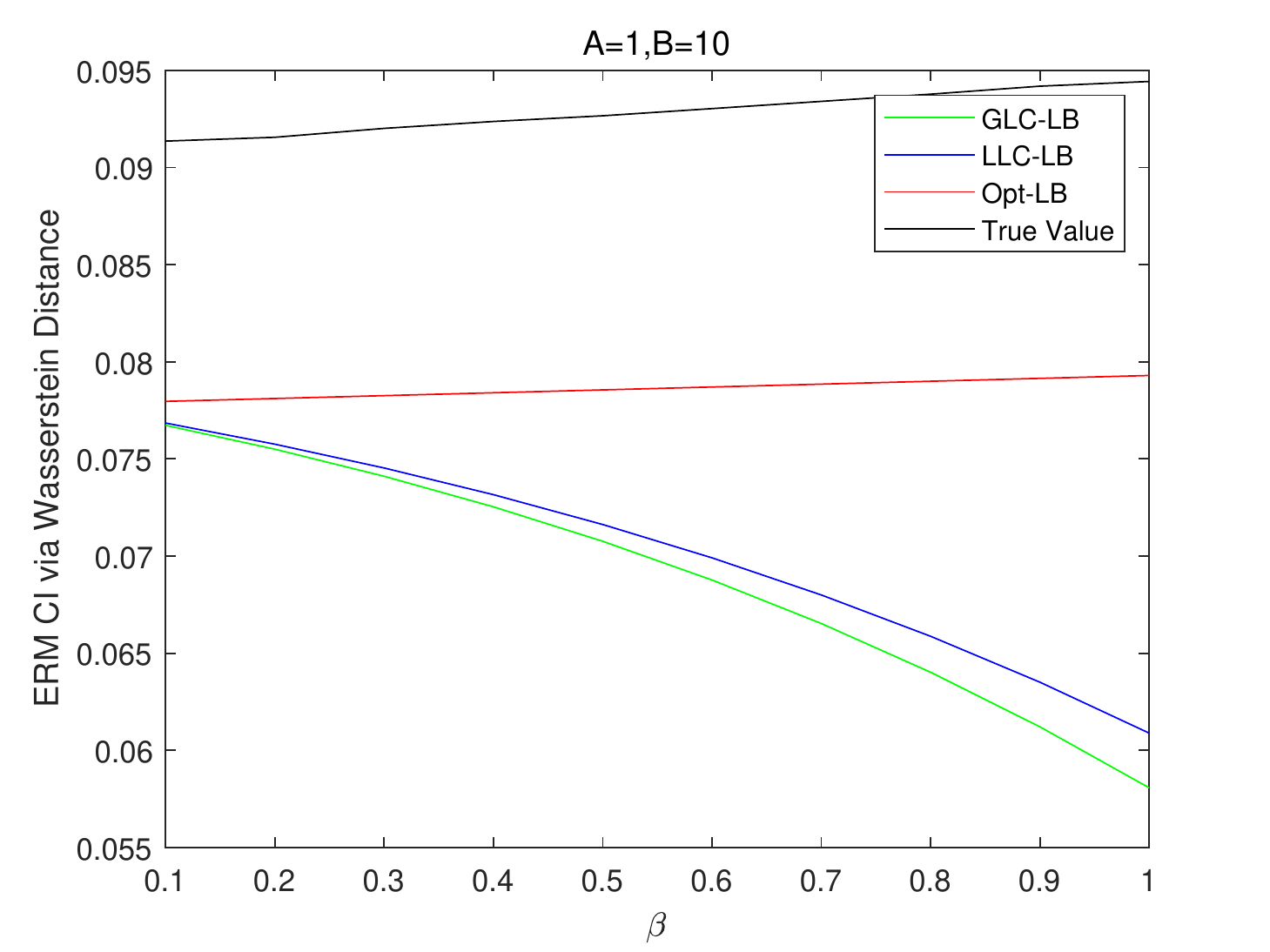}
     \end{subfigure}
     \hfill
     \begin{subfigure}[b]{0.33\textwidth}
         \centering
         \includegraphics[width=\textwidth]{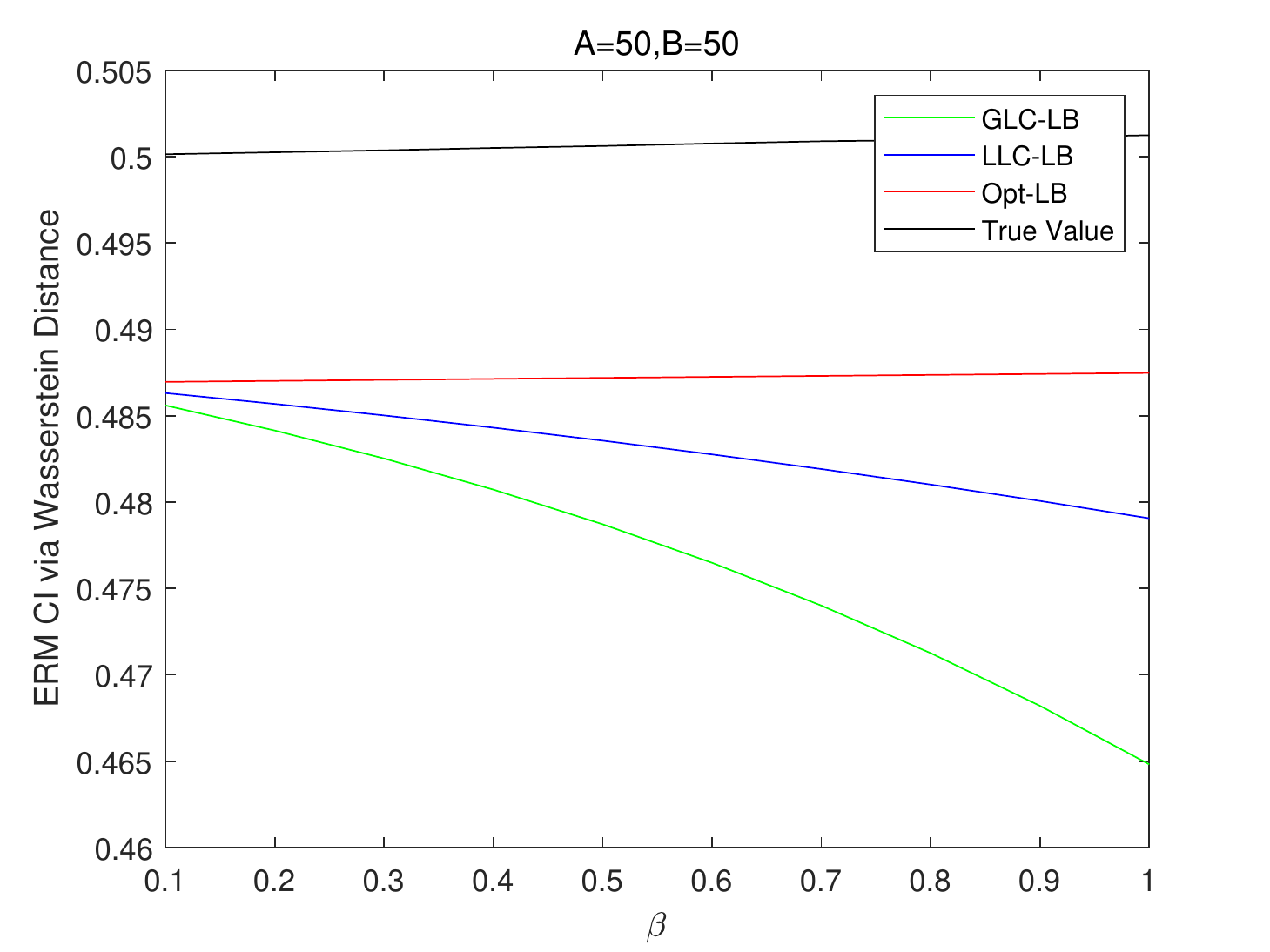}
     \end{subfigure}
     \begin{subfigure}[b]{0.33\textwidth}
         \centering
         \includegraphics[width=\textwidth]{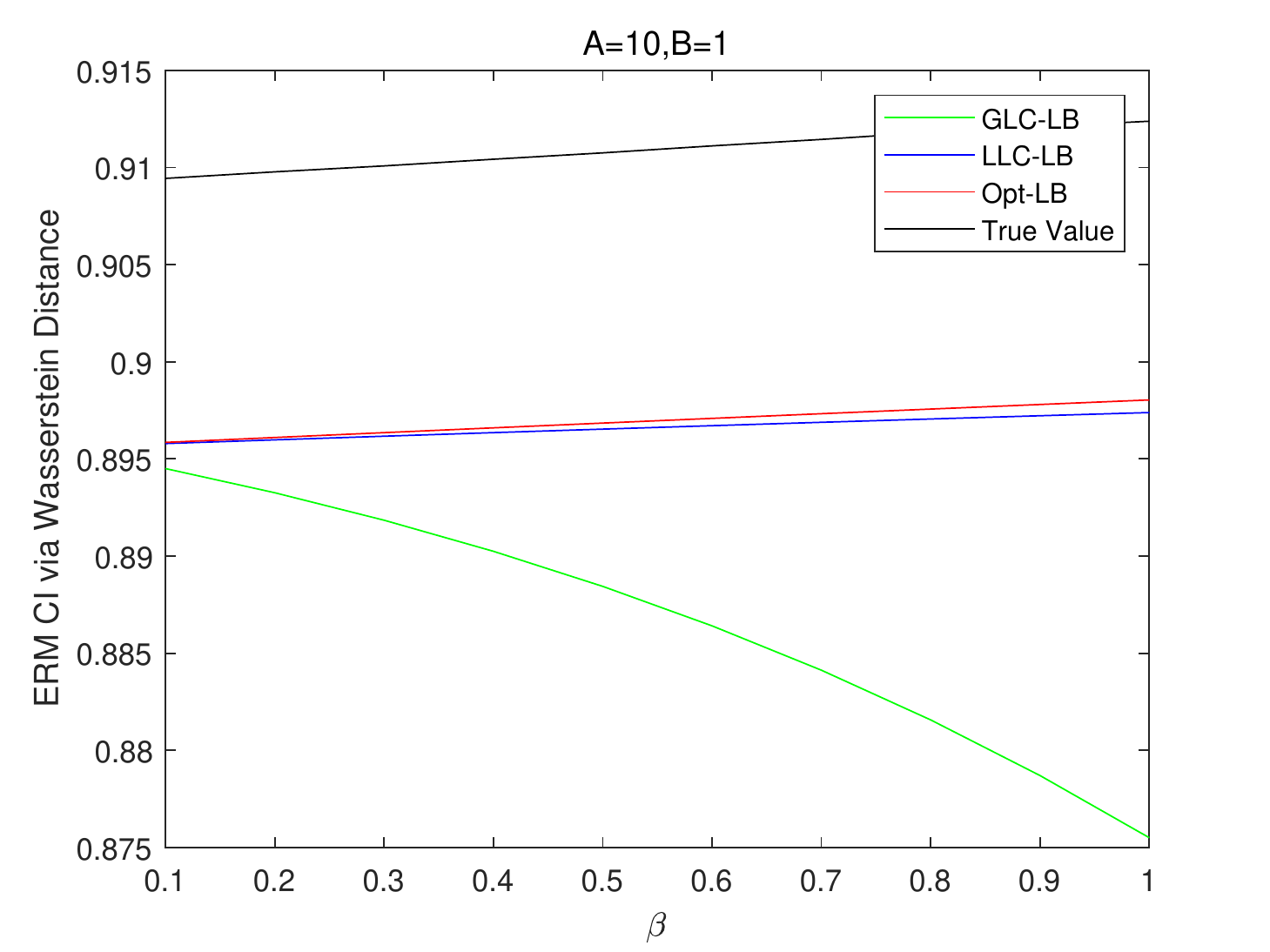}
     \end{subfigure}
        \caption{ERM CI with varying $\beta$}
        \label{fig:erm_be_was}
        \vspace{-0ex}
\end{figure*}
\subsection{CVaR Bandit}
We adopt the same bandit instances as \citet{tamkin2019distributionally}. The parameters of these distributions are given in Table 1 in \citet{tamkin2019distributionally}. The left, middle, and right part of Figure 14 plots the results for easy bandit instance with $\alpha=0.25$, hard bandit instance with $\alpha=0.25$, and hard bandit instance with $\alpha=0.05$. As expected, \texttt{CVaR-UCB} consistently outperforms \texttt{LLC-UCB} and \texttt{GLC-UCB}.
\begin{figure*}[ht]
     \centering
     \begin{subfigure}[b]{0.33\textwidth}
         \centering
         \includegraphics[width=\textwidth]{bandit_easy_0.25.pdf}
     \end{subfigure}
     \hfill
     \begin{subfigure}[b]{0.33\textwidth}
         \centering
         \includegraphics[width=\textwidth]{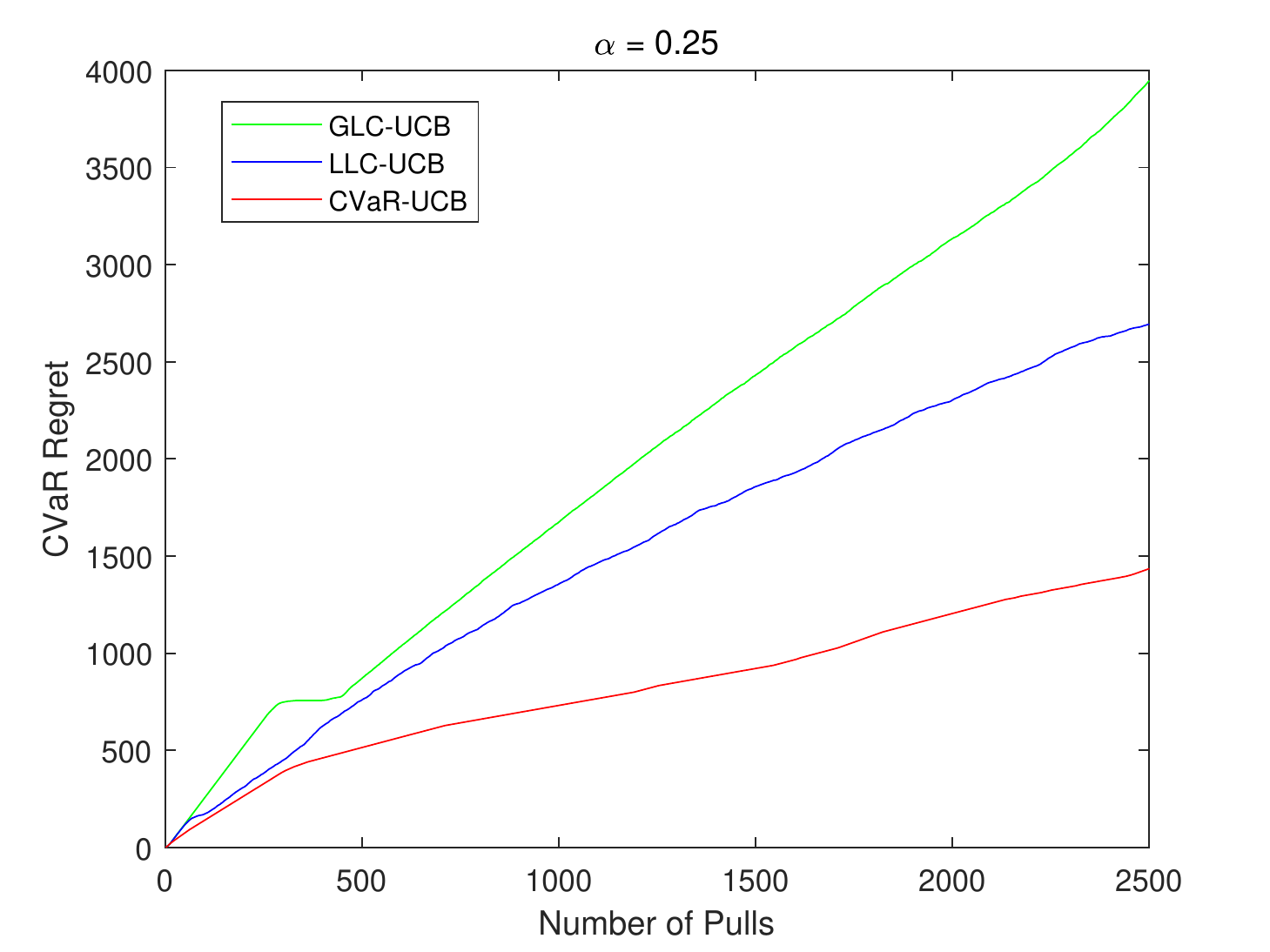}
     \end{subfigure}
     \begin{subfigure}[b]{0.33\textwidth}
         \centering
         \includegraphics[width=\textwidth]{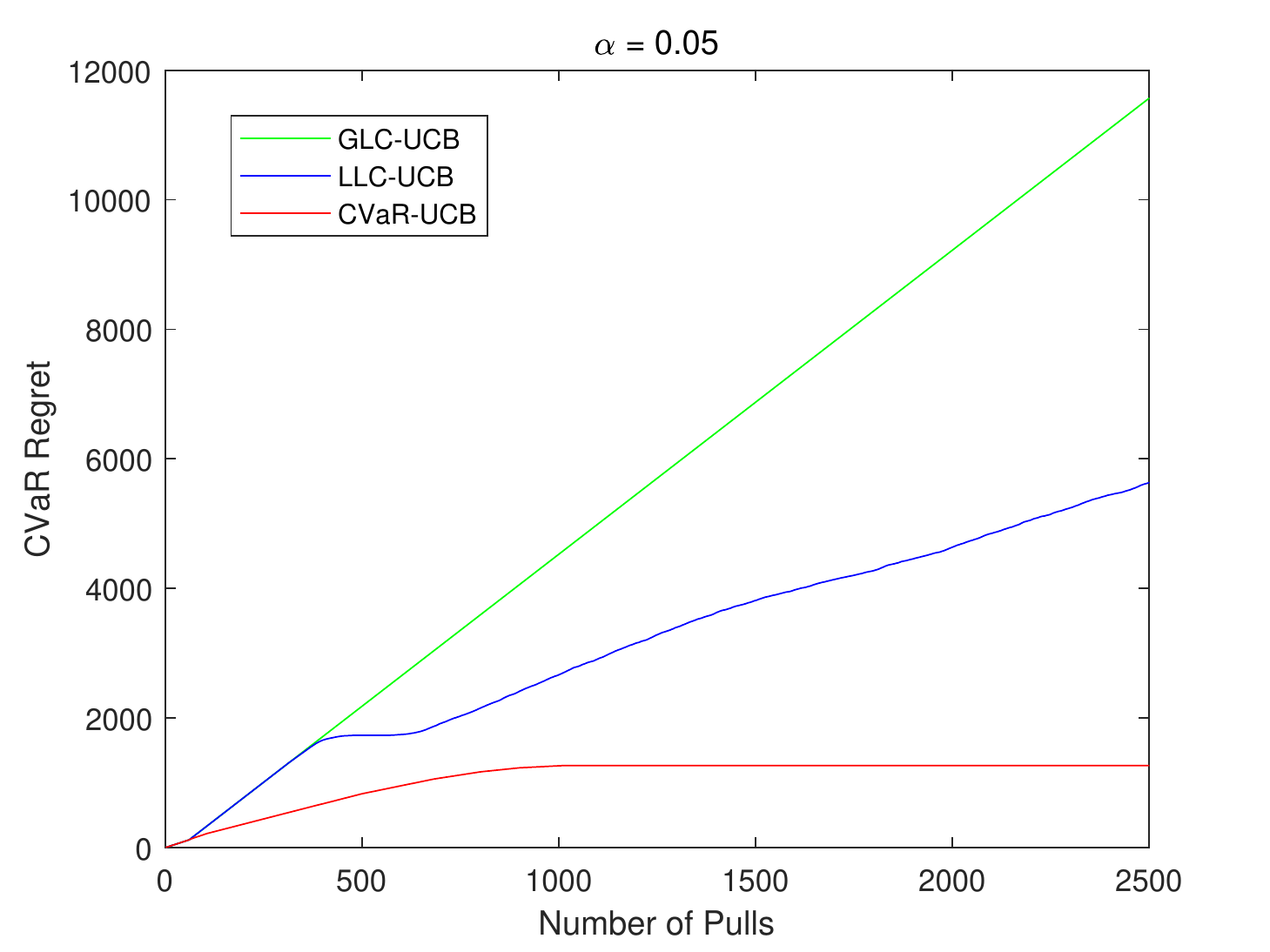}
     \end{subfigure}
        \caption{CVaR bandit}
        \label{fig:cvar_bandit}
        \vspace{-0ex}
\end{figure*}

\end{document}